\newtheorem{theorem}{Theorem}
\newtheorem{lemma}{Lemma}
\newtheorem{proposition}{Proposition}
\newtheorem{corollary}{Corollary}
\newtheorem{definition}{Definition}
\newtheorem*{theorem*}{Theorem}
\newtheorem*{lemma*}{Lemma}
\newtheorem*{proposition*}{Proposition}
\newtheorem*{corollary*}{Corollary}
\title{On the Approximation Capabilities of ReLU Neural Networks and Random
ReLU Features}
\author{
  Yitong Sun \\
  Department of Mathematics \\
  University of Michigan \\
  Ann Arbor, MI, 48109\\
  \texttt{syitong@umich.edu} \\
  \and
  Anna Gilbert\\
  Department of Mathematics\\
  University of Michigan\\
  \texttt{annacg@umich.edu} \\
  \and
  Ambuj Tewari \\
  Department of Statistics\\
  University of Michigan\\
  \texttt{tewaria@umich.edu} \\
}
\begin{document}
    \maketitle
    \begin{abstract}

We study the approximation properties of random ReLU features through
their reproducing kernel Hilbert space (RKHS).
We first prove a universality theorem for the RKHS induced by random features
whose feature maps are of the form of nodes in neural networks.
The universality result
implies that the random ReLU features method is a universally consistent learning algorithm.
We prove that despite the universality of the RKHS induced by the random ReLU features,
composition of functions in it generates substantially more
complicated functions that are harder to approximate than those functions simply in the RKHS.
We also prove that such composite functions can be efficiently approximated by
multi-layer ReLU networks with bounded weights.
This depth separation result shows that the random ReLU features models
suffer from the same weakness as that of shallow models.
We show in experiments that the performance of random ReLU features is comparable to that of
random Fourier features and, in general, has a lower computational cost.
We also demonstrate that when the target function is the composite function as described in
the depth separation theorem, 3-layer neural networks indeed outperform
both random ReLU features and 2-layer neural networks.

    \end{abstract}
\section{Introduction}
\label{sec:intro}
Random features methods have drawn researchers' attention since \cite{Rahimi2008}
showed their connections to kernel methods. Examples include random Fourier
features with Gaussian feature distribution approximating Gaussian kernels,
and random binning features approximating Laplacian kernels. In supervised learning
tasks, a linear regression function or classifier is learned on top of these random features.

When random features are used as standalone learning methods;
that is, no kernels are chosen in advance,
we want flexibility in choosing the feature
map and feature distribution for such purposes as to lower computational
cost or to inject prior knowledge. In this paper,
we consider the random features using the following form of feature maps
\[
    \sigma(\omega\cdot x+b)\,,
\]
where $ (\omega,b) $ are the parameters to be randomly chosen according to a certain distribution.
Linear combinations of this type of random features give solutions to supervised learning tasks in the form of
neural networks. This provides a foundation for the comparision between kernel methods and neural networks.

We are particularly interested in using ReLU as the feature map, because ReLU activation
nodes are widely used in deep neural networks for their computational advantages.
Note that we do not need to evaluate the derivatives of feature maps in random features methods,
so the advantage of ReLU in preventing vanishing or exploding gradient does not play a role in our case.
However, using ReLU in random features methods may still provide other computational advantages
over sigmoidal or sinusoidal feature maps. We will see in our experiments
that random ReLU features models output sparser feature vectors,
since each node either outputs 0 or identity.

To justify the use of a certain random feature in supervised learning tasks,
we need to answer the question: \emph{For a given
feature map and a given feature distribution, under what conditions it is
guaranteed that there exists a linear combination of randomly chosen
features that is a good approximator to the target continuous function?}

In general, a supervised learning algorithm or its hypothesis class is called universal, if the hypothesis class
is dense in the space of continuous functions.
From classic results we know that neural networks of any
non-polynomial activation functions are universal \citep{Leshno1993}.
Even though linear combinations of non-polynomial activation functions form
a dense subset in the space of continuous functions, it requires
appropriate setup of parameters inside the activation functions
to construct the approximator.
And thus it does not imply that by randomly sampling inner weights,
with high probability there exists a linear combination of randomly-chosen
nodes approximating the target continuous function well.

In this work, we study the question asked above through the universality of
the reproducing kernel Hilbert spaces (RKHS) corresponding to the random features.
Random Fourier features with Gaussian feature distribution and random binning
features are potentially good choices in supervised learning
tasks because their corresponding RKHS are universal \citep{Micchelli2006}.
When the feature map is chosen to be $ \mathrm{ReLU}(\omega\cdot \mathbf{x} + b) $ where the parameters
vector $ (\omega,b) $ obeys the standard Gaussian over $ \mathbb{R}^{d+1} $
or the uniform distribution over $ \mathbb{S}^d $, the induced kernel is one of the
arccos kernels and can be written in a closed form \citep{Cho2009}.
When some other distributions or feature maps are considered,
there may not be good closed forms for the induced kernels.
\cite{Bach2017} analyzed the approximation
property of the arccos kernels. Proposition~3 in \cite{Bach2017}
implies the universality of arccos kernels by explicitly constructing
functions in the RKHS of arccos kernels as approximators to Lipschitz functions.
However this approach relies on the closed form of arccos kernels and thus may
not work for random ReLU features with feature distribution other than Gaussian
or uniform over the unit sphere.
In this paper, we use tools from functional analysis and provide
a set of broad sufficient conditions on the feature map $ \sigma $
and distribution of $ (\omega,b) $ for the universality
of the corresponding RKHS.

Based on the universality result, the approximation
capability of a random feature can be characterized by how well it
approximates functions in the RKHS. For this part, by applying
\cite{Bach2017a}'s result, we reproduce \cite{Huang2006a}'s result
for bounded feature maps with a much simpler proof, and extend it to
unbounded but admissible random features (See details of the
admissibility condition in Section~\ref{sec:universality}).
These results provide an answer to the question we asked above on the
existence of good approximators when using random features methods.
We further show that the good hypothesis can always be found by solving
the constrained empirical risk minimization problem on top of
random ReLU features.

To further study the RKHS induced by the random ReLU feature,
we investigate the composition of functions from its RKHS.
\cite{Eldan2016} show that 3-layer ReLU networks with polynomial many nodes can express functions
that cannot be approximated by any 2-layer ReLU networks with polynomially many
($\mathrm{poly}(d)$) nodes.
\cite{Daniely2017} constructs examples over $\mathbb{S}^{d-1}\times\mathbb{S}
^{d-1}$ that can be well approximated by 3-layer ReLU networks with
number of nodes and weights less than $\mathrm{poly}(d)$, but not by any 2-layer ReLU networks
with much more ($\Omega(\exp(d))$) nodes and much larger weights.
\cite{Lee2017} shows a similar depth separation result for functions in
Barron's class \citep{Barron1993} and their compositions.
Following these works, we show a depth separation result for functions in the RKHS
induced by the random ReLU feature. It shows that compositions of
functions from the RKHS generate substantially more complicated functions that
are hard to be approximated by functions in the RKHS.
We also prove that compositions of functions from the RKHS can be approximated by a multi-layer ReLU
network, with all weights bounded by constants depending on the RKHS norm of components of target functions.
The depth separation result and the multi-layer approximation result together
suggest that random ReLU features suffer from the common weakness of
shallow models compared to deep ones.

To verify the performance of the random ReLU features method,
we compared them to the random Fourier features method on several
real and synthetic datasets. It is confirmed in our experiments
that the random ReLU features method can achieve similar performance
with random Fourier features method with lower computational cost.
We also designed synthetic datasets according to the construction in the depth separation
result and use them to demonstrate the difference among the performance of random ReLU features,
2-layer neural networks and 3-layer neural networks.
The experiment clearly shows the limit of shallow models.

The overall contributions of this paper are summarized as follows.
\begin{enumerate}
  \item We establish sufficient conditions for the universality of
  kernels induced by a broad class of random features (Theorem~\ref{thm:universality}).
  Based on this result we are able to prove the universality of
  random features in admissible cases (Corollary~\ref{cor:opt-universal}
  and \ref{cor:bounded}).
  \item We describe the random ReLU features
  method (Algorithm~\ref{alg:ran-relu}) and show that
  it is universally consistent.
  We compare the performance of random ReLU features to random Fourier
  features and confirm the advantages of random ReLU features in computational
  cost.
  \item We prove that compositions of functions in the RKHS induced by
  the random ReLU features generate more complicated functions (Proposition~\ref{prop:separation}). And such
  composite functions can be efficiently approximated by multilayer ReLU networks (Proposition~\ref{prop:multi}).
  Our experiments confirm the gap between the performance of shallow and
  deep models, and show that the good deep approximator can be found
  by simply running stochastic gradient descent.
\end{enumerate}

In Section~\ref{sec:prelim},
we describe random features and reproducing kernel Hilbert spaces,
and define the notations we use in the paper.
The universality of the random ReLU features is given in
Section~\ref{sec:universality}. We describe a simple random ReLU features method
and show its universal consistency in Section~\ref{sec:learn-rate}.
The depth separation result of RKHSs induced by the random ReLU feature and
the approximation by multilayer ReLU networks are presented
in Section~\ref{sec:barron}.
The performance of random ReLU features in experiments and their strength and weakness
are discussed in Section~\ref{sec:experiments}. All the proofs and
extra experiment results can be found in the appendices.

\section{Preliminaries on Random Features}
\label{sec:prelim}
Throughout the paper, we assume that $ \mathcal{X} $ is a
closed subset contained in the ball centered at the origin
of $ \mathbb{R}^d $ with radius $ r $.
Let $ C(\mathcal{X}) $ denote the space of all continuous functions on $ \mathcal{X} $ equipped with
the supremum norm. When a subset of $ C(\mathcal{X}) $ is dense,
we call it universal.

For any positive symmetric kernel function $ k(x,x') $, we call $
\phi:\mathcal{X}\to H $, where $ H $ is a Hilbert space,
a feature map of $ k $, if
\begin{equation}
  \label{eq:feature}
  k(x,x')=\langle\phi(x),\phi(x')\rangle_H\,.
\end{equation}
Feature representations of kernels
are very useful for understanding the approximation property of the RKHS, and
also for scaling up kernel methods to large data sets. In practice, one can
choose a kernel function for the problem first, and then pick up a feature map
based on some transformation of the kernel function.
The process can also be reversed. One can design a map $ \phi $ from
$ \mathcal{X} $ to a Hilbert space $ H $ first, and then
define the kernel function by Equation~\ref{eq:feature}. A class of useful
feature maps chooses $ H $ to be $ L^2(\Omega,\omega,\mu) $,
where $ \mu $ is a probability distribution over the parameter space
$ \Omega $. We call the pair $(\phi,\mu)$ a random feature.
The corresponding kernel function and the RKHS are denoted by $ k_{\phi,\mu} $
and $ (\mathcal{H}_{\phi,\mu},\Vert\cdot\Vert_{\phi,\mu}) $, respectively. Any function $ f $ in
$ \mathcal{H}_{\phi,\mu} $ can be described by
\begin{equation*}
 f(x)=\int_{\Omega}\phi(x;\omega)g(\omega)\,\mathrm{d}\mu(\omega)\,,
\end{equation*}
for some $ g\in L^2(\Omega,\omega,\mu) $. The function $ g $ in the
representation of $ f $ is not unique, and
$ \Vert f\Vert_{\phi,\mu} $ equals the infimum of the $ L^2 $
norm of all such $ g $'s.
Because $ \mu $ is a probability measure, we can approximate $f$ using
  $\frac{1}{N}\sum_{i=1}^N \phi(x;\omega_i)g(\omega_i)$
with $ \omega_i $ sampled independently according to $ \mu $.
In a supervised learning task, the coefficients $ g(\omega_i) $
can be determined by a training process minimizing the empirical risk with
respect to some loss function; this is known as the random features method.
The generalization error of random Fourier features methods has been studied by
\cite{Rudi2017} and \cite{Sun2018b}. They have been shown to be able to achieve
fast learning rates with number of nodes far fewer than the sample size.
The performance of random features in practical problems such as speech
recognition has also been investigated; see \cite{Huang2014a}.
Note that in the entire process of applying a random features method,
the kernel function does not show up at all.

The feature map we considered in this work is of the form
$ \phi(x;\omega,b) = \sigma(\omega\cdot x+b) $ where $ \sigma $ is a non-linear
continuous function on $ \mathbb{R} $. We refer this feature map as
of \emph{neural network type}.
To simplify our notations, we denote by $ (x,1) $ the concatenation of the
$ d $-dimensional vector $ x $ and a scalar $1$. Since we do not treat
the bias variable $ b $ differently from the coefficients $ \omega $,
we view all the inner weights in one node as a $ d+1 $ dimensional vector
and call it $ \omega $. Hence, the function expressed by $ N $ random features
will appear as
\begin{equation*}
  f_N(x) = \sum_{i=1}^N c_i\sigma(\omega_i\cdot(x,1))\,.
\end{equation*}
We are most interested in the case where $ \sigma(z) =\max(0,z) $, which is
the ReLU node.

Other notations include: $ \mathbb{P} $ denotes the data distribution over
$ \mathcal{X} $; $ |\cdot| $ means the Euclidean norm when the operand
is a vector in $ \mathbb{R}^d $, but the total variation norm when the
operand is a measure; $ f_{i:j} $ represents the composition of functions
$f_j\circ\cdots\circ f_i $; $\mathbf{R}^\ell$ denotes the expected risk with
respect to the loss $ \ell $; and $ \tau_d $ denotes the uniform probability
distribution on $ \mathbb{S}^d $.

\section{Universality of Random Features}
\label{sec:universality}
We start with the universality of the RKHS induced by random features of
neural network type.
\begin{theorem}
  \label{thm:universality}
  Assume that $ |\sigma(z)| \le K|z|^k + M $
  for some $ M,K\ge 0 $ and $ k\in \mathbb{N}_0 $. Let
  $ \mu $ be a probability distribution whose support is dense in
  $ \mathbb{R}^{d+1} $ with
  $ \int\vert\omega\vert^{2k}~\mathrm{d}\mu(\omega) \le M_2 $.
        If $ \sigma $ is not a polynomial, the RKHS $ \mathcal{H}_{\sigma,\mu} $ is universal.
\end{theorem}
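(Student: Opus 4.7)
The plan is to prove density of $\mathcal{H}_{\sigma,\mu}$ in $C(\mathcal{X})$ by the standard Hahn--Banach / Riesz duality: by the Riesz representation theorem, it suffices to show that any finite signed Radon measure $\nu$ on $\mathcal{X}$ with $\int f\,d\nu = 0$ for every $f \in \mathcal{H}_{\sigma,\mu}$ must be the zero measure. Given such a $\nu$, I will then try to reduce the statement to the classical Leshno--Lin--Pinkus--Schocken theorem, which says that the linear span of $\{\sigma(\omega\cdot(x,1)) : \omega \in \mathbb{R}^{d+1}\}$ is dense in $C(\mathcal{X})$ whenever $\sigma$ is continuous and not a polynomial.

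The core computation is to introduce the auxiliary function
\[
  F(\omega) \;:=\; \int_{\mathcal{X}} \sigma(\omega\cdot(x,1))\,d\nu(x),\qquad \omega\in\mathbb{R}^{d+1},
\]
and show that $F\equiv 0$ on all of $\mathbb{R}^{d+1}$. For every $g\in L^2(\mu)$, the function $f_g(x) = \int \sigma(\omega\cdot(x,1)) g(\omega)\,d\mu(\omega)$ lies in $\mathcal{H}_{\sigma,\mu}$, so the hypothesis on $\nu$ together with Fubini gives $\int F(\omega) g(\omega)\,d\mu(\omega) = 0$. The applicability of Fubini is the first technical step: using the bound $|\sigma(\omega\cdot(x,1))| \le K(r+1)^k|\omega|^k + M$ on $\mathcal{X}$, Cauchy--Schwarz together with $\int|\omega|^{2k}d\mu \le M_2$ and finiteness of $|\nu|(\mathcal{X})$ makes the double integral absolutely convergent. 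Since the same growth/moment bound shows that $F$ itself lies in $L^2(\mu)$, taking $g = F$ yields $\int F^2\,d\mu = 0$, i.e.\ $F = 0$ $\mu$-almost everywhere.

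Next, I would upgrade ``$\mu$-a.e.'' to ``everywhere.'' Dominated convergence (dominating integrand $K(r+1)^k|\omega|^k + M$ which is locally bounded in $\omega$) shows that $F$ is continuous on $\mathbb{R}^{d+1}$; the zero set of a continuous function is closed; a closed set of full $\mu$-measure contains $\mathrm{supp}(\mu)$; and since $\mathrm{supp}(\mu)$ is dense, $F$ vanishes identically. Then for every $\omega\in\mathbb{R}^{d+1}$, the continuous function $x\mapsto \sigma(\omega\cdot(x,1))$ on $\mathcal{X}$ is annihilated by $\nu$. By Leshno et al., such functions span a dense subspace of $C(\mathcal{X})$ (here we use that $\sigma$ is continuous and non-polynomial), so $\nu$ annihilates a dense set and therefore, by continuity, all of $C(\mathcal{X})$. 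Hence $\nu = 0$ by Riesz, proving universality.

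The main obstacle I expect is keeping the integrability bookkeeping clean: both the Fubini swap and the claim $F\in L^2(\mu)$ need the polynomial growth of $\sigma$ and the $(2k)$-moment bound on $\mu$ to cooperate, and one has to use $|\omega\cdot(x,1)| \le |\omega|\sqrt{r^2+1}$ to absorb the domain radius into the constants. Everything else (continuity of $F$, density-of-support argument, invocation of Leshno et al.) is a routine consequence once that quantitative control is in place.
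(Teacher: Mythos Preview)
Your proposal is correct and follows essentially the same route as the paper's proof: Riesz duality reduces to showing $\nu=0$, Fubini (justified by the polynomial growth and $2k$-moment bounds) gives $F=0$ $\mu$-a.e., continuity of $F$ plus dense support upgrades this to $F\equiv 0$, and then Leshno et al.\ forces $\nu=0$. The one step the paper makes explicit that you leave implicit is verifying $\mathcal{H}_{\sigma,\mu}\subset C(\mathcal{X})$ (i.e., that each $f_g$ is continuous on $\mathcal{X}$), which is needed before the $C(\mathcal{X})$--$M(\mathcal{X})$ duality can be invoked; this follows from a routine tail--plus--core splitting using the same growth and moment bounds you already cite for Fubini.
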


Theorem~\ref{thm:universality} shows that
the RKHSs induced by a broad class of random features of neural network type
are dense in the space of continuous functions. And hence the random features
satisfying these conditions have a strong approximation capability.

Theorem~\ref{thm:universality} requires the feature distribution to be supported almost
everywhere on $ \mathbb{R}^{d+1} $. However, when the feature map is ReLU,
this requirement can be relaxed due to the homogeneity.
The following proposition provides us a sufficient condition for the
universality of $ \mathcal{H}_{\mathrm{ReLU},\mu} $.
\begin{proposition}
  \label{prop:universality-relu}
  When $ \mu $ is a probability distribution supported on a dense subset of a $d$-dimensional ellipsoid centered at the origin, the RKHS
        $\mathcal{H}_{\mathrm{ReLU},\mu}$ is universal.
\end{proposition}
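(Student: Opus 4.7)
My plan is to adapt the Hahn--Banach/Riesz duality argument that presumably underlies Theorem~\ref{thm:universality}. By the Riesz representation theorem, $\mathcal{H}_{\mathrm{ReLU},\mu}$ is dense in $C(\mathcal{X})$ if and only if every finite signed Radon measure $\nu$ on $\mathcal{X}$ satisfying $\int_{\mathcal{X}} f\,d\nu = 0$ for all $f \in \mathcal{H}_{\mathrm{ReLU},\mu}$ must vanish. Using the integral representation $f(x) = \int \mathrm{ReLU}(\omega\cdot(x,1))\,g(\omega)\,d\mu(\omega)$ for $g \in L^2(\mu)$, Fubini turns the annihilation condition into the requirement that the scalar function
\[
F(\omega) \;:=\; \int_{\mathcal{X}} \mathrm{ReLU}(\omega\cdot(x,1))\,d\nu(x)
\]
satisfies $\int F(\omega) g(\omega)\,d\mu(\omega) = 0$ for all $g \in L^2(\mu)$, which is equivalent to $F(\omega)=0$ for $\mu$-almost every $\omega$.

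Next I would exploit the two special properties of ReLU to upgrade this almost-everywhere vanishing to vanishing on all of $\mathbb{R}^{d+1}$. Since $\mathcal{X}$ is compact, $\mathrm{ReLU}(\omega\cdot(x,1))$ is Lipschitz in $\omega$ uniformly in $x$, so dominated convergence gives that $F$ is continuous on $\mathbb{R}^{d+1}$. The support of $\mu$ is dense in the ellipsoid, so $F$ vanishes on the entire ellipsoid. Positive homogeneity of ReLU then yields $F(\lambda\omega)=\lambda F(\omega)$ for every $\lambda>0$, and because the ellipsoid is a non-degenerate $d$-dimensional surface centered at the origin, every ray from the origin in $\mathbb{R}^{d+1}$ meets it; the positive cone generated by the ellipsoid therefore fills $\mathbb{R}^{d+1}\setminus\{0\}$, and we conclude $F\equiv 0$ on all of $\mathbb{R}^{d+1}$.

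Finally, to conclude $\nu=0$, I would invoke Theorem~\ref{thm:universality} with a reference distribution $\mu'$ that does satisfy its hypotheses, e.g.\ the standard Gaussian on $\mathbb{R}^{d+1}$: ReLU satisfies $|\mathrm{ReLU}(z)|\le|z|$, so one may take $k=1$; the Gaussian has dense support and finite second moment; and ReLU is not a polynomial. Thus $\mathcal{H}_{\mathrm{ReLU},\mu'}$ is universal. Since $F\equiv 0$ everywhere, Fubini again shows that our $\nu$ also annihilates every element of $\mathcal{H}_{\mathrm{ReLU},\mu'}$, forcing $\nu=0$ by universality of this auxiliary RKHS, which completes the argument.

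The step I expect to require the most care is the geometric one: verifying that the positive cone generated by the ellipsoid is $\mathbb{R}^{d+1}\setminus\{0\}$. This is precisely where the hypotheses that the ellipsoid is $d$-dimensional (i.e., non-degenerate in every direction) and centered at the origin are used; without them, homogeneity would only propagate $F\equiv 0$ to a proper cone, leaving room for nonzero annihilating measures and the conclusion would fail (for instance on a half-ellipsoid).
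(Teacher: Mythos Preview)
Your proposal is correct and follows essentially the same route as the paper: both reduce to showing that the continuous function $F(\omega)=\int_{\mathcal{X}}\mathrm{ReLU}(\omega\cdot(x,1))\,d\nu(x)$ vanishes on all of $\mathbb{R}^{d+1}$ by combining continuity (to pass from $\mu$-a.e.\ to the whole ellipsoid) with positive homogeneity of ReLU (to fill the cone over the ellipsoid), then conclude $\nu=0$. The only cosmetic difference is your final step, where you route through an auxiliary $\mu'$ and Theorem~\ref{thm:universality}; the paper instead invokes the Leshno et al.\ density of $\mathrm{span}\{\mathrm{ReLU}(\omega\cdot(\cdot,1))\}$ directly, which is slightly more economical but equivalent in content.
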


This proposition is only stated for the regular feature space like
$ \mathbb{S}^d $ or ellipsoids around the origin for
simplicity of the statement.
The proof actually works for any set $ \Omega $ satisfying the following
property: for any $ \omega\ne 0 $ in $ \mathbb{R}^{d+1} $, there exists
a scalar $ c > 0 $ such that $ c\omega\in \Omega$.
This proposition allows us to restrict the
ReLU feature map to be a bounded function when the data set is bounded.
And this will be important for us when we prove the approximation capability of
random ReLU features.

As we mentioned in the introduction, the universality of
$ \mathcal{H}_{\mathrm{ReLU},\mu} $, when $ \mu $ is the
uniform distribution over the unit sphere, has been shown implicitly by
\cite{Bach2017}. However, due to the non-constructive approach we take,
it is much easier for us to prove the universality of the RKHS induced by
$(\mathrm{ReLU},\mu)$ with $\mu$ supported over various domains.

In the next, we study the approximation properties of random ReLU features.
As we mentioned in the introduction, since random features methods are not
deterministic, the approximation property has to be stated with probability.
Therefore we first give the following definition.
\begin{definition}
  \label{def:universality}
  Given a random feature $ (\sigma, \mu) $, if for any $ f $ in
  $ C(\mathcal{X}) $ and $ \delta,\epsilon>0 $, there exist a positive integer
  $ N $ such that with probability greater than $ 1-\delta $, we can find coefficients $ \{c_i\}_{i=1}^N $ such that
  \[ \Vert f_N - f\Vert_{L^2(\mathbb{P})}
  < \epsilon\,,\]
  we say that the random feature is universal.
\end{definition}

If we know that the RKHS induced by the random feature
$ (\sigma,\mu) $ is universal, to show that it is universal
in the sense of Definition~\ref{def:universality},
we need only verify that any function
$ f \in \mathcal{H}_{\sigma,\mu} $ can be approximated
by a linear combination of finitely many random features, $ f_N $,
with high probability of random choice of $ \omega_i $.
This has been studied in \cite{Bach2017a}.
To apply Bach's result, we make the following definition.
\begin{definition}
  A random feature $ (\sigma,\mu) $ is called admissible
  if for any $\lambda>0$,
  \[ \sup_{\omega\in\Omega}\langle \sigma_\omega,
  (\Sigma+\lambda I)^{-1}\sigma_\omega \rangle < \infty\,,\]
  where $ \Sigma:L_2(\mathbb{P})\to L_2(\mathbb{P}) $ is defined by
  $ \Sigma f = \int k_{\sigma,\mu}(x,y)f(y)~\mathrm{d}\mathbb{P}(y)$,
  and $\sigma_\omega(x) = \sigma(\omega\cdot(x,1))$.
\end{definition}

This property is used in earlier work \citep{Sun2018b,Rudi2017} to efficiently obtain random features
in their learning rate analysis.
It is a data-dependent property except for special cases such as bounded
feature maps.
How to design the admissible random features remains an open problem.
Though it is data-dependent, it does not depend on the labels or target
functions. And hence we may gain advantages in supervised learning tasks if we
can design random features according to the characteristics of the data
distribution.

With this admissibility assumption, we have the following sufficient condition
for the universality of random features. This answers the question we
asked in the introduction.
\begin{corollary}
  \label{cor:opt-universal}
  Assume that $ |\sigma(z)| \le K|z|^k + M $
  for some $ M,K\ge 0 $ and $ k\in \mathbb{N}_0 $. Let
  $ \mu $ be a probability distribution whose support is dense in
  $ \mathbb{R}^{d+1} $ with
  $ \int\vert\omega\vert^{2k}~\mathrm{d}\mu(\omega) \le M_2 $.
  If $ \sigma $ is not a polynomial and $ (\sigma,\mu) $ is admissible with respect
  to the data distribution $ \mathbb{P} $, the random feature
  $ (\sigma,\mu) $ is universal.
\end{corollary}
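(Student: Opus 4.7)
The plan is to reduce universal approximation of continuous functions by random feature sums to two successive approximations that are glued together by a triangle inequality in $L^2(\mathbb{P})$. The first approximation sends $f \in C(\mathcal{X})$ into $\mathcal{H}_{\sigma,\mu}$ using Theorem~\ref{thm:universality}; the second approximates that RKHS element by a finite i.i.d.\ random feature sum using the result of \cite{Bach2017a}, with admissibility supplying the high-probability sample complexity bound. Definition~\ref{def:universality} then follows directly.

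Concretely, given $f \in C(\mathcal{X})$ and tolerances $\epsilon,\delta > 0$, the hypotheses of Theorem~\ref{thm:universality} hold verbatim, so I can pick $g \in \mathcal{H}_{\sigma,\mu}$ with $\|f-g\|_\infty < \epsilon/2$. Because $\mathcal{X}$ is compact and $\mathbb{P}$ is a probability measure, this gives $\|f-g\|_{L^2(\mathbb{P})} \le \|f-g\|_\infty < \epsilon/2$. For the second step, I would invoke the random-feature approximation theorem from \cite{Bach2017a}: under admissibility, if $\omega_1,\ldots,\omega_N$ are sampled i.i.d.\ from $\mu$ with $N$ chosen sufficiently large as a function of $\epsilon$, $\delta$, $\|g\|_{\sigma,\mu}$, and the admissibility constant, then with probability at least $1-\delta$ there exist coefficients $\{c_i\}_{i=1}^N$ such that $f_N(x) = \sum_{i=1}^N c_i \sigma(\omega_i\cdot(x,1))$ satisfies $\|g - f_N\|_{L^2(\mathbb{P})} < \epsilon/2$. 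The triangle inequality then yields $\|f - f_N\|_{L^2(\mathbb{P})} < \epsilon$ with probability at least $1-\delta$, which is exactly what Definition~\ref{def:universality} requires.

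The main technical obstacle is correctly parameterizing Bach's sampling bound so that it can be applied with the quantities available. His estimate controls the $L^2(\mathbb{P})$ error of the best linear combination of $N$ i.i.d.\ random features in terms of a regularization bias (which I would drive below $\epsilon/4$ by taking $\lambda$ small) and a variance-like term proportional to the admissibility quantity $\sup_\omega \langle \sigma_\omega,(\Sigma+\lambda I)^{-1}\sigma_\omega\rangle$ at that $\lambda$, divided by $N$, plus a logarithmic confidence term in $1/\delta$. Admissibility guarantees this quantity is finite at every fixed $\lambda > 0$, so $N$ can then be chosen to make the failure probability at most $\delta$ and the residual error at most $\epsilon/2$. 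A subtle but harmless point is that $g$ and hence $\|g\|_{\sigma,\mu}$ depend on $f$, but since $f$ is fixed at the outset these quantities are constants before the $\omega_i$ are drawn, so no uniformity over $f$ is required and $N$ is allowed to depend on $f,\epsilon,\delta$, in accordance with the definition.
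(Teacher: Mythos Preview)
Your proposal is correct and follows essentially the same route as the paper: first invoke Theorem~\ref{thm:universality} to approximate $f$ in sup-norm by some $\tilde f\in\mathcal{H}_{\sigma,\mu}$, then apply Bach's approximation theorem (Theorem~\ref{thm:app-err}) with $\lambda$ chosen small enough relative to $\epsilon$ and $\Vert\tilde f\Vert_{\sigma,\mu}$, using admissibility to guarantee $d_{\max}(1,\lambda)<\infty$ so that a finite $N$ suffices. The paper makes the parameter choice slightly more explicit---taking $\lambda\le\epsilon^2/(4G^2)$ where $G=\Vert\tilde f\Vert_{\sigma,\mu}$ so that Bach's bound yields $\Vert\tilde f-f_N\Vert_{L^2(\mathbb{P})}\le 2G\sqrt{\lambda}\le\epsilon$---but your description of driving the regularization bias down and then choosing $N$ via the finite admissibility constant is the same mechanism.
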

Though the admissibility assumption is usually data-dependent, the random feature $(\sigma,\mu)$
with a bounded $\sigma$ is always admissible. Therefore, we have the following corollary.

\begin{corollary}
  \label{cor:bounded}
  When $ \sigma $ is bounded and $ \mathcal{H}_{\sigma,\mu} $ is
  universal. The random feature $ (\sigma,\mu) $ is universal.
\end{corollary}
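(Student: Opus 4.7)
My plan is to reduce the statement to two ingredients already prepared in the excerpt: the hypothesis that $\mathcal{H}_{\sigma,\mu}$ is universal, and \cite{Bach2017a}'s random-feature approximation theorem, which is invoked in the discussion right after Definition~\ref{def:universality} and gives a finite random-feature approximation for any RKHS element provided $(\sigma,\mu)$ is admissible. With these two pieces in hand, all the real work is verifying admissibility when $\sigma$ is bounded.

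First I would unpack the logic preceding Definition~\ref{def:universality}: given any $f \in C(\mathcal{X})$ and $\varepsilon,\delta > 0$, universality of $\mathcal{H}_{\sigma,\mu}$ plus the embedding $C(\mathcal{X}) \hookrightarrow L^2(\mathbb{P})$ (valid because $\mathcal{X}$ is compact and $\mathbb{P}$ is a probability measure) yields some $\tilde f \in \mathcal{H}_{\sigma,\mu}$ with $\Vert \tilde f - f\Vert_{L^2(\mathbb{P})} < \varepsilon/2$. So it suffices to approximate the fixed $\tilde f \in \mathcal{H}_{\sigma,\mu}$ by a linear combination $f_N$ of $N$ random features to within $\varepsilon/2$, with probability at least $1-\delta$ over $\omega_1,\ldots,\omega_N \sim \mu$.

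Next I would verify admissibility. Let $C := \sup_{z\in\mathbb{R}}|\sigma(z)| < \infty$. Then for every $\omega\in\Omega$,
\[
\Vert \sigma_\omega \Vert_{L^2(\mathbb{P})}^2 \;=\; \int_{\mathcal{X}} \sigma(\omega\cdot(x,1))^2 \,\mathrm{d}\mathbb{P}(x) \;\le\; C^2,
\]
uniformly in $\omega$. The operator $\Sigma$ is positive self-adjoint on $L^2(\mathbb{P})$, so $(\Sigma+\lambda I)^{-1} \preceq \lambda^{-1} I$ in the operator sense, giving
\[
\sup_{\omega\in\Omega}\, \langle \sigma_\omega, (\Sigma+\lambda I)^{-1}\sigma_\omega\rangle \;\le\; \lambda^{-1}\sup_{\omega}\Vert \sigma_\omega\Vert_{L^2(\mathbb{P})}^2 \;\le\; C^2/\lambda,
\]
which is finite for each $\lambda > 0$. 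Hence $(\sigma,\mu)$ is admissible.

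Finally, admissibility combined with $\tilde f \in \mathcal{H}_{\sigma,\mu}$ lets me invoke Bach's result to obtain, for sufficiently large $N = N(\tilde f,\varepsilon,\delta)$, random weights $\omega_1,\ldots,\omega_N$ and coefficients $c_1,\ldots,c_N$ with $\Vert f_N - \tilde f\Vert_{L^2(\mathbb{P})} < \varepsilon/2$ with probability at least $1-\delta$; the triangle inequality then closes the argument. There is no real obstacle here: the only substantive observation is that boundedness of $\sigma$ transfers directly into a uniform $L^2(\mathbb{P})$ bound on $\sigma_\omega$, which is what admissibility requires once the trivial spectral bound on $(\Sigma+\lambda I)^{-1}$ is used.
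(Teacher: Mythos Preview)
Your proposal is correct and matches the paper's approach exactly: the paper's proof consists solely of the admissibility check via the bound $\langle\sigma_\omega,(\Sigma+\lambda I)^{-1}\sigma_\omega\rangle \le \lambda^{-1}\Vert\sigma_\omega\Vert_{L^2(\mathbb{P})}^2 \le \kappa^2/\lambda$, with the remaining steps (density $\to$ RKHS approximant $\to$ Bach's theorem $\to$ triangle inequality) deferred to the proof of Corollary~\ref{cor:opt-universal}, which you have simply written out in full.
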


\cite{Huang2006a} proved that under the assumptions of
Corollary~\ref{cor:bounded}, for any continuous function $ f $ and randomly
generated $ \omega_i $'s, with probability $ 1 $, there exist $ c_i $'s such that $ f_N $ converges
to $ f $ under $ L^2(\mathbb{P}) $-norm as $ N $ goes to infinity. Note that Definition~\ref{def:universality} only
requires the convergence in probability instead of almost surely, and thus
Corollary~\ref{cor:bounded} seems to be weaker than Huang's result.
They are actually equivalent. To see this, first note that
the statement is on the existence of
the approximator in the linear space of $ n $ random basis elements.
Denote by $ E(\{\omega_i\}_{i=1}^n) $ the space spanned by
$ \{\sigma(\omega_i\cdot(x,1))\}_{i=1}^n $.
The convergence in probability stated in Corollary~\ref{cor:bounded}
implies that there exists a subsequence $ \{n_k\}_{k=1}^\infty $ such that
with probability 1, the infinite sequence of $ \{\omega_i\}_{i=1}^\infty $
sampled randomly satisfies that $ E(\{\omega_i\}_{i=1}^{n_k}) $
contains an approximator converging to the target function
as $ k $ goes to infinity. Since $ E(\{\omega_i\}_{i=1}^n)\subset
E(\{\omega_i\}_{i=1}^m) $ whenever $ n\le m $, the almost sure convergence
holds for all $ n $.

Corollary~\ref{cor:bounded} shows that random ReLU features with feature distribution
$ \mu $ supported on ellipsoids around the origin can approximate any continuous
function with high probability over the random choice of
parameter $ \omega_i $s. The proofs of all the results in this section
can be found in Appendix~\ref{app:universality}.
We will give a detailed description of the random ReLU features method in the
supervised learning context and discuss its performance in
Section~\ref{sec:learn-rate} and Section~\ref{sec:experiments}.


\section{Universal Consistency of the Random ReLU Features Method}
\label{sec:learn-rate}

Algorithm~\ref{alg:ran-relu} describes the supervised learning algorithm using
random ReLU features.

\begin{algorithm}
  \SetKwInOut{Input}{input}
  \SetKwInOut{Output}{Output}
  \Input{$\{(x_i,y_i)\}_{i=1}^m,\gamma,R,N$}
  \Output{$ f_N(x)=\sum_{j=1}^N c_j\mathrm{ReLU}(\omega_j \cdot (x_i,1/\gamma)) $}

  Generate $\{\omega_j\}_{j=1}^N\subset\mathbb{S}^d$ according to $ \tau_d $, the uniform
  distribution over $\mathbb{S}^d$\;
  Choose appropriate loss function $\ell$ according to the type of tasks, and
  solve the optimization problem:
  \begin{equation*}
      \operatornamewithlimits{\mathrm{minimize}}_{\sum_{j=1}^N c_j^2 \le R^2}~
      \frac1m\sum_{i=1}^m\ell\left(\sum_{j=1}^N c_j\mathrm{ReLU}(\omega_j
      \cdot (x_i,1/\gamma)), y_i\right)
  \end{equation*}
  \caption{Random ReLU features method.}
  \label{alg:ran-relu}
\end{algorithm}

The first hyper-parameter $ \gamma $ plays a similar role of the bandwidth parameter in random Fourier
features method. Since the numerical ranges of features in different datasets may be large, introducing the bandwidth parameter can help normalize the
data.

The second hyper-parameter $ R $ is the constraint on the 2-norm of the outer weights during the training.
Considering the constrained instead of regularized form of optimization simplifies the generalization error
analysis, and it is also practical based on our experiment results.
In fact, since for a fixed number of random features, the capacity of the hypothesis class is limited,
random features methods are less vulnerable to overfitting than traditional kernel method.
This has been confirmed that carefully choosing the batch size and step size can avoid overfitting
without use of regularizer or norm constraint in random features methods in regression tasks
\citep{Carratino2018}.


Now we show that the random ReLU features method is a universally consistent
supervised learning algorithm.
For any function $f_0\in\mathcal{H}_{\mathrm{ReLU},\tau_d}$, the standard
statistical learning theory guarantees that Algorithm~\ref{alg:ran-relu} will return a
solution with excess risk sufficiently small if the sample size $m$ and the number of features
$N$ are large enough and $R$ is chosen to be greater than $2\Vert f_0\Vert_{\mathrm{ReLU},\tau_d}$;
see Appendix~\ref{app:learn-rate} for details.
Meanwhile, by Proposition~\ref{prop:universality-relu}, we know that for any continuous function $ f^* $
and $ \epsilon > 0 $, there exists $ f_0\in\mathcal{H}_{\mathrm{ReLU},\tau_d} $
such that
\begin{equation*}
\Vert f_0-f^*\Vert_{L^2(\mathbb{P})} \le \epsilon\,.
\end{equation*}
This proves the following statement.
\begin{proposition}
The random ReLU features method (Algorithm~\ref{alg:ran-relu}) is
universally consistent.
\end{proposition}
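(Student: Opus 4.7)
The plan is to chain together Proposition~\ref{prop:universality-relu}, the random-feature approximation provided by Corollary~\ref{cor:bounded}, and a standard finite-sample generalization bound for the constrained empirical risk minimizer in Algorithm~\ref{alg:ran-relu}. The statement is essentially a corollary of the remarks preceding it, so the task is to assemble these pieces in the right order and check that the error budgets add up.

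First I would reduce universal consistency to the following quantitative claim: for every data distribution, every continuous $f^* \in C(\mathcal{X})$, and every $\epsilon>0$, there exist $R,N,m$ (depending on $f^*$ and $\epsilon$) such that with high probability the output $f_N$ of Algorithm~\ref{alg:ran-relu} satisfies $\mathbf{R}^\ell(f_N) \le \mathbf{R}^\ell(f^*) + \epsilon$. The reduction from this claim to universal consistency in the usual sense uses density of continuous functions in $L^2(\mathbb{P})$ together with Lipschitz continuity of $\ell$ in its first argument (which holds for the usual choices), so the Bayes-optimal predictor can itself be approximated arbitrarily well in risk by a continuous function.

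Next I would apply Proposition~\ref{prop:universality-relu} with $\mu = \tau_d$ to produce $f_0 \in \mathcal{H}_{\mathrm{ReLU},\tau_d}$ with $\Vert f_0 - f^*\Vert_{L^2(\mathbb{P})} \le \epsilon/3$, and set $R := 2\Vert f_0\Vert_{\mathrm{ReLU},\tau_d}$. Because ReLU restricted to the bounded set $\{\omega\cdot(x,1/\gamma) : x \in \mathcal{X},\,\omega \in \mathbb{S}^d\}$ is bounded, Corollary~\ref{cor:bounded} applies and ensures that for $N$ large enough, with probability at least $1-\delta/2$ over the draw $\{\omega_j\}_{j=1}^N \sim \tau_d$ there exist coefficients with $\sum_j c_j^2 \le R^2$ whose linear combination $\widetilde f_N$ is within $\epsilon/3$ of $f_0$ in $L^2(\mathbb{P})$. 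Finally, a Rademacher-complexity bound for linear predictors with $\Vert c\Vert_2 \le R$ composed with bounded random features gives an $O(R/\sqrt m)$ uniform deviation between empirical and population risk over the constrained hypothesis class; taking $m$ sufficiently large and combining with the triangle inequality for risk (via Lipschitzness of $\ell$) absorbs the remaining $\epsilon/3$ and bounds $\mathbf{R}^\ell(f_N) - \mathbf{R}^\ell(\widetilde f_N)$.

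The only delicate part is keeping the order of quantifiers correct: $f^*$ and $\epsilon$ come first, then $f_0$, then $R$, then $N$ (which may depend on $f_0$ and on the effective radius of the feature class implicit in Corollary~\ref{cor:bounded}), and finally $m$ (depending on $R$, $N$, and the confidence $\delta$). Once the dependencies are laid out in this order, the three $\epsilon/3$ budgets combine cleanly by the triangle inequality, and the informal paragraph preceding the proposition becomes a complete proof. I do not anticipate any genuine technical obstacle beyond this bookkeeping, since each ingredient has already been established either in the paper itself or in the standard statistical-learning literature cited in Appendix~\ref{app:learn-rate}.
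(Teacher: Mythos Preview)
Your proposal is correct and follows essentially the same route as the paper: approximate the Bayes predictor by a continuous $f^*$, invoke Proposition~\ref{prop:universality-relu} to get $f_0\in\mathcal{H}_{\mathrm{ReLU},\tau_d}$, use Bach's random-feature approximation to land in the constrained hypothesis class, and finish with a Rademacher bound (this is exactly Proposition~\ref{prop:gen-error} in Appendix~\ref{app:learn-rate}). One small remark: the $\ell^2$ bound on the coefficients you need is not part of the \emph{statement} of Corollary~\ref{cor:bounded}; it comes from the underlying Theorem~\ref{thm:app-err} (Bach), which is what the paper invokes directly.
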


To obtain a meaningful learning rate of the random ReLU features method beyond the universal
consistency of the algorithm, we need to obtain tight upper bounds on
the generalization and approximation errors. The generalization
error bound tighter than Proposition~\ref{prop:gen-error}
in Appendix~\ref{app:learn-rate} can be obtained
by using local Rademacher complexity as in \cite{Sun2018b}. However,
it is not clear yet how to obtain a tight bound on the approximation error.
This is also discussed in Appendix~\ref{app:learn-rate}.
We compare the performance of the random ReLU features method on several
real and synthetic datasets with the random Fourier features method in
Section~\ref{sec:experiments}.

\section{Composition of Functions in the RKHS Induced by Random ReLU Features}
\label{sec:barron}

Random features models belong to so-called shallow models.
It is widely believed that an advantage of deep models over shallow
ones is their capability to efficiently approximate composite functions.
The following proposition shows that the composition of functions in
$\mathcal{H}_{\mathrm{ReLU},\tau_d}$ generates substantially more complicated functions
that are hard to be approximated by functions in $\mathcal{H}_{\mathrm{ReLU},\tau_d}$.
\begin{proposition}
  \label{prop:separation}
  There exist universal constants $ c,C $ such that for any $ d>C $,
  we can construct a probability measure $ \mathbb{P} $ supported
  on a compact set
  $ \mathcal{X}\subset \mathbb{R}^{d} $, and two functions $ f:\mathcal{X}\to\mathbb{R} $ and
  $ g:\mathbb{R}\to\mathbb{R} $
  with $ \Vert f\Vert_{\mathrm{ReLU},\tau_{d}} $ and $ \Vert g\Vert_{\mathrm{ReLU},\tau_1} $
  less than $ \mathrm{poly}(d) $ such that the following holds. For every function $ h:\mathbb{R}^d\to\mathbb{R} $ in $ \mathcal{H}_{\mathrm{ReLU},\tau_d} $
  with $ \Vert h\Vert_{\mathrm{ReLU},\tau_d} \le C\exp(cd)$,
  \begin{equation*}
    \Vert h-g\circ f\Vert_{L^2(\mathbb{P})} \ge c .
  \end{equation*}
\end{proposition}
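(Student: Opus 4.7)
The plan is to follow the Eldan--Shamir / Daniely template for depth separation and reduce the question of approximating $g\circ f$ by an RKHS function of bounded norm to that of approximating it by a 2-layer ReLU network of bounded size and bounded weights; the exponential 2-layer lower bound from those works can then be invoked as a black box. The key bridge is a Maurey-type random-sampling argument that converts an RKHS norm budget $B$ into a network-complexity budget polynomial in $B$ and $1/\epsilon$.

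To make the bridge precise, recall that any $h\in\mathcal{H}_{\mathrm{ReLU},\tau_d}$ with $\Vert h\Vert_{\mathrm{ReLU},\tau_d}\le B$ admits a representation $h(x)=\int_{\mathbb{S}^d}\mathrm{ReLU}(\omega\cdot(x,1))u(\omega)\,\mathrm{d}\tau_d(\omega)$ with $\Vert u\Vert_{L^2(\tau_d)}\le B$. Drawing $\omega_1,\ldots,\omega_N$ i.i.d.\ from $\tau_d$ and setting $c_j=u(\omega_j)/N$, the 2-layer network $h_N(x)=\sum_{j=1}^N c_j\mathrm{ReLU}(\omega_j\cdot(x,1))$ is unbiased for $h$ with $\mathbb{E}\Vert h-h_N\Vert_{L^2(\mathbb{P})}^2\le B^2/N$ on the compact set $\mathcal{X}$. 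Taking $N\asymp B^2/\epsilon^2$ produces, with positive probability, a 2-layer ReLU network of that size with unit inner weights and outer-weight $\ell_1$-norm $O(B)$ that approximates $h$ to within $\epsilon$ in $L^2(\mathbb{P})$.

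Next, I would choose $f,g$ and $\mathbb{P}$ by importing the hard function from an existing separation result — Daniely's construction on $\mathbb{S}^{d-1}\times\mathbb{S}^{d-1}$ is the cleanest fit, since it is already phrased in terms of the arccos kernel and already produces the $\mathrm{poly}(d)$ vs.\ $\exp(d)$ gap we need. The task reduces to exhibiting the hard function as a composition $g\circ f$ in which $f$ and $g$ are both Lipschitz with a $\mathrm{poly}(d)$ Lipschitz constant on the relevant bounded domains: a convenient choice is to take $f$ to be a fixed linear contraction or low-order polynomial feature, and $g$ to be the outer oscillating function of the separation construction. The polynomial-norm membership $f\in\mathcal{H}_{\mathrm{ReLU},\tau_d}$ and $g\in\mathcal{H}_{\mathrm{ReLU},\tau_1}$ then follows from the explicit ReLU integral representation of Lipschitz functions (essentially Bach's construction for arccos kernels), which turns a Lipschitz constant $L$ into an RKHS norm bounded by $\mathrm{poly}(d)\cdot L$.

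Finally, I combine the pieces: if some $h$ with $\Vert h\Vert_{\mathrm{ReLU},\tau_d}\le C\exp(cd)$ satisfied $\Vert h-g\circ f\Vert_{L^2(\mathbb{P})}<c$ for a small constant $c$, the Maurey step applied with $\epsilon$ of order $c$ would produce a 2-layer ReLU network of size and outer-weight magnitude both $\exp(O(d))$ approximating $g\circ f$ to within $O(c)$, contradicting the 2-layer lower bound once $c$ is chosen small enough. The main obstacle is in the middle step: the hard composite function from Eldan--Shamir or Daniely must be presented so that each of its two pieces is a \emph{Lipschitz} function with polynomial constant (not merely realizable by a shallow network of polynomial weight $\ell_1$-norm), and the quantitative 2-layer lower bound must still apply in the regime of size $\exp(O(d))$ and weight magnitude $\exp(O(d))$ that our sampling argument produces. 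The remaining work is a concrete Lipschitz-norm computation on the chosen $f$ and $g$, plus standard constant-tracking through the random-sampling bound.
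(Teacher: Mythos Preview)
Your overall architecture matches the paper's: reduce the RKHS inapproximability to a 2-layer ReLU network inapproximability via a Maurey-type sampling bound, then invoke Daniely's exponential lower bound to derive a contradiction. That part is sound, and the paper does exactly this (its Corollary~\ref{cor:main} is precisely your sampling step, with the refinement that the outer weights come out individually bounded, not just in $\ell_1$).

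The genuine gap is your plan for placing the inner function $f$ in $\mathcal{H}_{\mathrm{ReLU},\tau_d}$ with $\mathrm{poly}(d)$ norm. You assert that Bach's ReLU integral representation ``turns a Lipschitz constant $L$ into an RKHS norm bounded by $\mathrm{poly}(d)\cdot L$.'' That is false in dimension $d$: Bach's approximation bound for the arccos kernel says an $L$-Lipschitz function on a radius-$r$ ball is approximated to error $\epsilon$ by an RKHS element of norm $M$ only when roughly $M\gtrsim (Lr/\epsilon)^{(d+3)/2}$, so a polynomial Lipschitz constant yields an \emph{exponential}-in-$d$ RKHS norm. Consequently you cannot certify $f(x)=x_{1:d}\cdot x_{d+1:2d}$ (or any nontrivial high-dimensional feature) via the Lipschitz route, and the ``remaining work'' you describe---a Lipschitz-norm computation---would not close the argument.

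The paper handles this step differently. It proves a dedicated lemma for polynomials of projections: using the spherical-harmonic Mercer expansion of the arccos-1 kernel and the explicit Taylor coefficients of $k(s)$, it shows $\Vert \alpha(\beta\cdot\tilde x)^p\Vert_{\tilde{\mathcal{H}}}\le \sqrt{2(d+1)\pi}\,\alpha p$ for $p=1$ or $p$ even. From this one assembles $x_{1:d}\cdot x_{d+1:2d}=\tfrac12\sum_j\bigl[(x\cdot(e_j+e_{j+d}))^2-(x\cdot e_j)^2-(x\cdot e_{j+d})^2\bigr]$ and reads off a $\mathrm{poly}(d)$ RKHS bound. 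For the outer function $g(t)=\sin(\pi d^3 t)$, which is one-dimensional, your Lipschitz-to-RKHS route does work and yields a $\mathrm{poly}(d)$ bound, as the paper also uses. So your proposal is correct except precisely at the step you flagged as the main obstacle; the fix is not a Lipschitz computation but an algebraic/spectral argument exploiting that $f$ is a low-degree polynomial in the coordinates.
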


The proof is based on the construction of \cite{Daniely2017}. We
first check that the functions considered in his work
belong to $\mathcal{H}_{\mathrm{ReLU},\tau_d}$ and $\mathcal{H}_{\mathrm{ReLU}
,\tau_1}$ with $ \mathrm{poly}(d) $ norm. The key lemma in this step,
Lemma~\ref{lem:proj_fn_norm}, shows that power functions of the projections
over a given direction belong to $\mathcal{H}_{\mathrm{ReLU},\tau_d}$ with
a small norm.
Then by Daniely's work, we know that such a composite function
can not be approximated
by any 2-layer ReLU networks with all the weights and the number of nodes
less than $ O(\exp(d)) $. On the other hand, we show in Corollary~\ref{cor:main}
that any functions in the RKHS with norm less than $O(\exp(d))$ can be
approximated by 2-layer ReLU networks with weights and number of nodes
less than $O(\exp(d))$. This completes the proof of
Proposition~\ref{prop:separation}. See Appendix~\ref{app:multi}
for details.

Let's define the RKHS norm for a vector valued function as the $2$-norm of the RKHS norm of its components;
that is, for $ f:\mathbb{R}^n\to\mathbb{R}^m $ in $\mathcal{H}^{\oplus m}$ ,
$ \Vert f\Vert^2_\mathcal{\mathcal{H}} \coloneqq \sum_{i=1}^m \Vert (f)_i\Vert_\mathcal{H}^2 $.
Proposition~\ref{prop:multi} shows that the composition of functions in $ \mathcal{H}_{\mathrm{ReLU},\tau} $
can always be efficiently approximated by a multilayer ReLU networks.

\begin{proposition}
  \label{prop:multi}
  Assume that for all $ 1\le i\le L+1 $, $ K_i $ is a compact set with
  radius $ r $ in $ \mathbb{R}^{m_i} $, among which $ K_1=\mathcal{X} $.
  Let $ B^{m_i} $ denote the unit ball in $ \mathbb{R}^{m_i} $.
  $ \mathbb{P} $ is a probability measure on $ K_1 $.
  $ f_i:K_{i}+sB^{m_{i}}\to K_{i+1} $ belongs to
  $ \mathcal{H}_{\mathrm{ReLU},\tau_{m_i}}^{\oplus m_{i+1}} $ with RKHS norm less than $ R_i $, for an $ s>0 $ and any
  $ 1\le i\le L $. Then for
  $ \epsilon>0 $, there exists an $ L $-layer neural networks $ g_{1:L} $ with $ m_{i+1}N_i $ nodes in each layer, where
  \[
    N_i = \frac{\prod_{j=i}^LR_j^2((r+s)^2+1)}{\epsilon^2}\,,
  \]
  such that
    \[\left(\int_{\mathcal{X}}|f_{1:L}-g_{1:L}|^2~\mathrm{d}\mathbb{P}\right)^{1/2}
    \le C\epsilon\,,\]
    where $C$ is a constant depending on $L,s,r,\{R_i\}_{i=1}^L$.
  Moreover, the Frobenius norm of all the weight matrices are bounded by constants depending on $ R_i,r,s,m_i $ and bias
  terms are bounded by $1$.
\end{proposition}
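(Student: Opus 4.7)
The plan is to approximate each layer $f_i$ separately by a one-hidden-layer ReLU network $g_i$ built from $N_i$ random ReLU features, and then to compose these approximations. The key inputs are (a) a per-layer random-features approximation bound for functions in $\mathcal{H}_{\mathrm{ReLU},\tau_{m_i}}$ with bounded RKHS norm, and (b) a Lipschitz bound on $f_i$ that propagates errors along the composition. The ``slack'' parameter $s$ is precisely what guarantees that the outputs of the approximations still lie in the domain of the next layer.

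First I would establish the per-layer approximation. Since each coordinate $(f_i)_k$ lies in $\mathcal{H}_{\mathrm{ReLU},\tau_{m_i}}$ with RKHS norm $\le R_i$, I would invoke the standard random-features / Maurey-type approximation (the one-layer case underlying Corollary~\ref{cor:main}): for $\omega_1,\dots,\omega_{N_i}$ drawn i.i.d.\ from $\tau_{m_i}$, there exist coefficients $c_{i,k,j}$ with $\sum_j c_{i,k,j}^2\le R_i^2/N_i$ (up to constants) such that
\begin{equation*}
\sup_{x\in K_i+sB^{m_i}}\Big|(f_i)_k(x)-\sum_{j=1}^{N_i} c_{i,k,j}\,\mathrm{ReLU}(\omega_j\cdot(x,1))\Big|\le \frac{R_i\sqrt{(r+s)^2+1}}{\sqrt{N_i}},
\end{equation*}
with high probability. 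The factor $(r+s)^2+1$ arises from the uniform bound $|\mathrm{ReLU}(\omega\cdot(x,1))|\le\sqrt{(r+s)^2+1}$ valid on the enlarged domain $K_i+sB^{m_i}$ since $|\omega|=1$. Stacking the $m_{i+1}$ coordinates yields a vector-valued approximation $g_i$ implemented by $m_{i+1}N_i$ ReLU nodes, with $\|f_i-g_i\|_{\infty}\le R_i\sqrt{m_{i+1}((r+s)^2+1)/N_i}$.

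Second I would run the composition/error-propagation argument. Since $f_j\in\mathcal{H}_{\mathrm{ReLU},\tau_{m_j}}^{\oplus m_{j+1}}$ with norm $\le R_j$ and $\mathrm{ReLU}$ is $1$-Lipschitz, a Cauchy--Schwarz computation shows each $f_j$ is $R_j$-Lipschitz on its domain (for each coordinate $(f_j)_k(x)=\int g_{j,k}(\omega)\mathrm{ReLU}(\omega\cdot(x,1))\,d\tau$, so $|(f_j)_k(x)-(f_j)_k(y)|\le\|g_{j,k}\|_{L^2}|x-y|$). Then I would prove by induction on $\ell=1,\dots,L$ the telescoping bound
\begin{equation*}
\|f_{1:\ell}(x)-g_{1:\ell}(x)\|\;\le\;\sum_{i=1}^{\ell}\Big(\prod_{j=i+1}^{\ell}R_j\Big)\,\|f_i-g_i\|_{\infty},
\end{equation*}
where each step uses Lipschitz continuity of $f_{i+1:\ell}$ applied to $g_{1:i}(x)$ (which we inductively arrange to lie in $K_{i+1}+sB^{m_{i+1}}$) together with the triangle inequality. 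Substituting the per-layer error and the choice $N_i=\prod_{j=i}^{L}R_j^2((r+s)^2+1)/\epsilon^2$ makes each of the $L$ summands at most $\epsilon$ (up to $m_{i+1}$ factors absorbed into the final $C$), so the total error is $C\epsilon$ as claimed.

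The main obstacle I anticipate is making the domain-containment argument rigorous: the induction needs $g_{1:i}(x)\in K_{i+1}+sB^{m_{i+1}}$ so that $f_{i+1}$ can be evaluated and its Lipschitz bound applied. This forces the per-layer bound to be in $L^\infty$ (not merely $L^2$) and forces a choice of $\epsilon$ small relative to $s$ so that the running approximation error never exits the $s$-tube; handling the small-probability failure event of the random features draw must be done uniformly over all $L$ layers by a union bound, which is why the final constant $C$ depends on $L,s,r,\{R_i\}$. The weight bounds are then immediate: the outer coefficients of $g_i$ satisfy $\sum_{j,k}c_{i,k,j}^2\lesssim m_{i+1}R_i^2/N_i$, bounding the Frobenius norm of the $i$-th weight matrix, while the inner weights are unit vectors $\omega_j\in\mathbb{S}^{m_i}$ whose last coordinate gives a bias bounded by $1$.
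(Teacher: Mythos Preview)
Your per-layer estimate is the gap. Corollary~\ref{cor:main} (and the Maurey argument behind Theorem~\ref{thm:main}) give an $L^2(\mathbb{P})$ bound, not a uniform bound: Maurey's lemma is applied in the Hilbert space $L^2(\mathbb{P})$, so the conclusion is
\[
\Big\|(f_i)_k-\sum_{j=1}^{N_i}c_{i,k,j}\,\mathrm{ReLU}(\omega_j\cdot(\cdot,1))\Big\|_{L^2(\mathbb{P})}\le \frac{R_i\sqrt{(r+s)^2+1}}{\sqrt{N_i}}\,,
\]
and nothing better in $L^\infty$ is available at that rate. You correctly flag at the end that your induction ``forces the per-layer bound to be in $L^\infty$'', but that is precisely what you do not have, so the telescoping argument as written does not go through: with only $L^2$ control you cannot guarantee $g_{1:i}(x)\in K_{i+1}+sB^{m_{i+1}}$ for all $x$, and hence cannot evaluate $f_{i+1}$ or use its Lipschitz bound pointwise.

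The paper resolves exactly this obstruction without ever leaving $L^2$. It builds $g_i$ via Theorem~\ref{thm:main} against the push-forward measure $g_{1:i-1}(\mathbb{P})$, and carries along an inductively defined good set
\[
S_{\ell+1}=S_\ell\cap\{x:|g_{1:\ell}(x)-f_{1:\ell}(x)|\le s\}\,,
\]
on which domain-containment holds by construction. Markov's inequality applied to the $L^2$ bound on $S_\ell$ controls $\mathbb{P}(S_\ell\setminus S_{\ell+1})$; the triangle inequality and $R_{\ell+1}$-Lipschitzness of $f_{\ell+1}$ then propagate the $L^2$ error on $S_{\ell+1}$. Finally, on $\mathcal{X}\setminus S_L$ one uses the crude sup bounds $|f_{1:L}|\le r$ and $|g_{1:L}|\le \sqrt{r^2+1}\prod_i R_i$ together with the small measure of the complement. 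This is the missing idea in your plan: replace the unattainable $L^\infty$ step by an $L^2$+Markov restriction to a shrinking good set, and pay for the complement separately.
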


        \section{Experiments}
\label{sec:experiments}

We first compared the performance of the random ReLU features with the popular
random Fourier features with the Gaussian feature distribution on four synthetic
datasets (see Appendix~\ref{app:graphs}) and three real datasets:
MNIST (\cite{Lecun}), adult and covtype (\cite{Dheeru2017}).

For all four synthetic datasets,
we used 20 random features for each method; for real datasets we used
2000 random features. We used hinge loss in binary classification tasks, and logistic loss in multi-class classifiction tasks.
We chose to constrain the 2-norm of the outer weights by a large constant ($10^3$ for
synthetic datasets and $10^4$ for real datasets) as described in
Section~\ref{sec:learn-rate}.
The optimization method was the plain stochastic
gradient descent and the model was implemented using
Tensorflow \citep{MartinAbadi2015}. The learning rate and bandwidth parameters were
screened carefully for both models through grid search.

In Figure~\ref{fig:screen}, we present the dependence of two methods
on the bandwidth parameters in the screening step.
Each point displays the best 5-fold cross validation accuracy among all
learning rates. We can see that the performance
of the random Fourier features with Gaussian distribution is more sensitive to
the choice of bandwidth than the random ReLU features method.
\begin{figure}[!htb]
        \centering
        \includegraphics[width=0.33\textwidth]{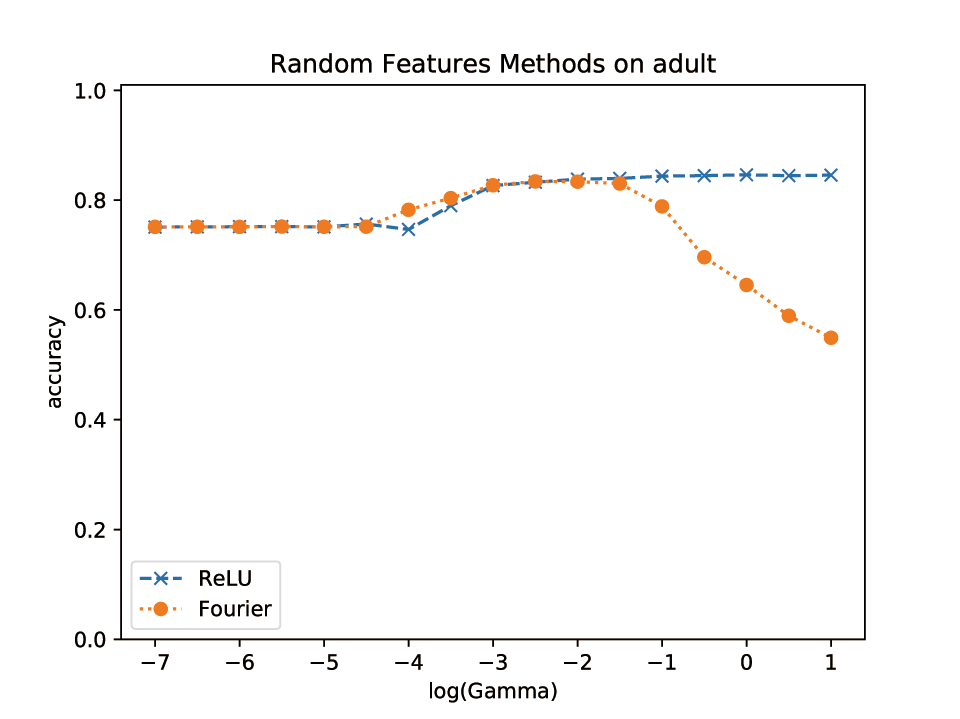}%
        \hfill%
        \includegraphics[width=0.33\textwidth]{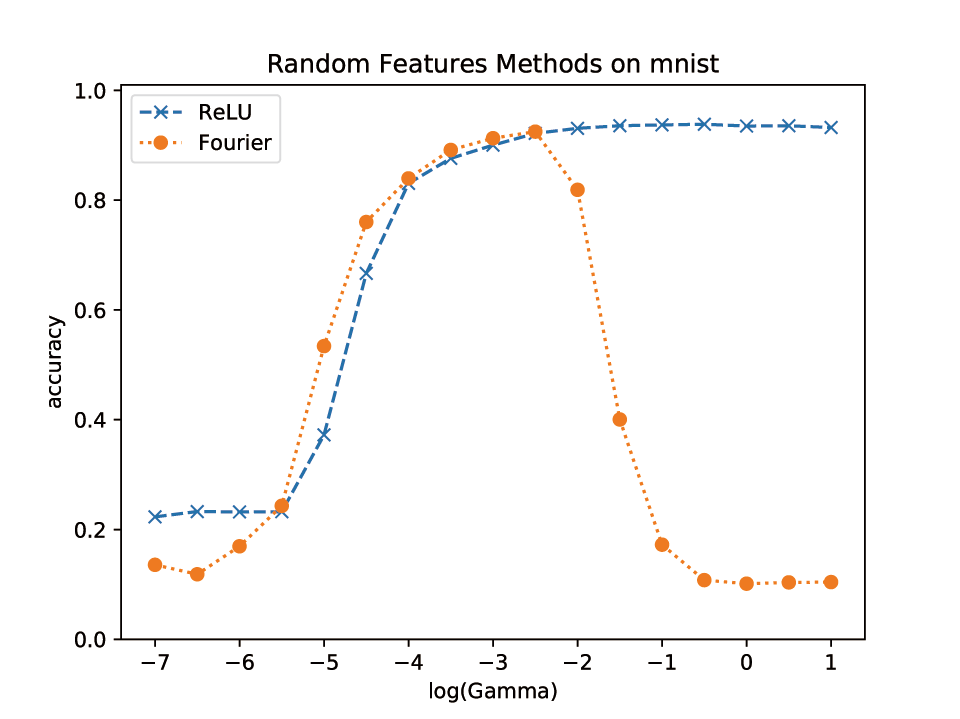}
        \hfill%
        \includegraphics[width=0.33\textwidth]{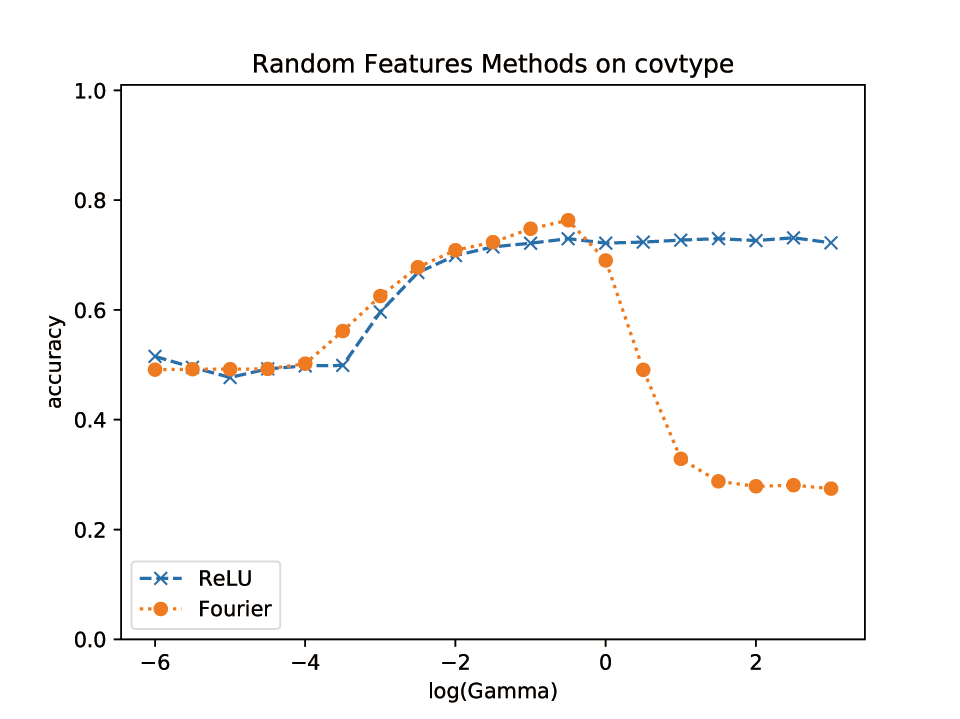}
    \caption{Cross validation accuracy of random Fourier features and
    random ReLU features. Left: adult. Middle: mnist. Right: covtype.}
    \label{fig:screen}
\end{figure}
We list the accuracy and training time for two methods in
Table~\ref{tab:experiments}.
For all the datasets, the random ReLU features method requires
shorter training time. By checking the feature vectors, we found that
half of coordinates of the random ReLU feature vectors are zeros, while none of coordinates of
the random Fourier feature vectors are zeros. The random ReLU features method outperforms the random Fourier
features with higher accuracy on adult and MNIST datasets.
Its performance is similar to the random Fourier features method on sine, checkboard
and square datasets. However, its performance on strips and covtype dataset
is significantly worse.
\begin{table}[htb]
  \caption{Left: accuracy of random ReLU features versus random Fourier features.
        Right: training time of random ReLU features versus random Fourier features;
        The unit is second. The results are averages over 10 trials.}
  \label{tab:experiments}
  \begin{tabular}{lcc}
    \toprule
    & Fourier & ReLU \\
    \midrule
    sine & 0.993(0.007) & 0.984(0.005) \\
    strips & 0.834(0.084) & 0.732(0.006) \\
    square & 0.948(0.038) & 0.934(0.015) \\
    checkboard & 0.716(0.045) & 0.743(0.027) \\
    adult & 0.838(0.002) & 0.846(0.002) \\
    mnist & 0.937(0.001) & 0.951(0.001) \\
    covtype & 0.816(0.001) & 0.769(0.002) \\
    \bottomrule
  \end{tabular}%
  \hfill%
  \begin{tabular}{lcc}
    \toprule
    & Fourier & ReLU \\
    \midrule
    sine & 1.597(0.050) & 1.564(0.052) \\
    strips & 1.598(0.056) & 1.565(0.052) \\
    square & 1.769(0.061) & 1.743(0.057) \\
    checkboard & 1.581(0.078) & 1.545(0.073) \\
    adult & 6.648(0.181) & 5.849(0.216) \\
    mnist & 70.438(0.321) & 69.229(1.080) \\
    covtype & 125.719(0.356) & 112.613(1.558) \\
    \bottomrule
  \end{tabular}
\end{table}

The depth separation and multi-layer approximation results in
Section~\ref{sec:barron} only prove the {\em existence} of the
advantage of deeper models. It is not clear whether we can {\em find}
good multi-layer approx\-imators with significantly better performance
than shallow models.
Therefore, we designed a synthetic dataset that is supposedly hard to learn by shallow models
according to the depth separation result, and used it to show the performance gap between deep and shallow models.
The data uniformly distribute over $\mathbb{S}^1\times\mathbb{S}^1$ and
the labels are generated by the target function $\sin(8\pi(x_1x_3+x_2x_4))$;
see Figure~\ref{fig:daniely-data}.
For this regression task, we trained three types of models:
random ReLU features models, 2-layer neural networks, and 3-layer neural networks.
The random ReLU features models and 2-layer neural networks have exactly the same structure.
The only difference is whether the inner weights are randomly chosen or trained together with the outer weights.
The number of nodes in the hidden layer of the random ReLU features models
and 2-layer neural networks ranges from 20 to 5120 with each level doubling the
preceding one.
The 3-layer neural networks are fully connected with the equal width in each layer.
To make a fair comparison, we fixed the total number of
parameters of the 3-layer neural networks
to be equal to the shallow models at each level.
We used the adam optimizer in Tensorflow for the training of three types of
models with learning rates screened using holdout validation.
\begin{figure}[!htb]
        \centering
        \includegraphics[width=0.33\textwidth]{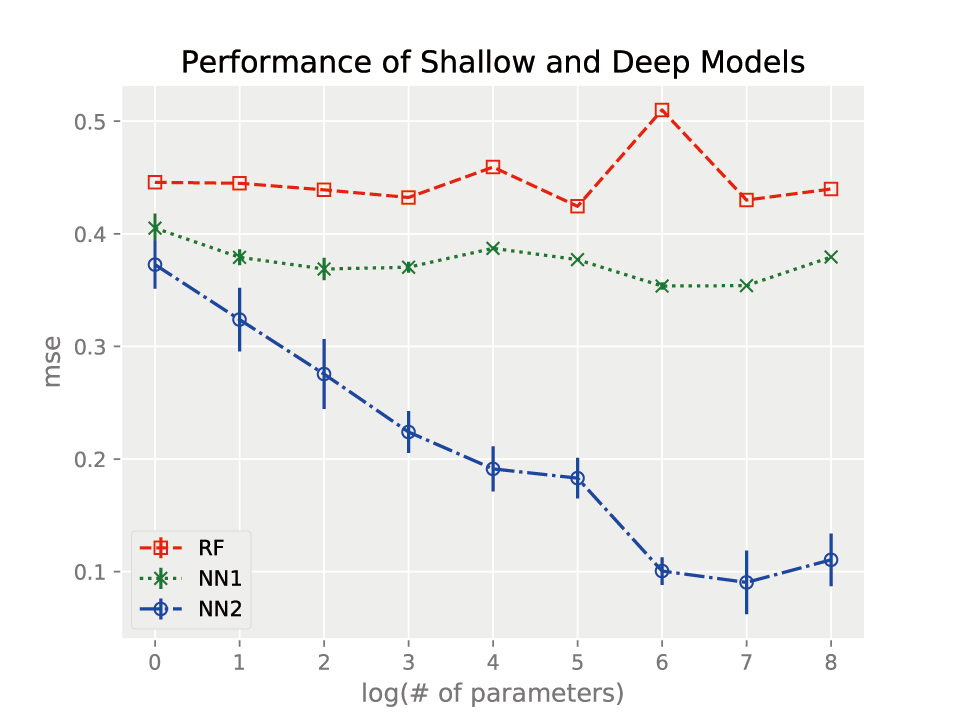}%
        \hfill%
        \includegraphics[width=0.33\textwidth]{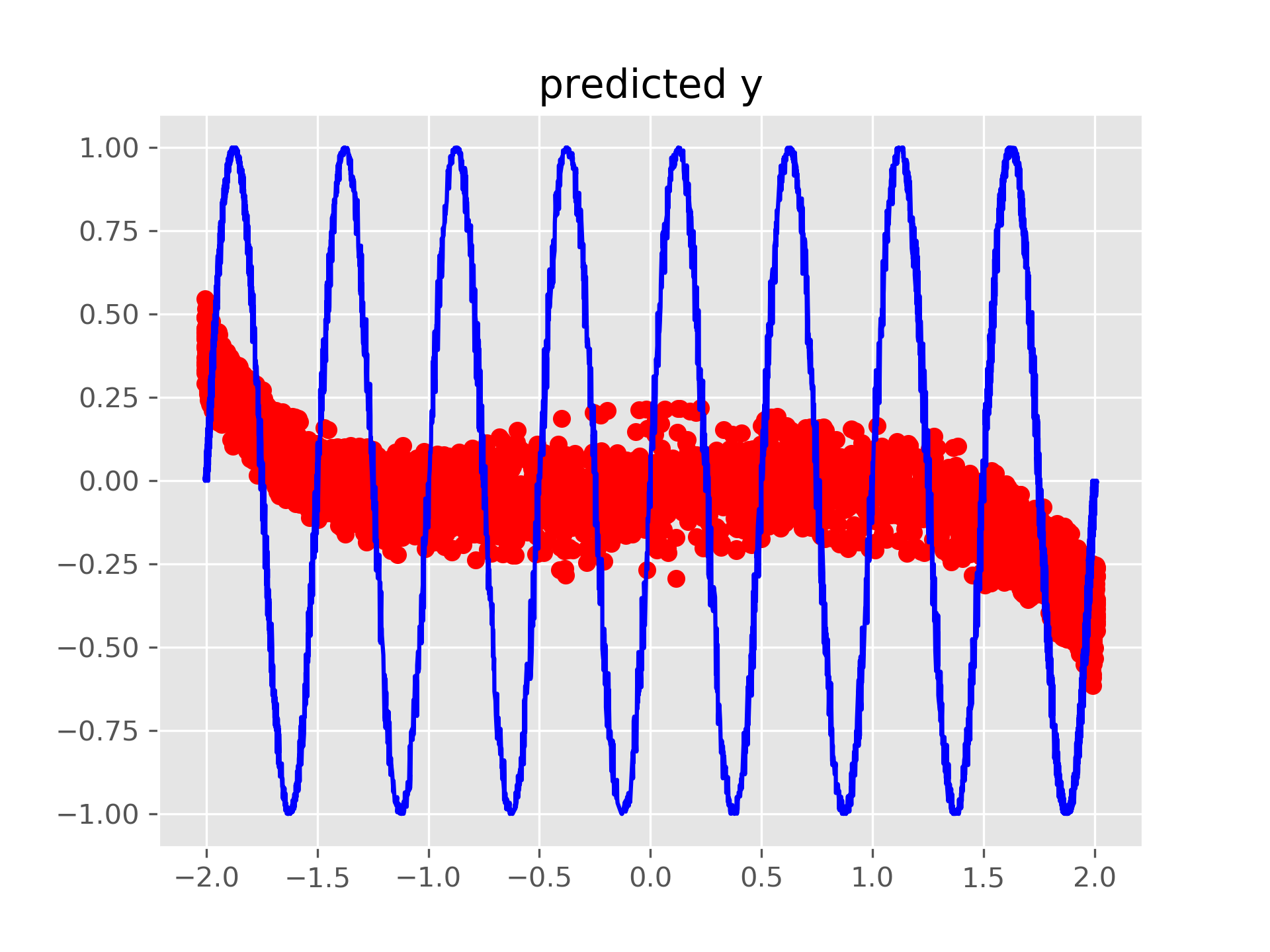}
        \hfill%
        \includegraphics[width=0.33\textwidth]{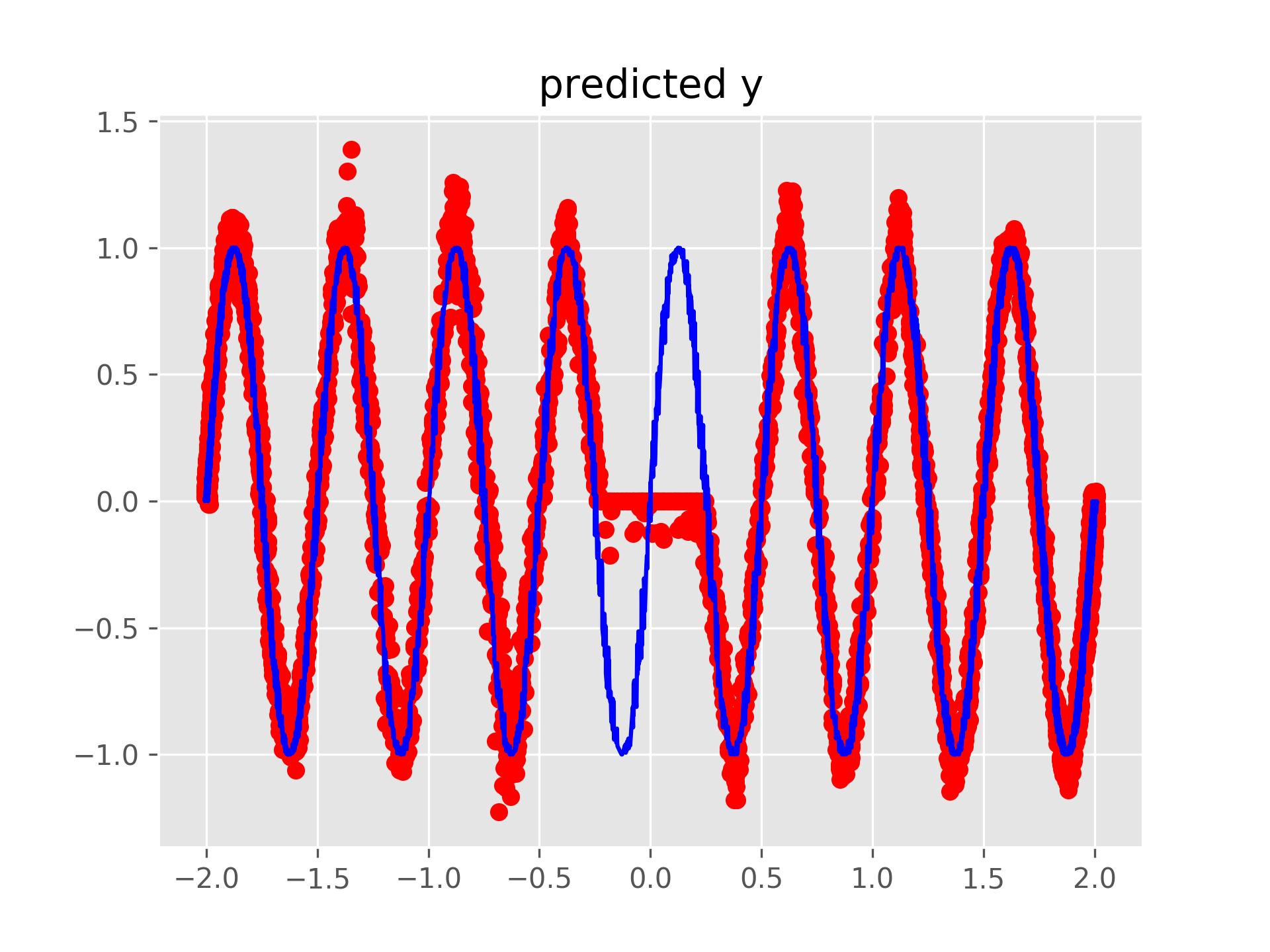}
    \caption{Performance of the deep and shallow models in the regression task. Left: mean squared error of three models at different levels of total parameters. The results are averages over 10 trials. Middle: the predicted labels (red) by the best random ReLU features model compared to the true labels (blue) plotted against the dot product $x_1x_3+x_2x_4$. Right: the predicted labels (red) by the best 3-layer neural networks compared to the true labels (blue).}
    \label{fig:depth}
\end{figure}
From Figure~\ref{fig:depth}, we can see that 3-layer neural networks consistently
achieve significantly better performance whereas the gap between
the 2-layer neural networks and random ReLU is not very large.
By plotting the predicted labels on test dataset, we can see that the 3-layer neural network with the best performance indeed learns a
function closer to the target function. In contrast, the random ReLU
features model learns a more regular function and does not fully adapt to
the rapid oscillation of the target function. Considering that 2-layer neural networks have much more
adjustable parameters than random ReLU models, it is surprising
that the 2-layer neural networks' performance is not significantly
better than the random ReLU.
This implies that the structure of models has a more important
impact on the performance than the number of adjustable parameters.
We also run experiments based on Eldan and Shamir's construction and observe
similar phenomenon; see Appendix~\ref{app:graphs}.
The code of all the experiments can be found at:
\url{https://github.com/syitong/randrelu}

\section{Conclusion}

The study of universality of random features provides
theoretical foundation of designing new random features
algorithms. By comparing with random Fourier features, we believe that random
ReLU features can bring improvement in computational cost in many tasks.
The depth separation result shows the drawbacks of shallow models.
But designing random features with composition structure is worth studying further.
The performance comparison between random ReLU features models and
2-layer ReLU networks shows the possible trade-off
between the number of random features
and the number of total trainable parameters, which is interesting
from the theoretical viewpoint since the optimality is
always achievable for random features methods.

        \clearpage

        \clearpage
        \onecolumn
        \setcounter{page}{1}
        \appendix
\section{Proof of Universality of Random Features}
\label{app:universality}

To show that a subset is dense in a Banach space,
we need only consider its annihilator as described by the following lemma.
\begin{lemma}
  \label{lem:annihilator}
  For a Banach space $ \mathcal{B} $ and its subset $ U $, the linear
  span of $ U $ is dense in $ \mathcal{B} $ if and only if $ U^{\bot} $,
  the annihilator of $ U $, is $ \{ 0 \} $.
\end{lemma}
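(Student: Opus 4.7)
The plan is to prove both directions by invoking standard tools from functional analysis, with the Hahn--Banach theorem doing the heavy lifting in the nontrivial direction. Recall that $U^\bot \subset \mathcal{B}^*$ consists of continuous linear functionals $\ell$ with $\ell(u)=0$ for every $u\in U$.

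For the forward direction, I would argue by continuity. Suppose $\mathrm{span}(U)$ is dense in $\mathcal{B}$ and fix any $\ell\in U^\bot$. By linearity $\ell$ vanishes on all of $\mathrm{span}(U)$, and by continuity it then vanishes on the closure $\overline{\mathrm{span}(U)} = \mathcal{B}$. Hence $\ell = 0$, so $U^\bot = \{0\}$. This half is essentially bookkeeping and needs no deep tools.

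For the reverse direction, I would proceed by contrapositive. Suppose $\mathrm{span}(U)$ is \emph{not} dense, so $V \coloneqq \overline{\mathrm{span}(U)}$ is a proper closed subspace of $\mathcal{B}$. Pick any $x_0\in\mathcal{B}\setminus V$; since $V$ is closed, $d \coloneqq \mathrm{dist}(x_0,V) > 0$. By the Hahn--Banach theorem (e.g.\ the extension form applied to the functional $\alpha x_0 + v \mapsto \alpha d$ on $\mathbb{R}x_0 \oplus V$, or the geometric form separating $\{x_0\}$ from $V$), there exists a continuous linear functional $\ell\in\mathcal{B}^*$ with $\ell|_V \equiv 0$ and $\ell(x_0) = d \neq 0$. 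Then $\ell\in U^\bot$ but $\ell\neq 0$, so $U^\bot \neq \{0\}$. Contrapositively, $U^\bot = \{0\}$ forces $\mathrm{span}(U)$ to be dense.

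The only nontrivial step is the Hahn--Banach construction of a separating functional, and this is entirely standard (see, e.g., Lax's \emph{Functional Analysis}, which is already cited in the bibliography). No special properties of $\mathcal{B}$ beyond it being a Banach space are required, so the lemma as stated follows immediately and can be invoked later when we take $\mathcal{B}=C(\mathcal{X})$ and identify $U^\bot$ with signed Radon measures via the Riesz representation theorem.
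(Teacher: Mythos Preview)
Your proof is correct and matches the paper's approach: the paper does not spell out a proof but simply remarks that the lemma follows from the Hahn--Banach theorem via Theorem~8 in Chapter~8 of Lax's \emph{Functional Analysis}, which is precisely the standard argument you have written out in full.
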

The proof can be easily derived from Theorem~8 in Chapter~8 of \cite{Lax2002}.
It is a consequence of Hahn-Banach theorem.
The dual space of $ C(\mathcal{X}) $ is the space of all signed measures
equipped with the total variation norm, denoted by $ M(\mathcal{X}) $
(see Theorem~14 in Chapter~8 of \cite{Lax2002}). As the consequence of
Lemma~\ref{lem:annihilator} and the duality between $ C(\mathcal{X}) $ and
$ M(\mathcal{X}) $, \cite{Micchelli2006} use the following useful criteria
for justifying the dense subset of $ C(\mathcal{X}) $.
\begin{lemma}
  \label{lem:universal}
  $ \mathcal{F}\subset C(\mathcal{X}) $ is universal
  if and only if for any signed measure $ \nu $,
  \begin{equation*}
    \int_{\mathcal{X}} f(x)~\mathrm{d}\nu(x) = 0 \quad\forall f\in \mathcal{F}
    \implies \nu=0\,.
  \end{equation*}
\end{lemma}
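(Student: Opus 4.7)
The plan is to deduce Lemma~\ref{lem:universal} from Lemma~\ref{lem:annihilator} combined with the Riesz representation of the dual of $C(\mathcal{X})$. The only content of the statement is a translation of the annihilator condition into an integral condition against signed measures, so the proof should be short.

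First, I would reduce density to the annihilator. By Lemma~\ref{lem:annihilator} applied with $\mathcal{B}=C(\mathcal{X})$ and $U=\mathcal{F}$, the linear span of $\mathcal{F}$ is dense in $C(\mathcal{X})$ if and only if $\mathcal{F}^{\perp}=\{0\}$, where the annihilator is taken inside the topological dual $C(\mathcal{X})^{*}$. The transition from the linear span to $\mathcal{F}$ itself is free: by linearity of any bounded linear functional $\ell$, the vanishing of $\ell$ on $\mathcal{F}$ already forces it to vanish on $\mathrm{span}(\mathcal{F})$, so $\mathcal{F}^{\perp}=(\mathrm{span}\,\mathcal{F})^{\perp}$, and in the applications of this lemma (to RKHSs) $\mathcal{F}$ is already a linear subspace, so ``universal'' and ``dense linear span'' coincide.

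Next, I would identify the dual. The excerpt already quotes Theorem~14 in Chapter~8 of \cite{Lax2002}, which gives the Riesz representation $C(\mathcal{X})^{*}\cong M(\mathcal{X})$: every continuous linear functional on $C(\mathcal{X})$ is of the form $f\mapsto\int_{\mathcal{X}}f\,\mathrm{d}\nu$ for a unique signed measure $\nu\in M(\mathcal{X})$, with $\|\ell\|$ equal to the total variation $|\nu|(\mathcal{X})$. Under this identification, the condition $\ell\in\mathcal{F}^{\perp}$ becomes exactly $\int_{\mathcal{X}}f\,\mathrm{d}\nu=0$ for all $f\in\mathcal{F}$, and $\ell=0$ becomes $\nu=0$. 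Substituting this into $\mathcal{F}^{\perp}=\{0\}$ gives precisely the right-hand side of Lemma~\ref{lem:universal}.

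Combining the two steps closes the equivalence in both directions. The forward direction (universality implies the integral criterion) is the easy half: if $\mathcal{F}$ is dense and $\int f\,\mathrm{d}\nu=0$ for all $f\in\mathcal{F}$, then continuity of the functional $g\mapsto\int g\,\mathrm{d}\nu$ extends the vanishing to all of $C(\mathcal{X})$, forcing $\nu=0$ by Riesz. The reverse direction is the one that genuinely uses Hahn–Banach through Lemma~\ref{lem:annihilator}: if $\mathrm{span}(\mathcal{F})$ were not dense, Hahn–Banach would produce a nonzero functional vanishing on $\mathcal{F}$, hence a nonzero signed measure $\nu$ with $\int f\,\mathrm{d}\nu=0$ for all $f\in\mathcal{F}$, contradicting the hypothesis. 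There is no real obstacle here; the only care needed is to ensure that the Riesz identification is applied with the correct version of the dual (signed regular Borel measures on the compact set $\mathcal{X}$) and to note the span/annihilator coincidence so that the lemma is stated directly in terms of $\mathcal{F}$.
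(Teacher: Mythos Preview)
Your proposal is correct and matches the paper's own justification: the paper does not give a standalone proof of Lemma~\ref{lem:universal} but states it as a direct consequence of Lemma~\ref{lem:annihilator} together with the Riesz identification $C(\mathcal{X})^{*}\cong M(\mathcal{X})$, which is exactly the two-step argument you spell out. Your remark about the span/annihilator coincidence (and that in the applications $\mathcal{F}$ is already a linear subspace) is a useful clarification that the paper leaves implicit.
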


When the kernel or respectively the activation function is continuous,
the universality of the RKHS or that of neural nets can be established
by applying Lemma~\ref{lem:universal}.

\cite{Bach2017a} proved the following useful theorem, which we will use to
verify the finite approximatability of RKHSs induced by various random features.
\begin{theorem}[Bach's approximation theorem]
  \label{thm:app-err}
  Let
  \begin{equation*}
    d_{\max}(1,\epsilon) := \sup_{\omega\in\mathbb{R}^{d+1}}\Vert (\Sigma + \epsilon I)^{-1/2}
  \phi(\cdot;\omega)\Vert_{L_2(\mathbb{P})}^2\,,
  \end{equation*}
  where $ \Sigma:L_2(\mathbb{P})\to L_2(\mathbb{P}) $ is defined by
  \[ \Sigma f = \int k_{\sigma,\mu}(x,y)f(y)~\mathrm{d}\mathbb{P}(y)\,.\]
  For $ \delta,\epsilon>0 $, when
  \begin{equation}
    N\ge 5d_{\max} (1,\epsilon)\log\left(\frac{16d_{\max} (1,\epsilon)}{\delta}\right)\,,
  \end{equation}
  with probability over $\phi_N$ greater than $1-\delta$,
  \begin{equation}
    \sup_{\Vert f\Vert_{\phi,\mu}\le 1}\inf_{|\beta|\le 2}
    \Vert f-\beta\cdot\phi_N(\cdot)\Vert_{L^2(\mathbb{P})} \le 2\sqrt{\epsilon}\,.
  \end{equation}
\end{theorem}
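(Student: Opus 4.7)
The plan is to follow the operator-theoretic approach that underlies quadrature-style bounds for RKHS approximation by random features. Any $f\in\mathcal{H}_{\phi,\mu}$ with $\Vert f\Vert_{\phi,\mu}\le 1$ admits a representation $f(x)=\int_\Omega \phi(x;\omega)g(\omega)\,\mathrm{d}\mu(\omega)$ for some $g$ with $\Vert g\Vert_{L^2(\mu)}\le 1$. Given an iid sample $\omega_1,\ldots,\omega_N\sim\mu$, I would seek an approximator of the form $\sum_{i=1}^N\beta_i\phi(\cdot;\omega_i)$ by solving the ridge-regularized least-squares problem $\hat\beta=\operatorname*{argmin}_{\beta\in\mathbb{R}^N}\Vert f-\beta\cdot\phi_N\Vert_{L^2(\mathbb{P})}^2+N\epsilon|\beta|^2$, whose minimizer has a closed form involving the empirical feature-covariance operator $\hat T_N=\frac{1}{N}\sum_i\phi(\cdot;\omega_i)\otimes\phi(\cdot;\omega_i)$ on $L^2(\mathbb{P})$. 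The regularizer is introduced precisely to keep $|\hat\beta|$ controlled uniformly in $f$.

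The heart of the argument is a reduction from the target bound to concentration of operators. Identifying the population analogue of $\hat T_N$ with $\Sigma$, I would show that whenever the preconditioned discrepancy $\Delta_N:=(\Sigma+\epsilon I)^{-1/2}(\hat T_N-\Sigma)(\Sigma+\epsilon I)^{-1/2}$ satisfies $\Vert\Delta_N\Vert_{\mathrm{op}}\le 1/2$, one can substitute the closed form for $\hat\beta$ into the error expression, decompose it into a ridge-bias term and a variance term, and use the operator inequality to control the latter, yielding the uniform bounds $|\hat\beta|\le 2$ and $\Vert f-\hat\beta\cdot\phi_N\Vert_{L^2(\mathbb{P})}\le 2\sqrt{\epsilon}$. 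The $\sqrt{\epsilon}$ rate is the ridge-bias contribution (it is what the population ridge solution would already achieve), and the factor of $2$ absorbs the loss incurred when replacing $\Sigma$ by $\hat T_N$.

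The main obstacle is showing $\Vert\Delta_N\Vert_{\mathrm{op}}\le 1/2$ with probability at least $1-\delta$, which I would handle with a non-commutative Bernstein inequality for sums of independent self-adjoint operators. Each summand $X_i=\frac{1}{N}(\Sigma+\epsilon I)^{-1/2}\phi(\cdot;\omega_i)\otimes\phi(\cdot;\omega_i)(\Sigma+\epsilon I)^{-1/2}$ is positive rank-one, and its operator norm is bounded almost surely by $\frac{1}{N}d_{\max}(1,\epsilon)$ directly from the definition of $d_{\max}$, which is exactly how this quantity enters the proof. A parallel computation bounds the matrix variance $\Vert\sum_i\mathbb{E}[X_i^2]\Vert_{\mathrm{op}}$ by $\frac{1}{N}d_{\max}(1,\epsilon)$ using $\Sigma(\Sigma+\epsilon I)^{-1}\preceq I$. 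Plugging these into operator Bernstein (e.g.\ Minsker's version suitable for trace-class perturbations) produces the threshold $N\ge 5\,d_{\max}(1,\epsilon)\log(16\,d_{\max}(1,\epsilon)/\delta)$ after tracking the constants. The tricky bookkeeping lies in pinning down those precise constants $5$, $16$, $2$, and $2\sqrt{\epsilon}$, and in verifying the trace-class hypothesis, which follows from the paper's growth assumption $|\sigma(z)|\le K|z|^k+M$ together with the moment bound on $\mu$.
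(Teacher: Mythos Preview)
The paper does not prove this statement at all: it is quoted verbatim as a result of \cite{Bach2017a} and used as a black box in the proofs of Corollary~\ref{cor:opt-universal}, Corollary~\ref{cor:bounded}, and Proposition~\ref{prop:gen-error}. There is therefore no ``paper's own proof'' to compare your proposal against.

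For what it is worth, your sketch is a faithful outline of Bach's original argument: the ridge-regularized empirical approximator, the reduction to controlling the preconditioned operator deviation $(\Sigma+\epsilon I)^{-1/2}(\hat T_N-\Sigma)(\Sigma+\epsilon I)^{-1/2}$, and the operator Bernstein inequality with the almost-sure bound $d_{\max}(1,\epsilon)/N$ on each rank-one summand are exactly the ingredients of the proof in \cite{Bach2017a}. One minor remark: your last sentence suggests the trace-class hypothesis relies on the growth condition $|\sigma(z)|\le K|z|^k+M$ and the moment bound on $\mu$, but in Bach's formulation what is actually needed is that $\Sigma$ be trace-class on $L^2(\mathbb{P})$, which follows once $\int k_{\sigma,\mu}(x,x)\,\mathrm{d}\mathbb{P}(x)<\infty$; the growth and moment assumptions in the present paper are one sufficient condition for this, but the theorem as stated here is more general and does not invoke them.
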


We first prove Theorem~\ref{thm:universality}, which is a very general
result on the universality of RKHS induced by random features. It is a
combination of several functional analysis results and the result of \cite{Leshno1993}.
\begin{theorem*}[\ref{thm:universality}]
  Assume that $ |\sigma(z)| \le K|z|^k + M $
  for some $ M,K\ge 0 $ and $ k\in \mathbb{N}_0 $. Let
  $ \mu $ be a probability distribution whose support is dense in
  $ \mathbb{R}^{d+1} $ with
  $ \int\vert\omega\vert^{2k}~\mathrm{d}\mu(\omega) \le M_2 $.
        If $ \sigma $ is not a polynomial, the RKHS $ \mathcal{H}_{\sigma,\mu} $ is universal.
\end{theorem*}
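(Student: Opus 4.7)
The plan is to apply the annihilator criterion (Lemma~\ref{lem:universal}): it suffices to show that any signed Borel measure $\nu$ on $\mathcal{X}$ that annihilates every $f\in\mathcal{H}_{\sigma,\mu}$ must be the zero measure. Every $f\in\mathcal{H}_{\sigma,\mu}$ has the representation $f(x)=\int_{\mathbb{R}^{d+1}}\sigma(\omega\cdot(x,1))g(\omega)\,\mathrm{d}\mu(\omega)$ for some $g\in L^2(\mu)$, so the hypothesis $\int f\,\mathrm{d}\nu=0$ reads as a double integral over $\mathcal{X}\times\mathbb{R}^{d+1}$.

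The first technical step is to swap the order of integration via Fubini. Since $\mathcal{X}$ lies in the ball of radius $r$, $|\omega\cdot(x,1)|\le |\omega|\sqrt{r^2+1}$ on $\mathcal{X}$, so the growth bound $|\sigma(z)|\le K|z|^k+M$ yields $|\sigma(\omega\cdot(x,1))|\le K'|\omega|^k+M$ for a constant $K'=K(r^2+1)^{k/2}$. Combined with $|\nu|(\mathcal{X})<\infty$, the moment bound $\int |\omega|^{2k}\,\mathrm{d}\mu\le M_2$, and Cauchy–Schwarz applied to $\int |\omega|^k|g(\omega)|\,\mathrm{d}\mu$, the integrand is absolutely integrable on the product space. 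Fubini then gives
\[
0 = \int_{\mathbb{R}^{d+1}} g(\omega) F(\omega)\,\mathrm{d}\mu(\omega), \qquad F(\omega):=\int_{\mathcal{X}}\sigma(\omega\cdot(x,1))\,\mathrm{d}\nu(x),
\]
for every $g\in L^2(\mu)$. Varying $g$ forces $F=0$ $\mu$-almost everywhere.

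Next I upgrade this almost-everywhere vanishing to everywhere vanishing on $\mathbb{R}^{d+1}$. The function $F$ is continuous in $\omega$: fixing a compact neighbourhood of any $\omega_0$, the integrand is dominated by an integrable majorant (using the same polynomial growth bound, which is bounded on bounded $\omega$-sets), so dominated convergence yields continuity of $F$. Since $F$ vanishes $\mu$-almost everywhere and the support of $\mu$ is dense in $\mathbb{R}^{d+1}$, continuity gives $F(\omega)=0$ for every $\omega\in\mathbb{R}^{d+1}$.

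Finally I invoke the Leshno–Lin–Pinkus–Schocken theorem: since $\sigma$ is continuous and not a polynomial, the linear span of $\{x\mapsto\sigma(\omega\cdot(x,1)):\omega\in\mathbb{R}^{d+1}\}$ is dense in $C(\mathcal{X})$ in the supremum norm. The identity $F\equiv 0$ means $\nu$ annihilates every element of this spanning set, and hence, by linearity and uniform density, annihilates all of $C(\mathcal{X})$; the Riesz representation that identifies $C(\mathcal{X})^*$ with $M(\mathcal{X})$ then forces $\nu=0$, completing the proof via Lemma~\ref{lem:universal}. I expect the main obstacle to be setting up the Fubini step cleanly under the polynomial-growth/moment assumptions; once that passes, continuity of $F$ plus density of $\mathrm{supp}(\mu)$ plus Leshno's theorem finish the argument in a largely formulaic way.
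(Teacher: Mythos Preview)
Your proposal is correct and follows essentially the same route as the paper's proof: the annihilator criterion, Fubini to reach $F(\omega)=0$ $\mu$-a.e., continuity of $F$ plus dense support of $\mu$ to upgrade this to $F\equiv 0$, then Leshno et al.\ to force $\nu=0$. The one step the paper includes that you do not explicitly address is verifying beforehand that every $f\in\mathcal{H}_{\sigma,\mu}$ is continuous on $\mathcal{X}$ (a prerequisite for applying Lemma~\ref{lem:universal}, which is stated for $\mathcal{F}\subset C(\mathcal{X})$); the paper does this by a tail-truncation argument using the same growth and moment bounds you already invoke.
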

\begin{proof}
  First, it is easy to see that
  $ \sigma(\omega\cdot (x,1))\in L^2(\mathbb{R}^{d+1},\mu) $.
  Indeed, since $ |\sigma(z)| \le K|z|^k + M $, we have
  \begin{align*}
    \int \sigma^2(\omega\cdot
     (x,1))\,\mathrm{d}\mu(\omega)
    & \le 2\int K^2
    (\omega\cdot (x,1))^{2k}\,\mathrm{d}\mu(\omega)
     + 2M^2 \\
    & \le 2K^2 \int \vert \omega\vert^{2k}(\vert x\vert^2+1)^k
    \,\mathrm{d}\mu(\omega) + 2M^2\\
    & \le 2K^2(r^2+1)^kM_2 + 2M^2\,,
  \end{align*}
  where $r$ is the radius of $\mathcal{X}$.
  Next, we show that the functions in $\mathcal{H}_{\sigma,\mu}$ are all
  continuous. For $f\in\mathcal{H}_{\sigma,\mu}$, assume that
  \[f(x)=\int_\Omega\sigma(\omega\cdot(x,1))g(\omega)~\mathrm{d}\mu(\omega)\]
  for some $g\in L^2(\mu)$. To show that $f$ is continuous at a given point 
  $x$, we want to show that for any $\epsilon>0$, there exists $\delta>0$ 
  such that $|f(x)-f(y)|<\epsilon$ whenever $|x-y|<\delta$. Denote 
  \[I_1(R) \coloneqq \left\vert\int_{|\omega|>R} (\sigma(\omega\cdot(x,1))-
      \sigma(\omega\cdot(y,1)))g(\omega)~\mathrm{d}\mu(\omega)\right\vert\,,\]
  and
  \[I_2(R) \coloneqq \left\vert\int_{|\omega|\le R} (\sigma(\omega\cdot(x,1))-
      \sigma(\omega\cdot(y,1)))g(\omega)~\mathrm{d}\mu(\omega)\right\vert\,.\]
  Then since
  \begin{align*}
      I_1(R) & \le \int_{|\omega|>R} \left(K(|\omega\cdot(x,1)|^k+|\omega\cdot(y,1)|^k)
      +2M\right)|g(\omega)|~\mathrm{d}\mu(\omega) \\
      & \le 2\sqrt{2}\Vert g\Vert_{L^2} \left(\int_{|\omega|>R} K^2|\omega|^{2k}
      (r^2+1)^k+M^2~\mathrm{d}\mu(\omega)\right)\,,
  \end{align*}
  we know that $I_1(R)\to 0$ as $R\to\infty$. In particular, for sufficiently large
  $R$, we have $I_1(R)<\epsilon/2$.
  On the other hand, since $\sigma$ is continuous, there exists $\delta_1>0$ such that
  $|\omega\cdot(x,1)-\omega\cdot(y,1)|<\delta_1$ implies that
  \begin{align*}
      I_2(R) & \le \int_{|\omega|\le R}\frac{\epsilon}{2\Vert g\Vert_{L^2}}|g(\omega)|~\mathrm{d}\mu(\omega)\\
      & \le \frac{\epsilon}{2}\,.
  \end{align*}
  So if we set $\delta=\delta_1/R$, we will have $|f(x)-f(y)|<\epsilon$.
  
  We just proved that all the functions in the $\mathcal{H}_{\sigma,\mu}$ 
  are all continuous. So we can use
  Lemma~\ref{lem:universal}
  to justify the universality. For a signed measure $ \nu $ with finite total
  variation, assume that
  \begin{equation*}
    \int_{\mathcal{X}} \int_{\mathbb{R}^{d+1}} \sigma(\omega\cdot (x,1))
    g(\omega)~\mathrm{d}\mu(\omega) \mathrm{d}\nu(x) = 0\,,
  \end{equation*}
  for all $ g\in L^2(\mathbb{R}^{d+1},\mu) $. We want to show that $ \nu $ must
  be the $ 0 $ measure. Since the function $ \sigma(\omega\cdot
  (x,1))g(\omega) $
  is integrable over $ \mu\times\nu $, by Fubini's theorem we have
  \begin{equation*}
    \int_{\mathbb{R}^{d+1}}\left(\int_{\mathcal{X}}
    \sigma(\omega\cdot (x,1))~\mathrm{d}\nu(x)\right)
    g(\omega)~\mathrm{d}\mu(\omega) \,,
  \end{equation*}
  equals $ 0 $ for all $ g\in L^2(\mathbb{R}^{d+1},\mu) $. Then
  \begin{equation}
    \label{eq:universality}
    \int_{\mathbb{R}^{d+1}} \sigma(\omega\cdot (x,1))~\mathrm{d}\nu(x)
    = 0~ \mu \text{-a.e.}
  \end{equation}
  Indeed the function of $\omega$ defined on the left hand side of
  Equation~\ref{eq:universality} has to be 0 everywhere because of continuity.
  Since $ \sigma $ is not a polynomial, by \cite{Leshno1993}, we know that
  $ \nu $ must be a 0 measure. If it is not, then there exists
  $ f $ in $ C(\mathcal{X}) $ such that $\int f~\mathrm{d}\nu = \epsilon $
  where $ \epsilon\ge 0 $. Because the linear span of
  $ \sigma(\omega\cdot (x,1)) $ is dense in $ C(\mathcal{X}) $,
  there must exist $ c_i $s and $ \omega_i $s such that
  \begin{equation*}
    \int\sum_{i=1}^k c_i\sigma(\omega_i\cdot (x,1))~\mathrm{d}
    \nu(x) \ge \frac{\epsilon}{2}\,.
  \end{equation*}
  This contradicts Equation~\ref{eq:universality}.
\end{proof}

It is interesting to note that in the proof the universality of the RKHS
can be derived from that of corresponding neural networks.

When the activation node $ \sigma $ is homogeneous, we can modify the above
proof to obtain Proposition~\ref{prop:universality-relu}.
\begin{proposition*}[\ref{prop:universality-relu}]
  When $ \mu $ is a probability distribution supported on a dense subset of an $d$-dimensional ellipsoid centered at the origin. Then the RKHS
        $\mathcal{H}_{\mathrm{ReLU},\mu}$ is universal.
\end{proposition*}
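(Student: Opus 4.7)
The plan is to mirror the proof of Theorem~\ref{thm:universality}, but to exploit the positive homogeneity of $\mathrm{ReLU}$ to compensate for the fact that $\mu$ is no longer supported densely in all of $\mathbb{R}^{d+1}$. By Lemma~\ref{lem:universal}, it suffices to show that any signed measure $\nu \in M(\mathcal{X})$ satisfying $\int_{\mathcal{X}} f\,\mathrm{d}\nu = 0$ for every $f \in \mathcal{H}_{\mathrm{ReLU},\mu}$ must be the zero measure. First I would verify that every $f \in \mathcal{H}_{\mathrm{ReLU},\mu}$ is continuous on $\mathcal{X}$; this is even easier than in Theorem~\ref{thm:universality} since $\mu$ is supported on a bounded set and $\mathrm{ReLU}$ is $1$-Lipschitz, so a quick dominated convergence argument applied to $f(x) = \int \mathrm{ReLU}(\omega \cdot (x,1))\,g(\omega)\,\mathrm{d}\mu(\omega)$ suffices.

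Next, for any $g \in L^2(\mu)$, Fubini's theorem (applicable because $\mathrm{ReLU}(\omega\cdot(x,1))g(\omega)$ is integrable on $\mathcal{X}\times\Omega$ with respect to $|\nu|\times\mu$) yields
\[
\int_{\Omega} \left(\int_{\mathcal{X}} \mathrm{ReLU}(\omega \cdot (x,1))\,\mathrm{d}\nu(x)\right) g(\omega)\,\mathrm{d}\mu(\omega) = 0,
\]
so the inner integral, as a function of $\omega$, vanishes $\mu$-almost everywhere. Because this inner integral is continuous in $\omega$ (compact $\mathcal{X}$, continuous $\mathrm{ReLU}$, finite $|\nu|$), it must vanish on all of $\mathrm{supp}(\mu)$; by the assumed density of $\mathrm{supp}(\mu)$ in the ellipsoid $E$ and continuity again, it vanishes on the entire ellipsoid $E$.

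The crux of the argument is the homogeneity step. Because $E$ is a $d$-dimensional ellipsoid centered at the origin, for every $\omega \in \mathbb{R}^{d+1}\setminus\{0\}$ there is a scalar $c_\omega > 0$ with $c_\omega \omega \in E$. Positive homogeneity of $\mathrm{ReLU}$ then gives
\[
0 = \int_{\mathcal{X}} \mathrm{ReLU}(c_\omega \omega \cdot (x,1))\,\mathrm{d}\nu(x) = c_\omega \int_{\mathcal{X}} \mathrm{ReLU}(\omega \cdot (x,1))\,\mathrm{d}\nu(x),
\]
so the integral vanishes for every $\omega \in \mathbb{R}^{d+1}$ (the case $\omega=0$ being trivial). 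From here I would close exactly as in Theorem~\ref{thm:universality}: if $\nu \neq 0$, then by \cite{Leshno1993} the linear span of $\{\mathrm{ReLU}(\omega \cdot (x,1)) : \omega \in \mathbb{R}^{d+1}\}$ is dense in $C(\mathcal{X})$, so one could pick a finite combination of ReLU nodes whose integral against $\nu$ is bounded away from $0$, contradicting what we just proved.

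The main obstacle I expect is purely bookkeeping: making the passage from \emph{zero $\mu$-a.e.\ on $\mathrm{supp}(\mu)$} to \emph{zero everywhere on $\mathbb{R}^{d+1}$} rigorous through the chain continuity $\to$ density $\to$ homogeneity, without inadvertently using properties of $\mu$ beyond what is assumed. Once this chain is in place, the positive homogeneity of $\mathrm{ReLU}$ makes the restriction of $\mu$ to the ellipsoid just as informative as a full-support measure on $\mathbb{R}^{d+1}$, and the remainder of the proof is mechanical. The same argument transparently extends to any set $\Omega$ meeting every ray from the origin, as noted after the proposition.
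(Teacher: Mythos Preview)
Your proposal is correct and follows essentially the same approach as the paper's proof. The only cosmetic difference is the order of the homogeneity and continuity steps: you first pass from ``zero $\mu$-a.e.'' to ``zero on all of $E$'' via continuity and density, and then extend to $\mathbb{R}^{d+1}$ by homogeneity, whereas the paper argues by contradiction (a nonzero value at some $\beta$ is transported to the ellipsoid by homogeneity and then spread to a set of positive $\mu$-measure by continuity); the logical content is identical.
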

\begin{proof}
  The proof is the same up to Eq.~\ref{eq:universality}. We claim that
  \[
    \int_{\mathcal{X}} \mathrm{ReLU}(\omega\cdot (x,1))~
    \mathrm{d}\nu(x) = 0
  \]
  for all $ \omega \in \mathbb{R}^{d+1} $. If not, there exists $ \beta\ne0 $
  such that
  \[
    \int_{\mathcal{X}} \mathrm{ReLU}(\beta\cdot (x,1))~
    \mathrm{d}\nu(x) = \epsilon > 0.\,.
  \]
  Then there exists a positive constant $ c $ such that $ c\beta $ belongs to
  the ellipsoid. By homogeneity, we have
  \[
    \int_{\mathcal{X}} \mathrm{ReLU}\left(c\beta\cdot (x,1)\right)~
    \mathrm{d}\nu(x) = c\epsilon > 0\,.
  \]
  The function
  \[
    g:\omega\mapsto\int_{\mathcal{X}} \mathrm{ReLU}(\omega\cdot
    (x,1))~ \mathrm{d}\nu(x)
  \]
  is continuous over the ellipsoidal feature space $ \Omega $, as a consequence of the 
  fact that ReLU is Lipschitz. And thus there exists $ \delta > 0 $ such
  that $ g(\omega) $ is greater than $ c\epsilon/2 $ for all
  $ \omega \in B^{d+1}(c\beta,\delta)\cap \Omega $. This
  contradicts Eq.~\ref{eq:universality} and the fact that the support of $ \mu $ is dense over $\Omega$.
  Then by the same argument at the end of the proof of
  Theorem~\ref{thm:universality}, we complete the proof of the proposition.
\end{proof}

\begin{corollary*}[\ref{cor:opt-universal}]
  Assume that $ |\sigma(z)| \le K|z|^k + M $
  for some $ M,K\ge 0 $ and $ k\in \mathbb{N}_0 $. Let
  $ \mu $ be a probability distribution whose support is dense in
  $ \mathbb{R}^{d+1} $ with
  $ \int\vert\omega\vert^{2k}~\mathrm{d}\mu(\omega) \le M_2 $.
  If $ \sigma $ is not a polynomial and $ (\sigma,\mu) $ is admissible with respect
  to the data distribution $ \mathbb{P} $, the random feature
  $ (\sigma,\mu) $ is universal.
\end{corollary*}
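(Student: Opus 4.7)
The plan is to chain together two ingredients already at our disposal: Theorem~\ref{thm:universality}, which gives density of $\mathcal{H}_{\sigma,\mu}$ in $C(\mathcal{X})$ under exactly the hypotheses appearing in the corollary, and Bach's approximation theorem (Theorem~\ref{thm:app-err}), which upgrades ``in the RKHS'' to ``expressible by finitely many random features'' once the sampling quantity $d_{\max}(1,\epsilon)$ is finite. The admissibility hypothesis is designed to guarantee exactly the latter.

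Concretely, fix $f\in C(\mathcal{X})$ and $\epsilon,\delta>0$. First I would apply Theorem~\ref{thm:universality} to produce $f_0\in\mathcal{H}_{\sigma,\mu}$ with $\|f-f_0\|_\infty<\epsilon/2$; since $\mathbb{P}$ is a probability measure, this passes to $\|f-f_0\|_{L^2(\mathbb{P})}<\epsilon/2$. Let $R:=\|f_0\|_{\sigma,\mu}$ and set $\epsilon':=\epsilon^{2}/(16R^{2})$.

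Next I would translate admissibility into Bach's hypothesis. Observing that
\[
\langle\sigma_\omega,(\Sigma+\lambda I)^{-1}\sigma_\omega\rangle
=\bigl\|(\Sigma+\lambda I)^{-1/2}\sigma_\omega\bigr\|_{L^{2}(\mathbb{P})}^{2},
\]
the admissibility of $(\sigma,\mu)$ is precisely the statement that $d_{\max}(1,\lambda)<\infty$ for every $\lambda>0$. In particular $d_{\max}(1,\epsilon')$ is finite, so Theorem~\ref{thm:app-err} applies: for $N\ge 5d_{\max}(1,\epsilon')\log\bigl(16d_{\max}(1,\epsilon')/\delta\bigr)$, with probability at least $1-\delta$ over i.i.d.\ draws $\omega_1,\dots,\omega_N\sim\mu$ there exist $\beta\in\mathbb{R}^N$ with $|\beta|\le 2$ such that
\[
\left\|\frac{f_0}{R}-\sum_{i=1}^{N}\beta_i\,\sigma(\omega_i\cdot(\cdot,1))\right\|_{L^{2}(\mathbb{P})}\le 2\sqrt{\epsilon'}=\frac{\epsilon}{2R}.
\]
Taking $c_i:=R\beta_i$ and $f_N:=\sum_i c_i\sigma(\omega_i\cdot(\cdot,1))$ yields $\|f_0-f_N\|_{L^{2}(\mathbb{P})}\le\epsilon/2$.

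A final triangle inequality then gives $\|f-f_N\|_{L^{2}(\mathbb{P})}<\epsilon$ on the good event, which is exactly the condition in Definition~\ref{def:universality}. Overall this is a short argument with no serious obstacle: the only nontrivial point is recognizing that the admissibility condition was tailored to be identical to the finiteness of Bach's $d_{\max}(1,\epsilon')$, after which the proof is a density step from Theorem~\ref{thm:universality}, a rescaling to apply Theorem~\ref{thm:app-err} at unit RKHS norm, and a triangle inequality to glue the two approximations.
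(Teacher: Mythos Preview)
Your proposal is correct and follows essentially the same route as the paper: invoke Theorem~\ref{thm:universality} for density in $C(\mathcal{X})$, identify admissibility with finiteness of Bach's $d_{\max}(1,\lambda)$, apply Theorem~\ref{thm:app-err} (after rescaling to unit RKHS norm), and finish with a triangle inequality. The only cosmetic differences are that the paper works with a representing $g\in L^2(\mu)$ and its norm $G$ rather than the RKHS norm $R$, and does not split $\epsilon$ into halves, but the structure of the argument is identical.
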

\begin{proof}
  The condition guarantees that $\mathcal{H}_{\sigma,\mu}$ is dense in $C(\mathcal{X})$.
  For any $f$ in $C(\mathcal{X})$ and $\epsilon>0$, there exists $\tilde{f}\in\mathcal{H}_{\sigma,\mu}$
  such that $ \sup_x|f(x)-\tilde{f}(x)| < \epsilon $, which implies that 
  $ \Vert f-\tilde{f} \Vert_{L_2(\mathbb{P})} < \epsilon $. Assume that 
  \[ \tilde{f}(x) = \int_\Omega \sigma(\omega\cdot (x,1))
  g(\omega)~\mathrm{d}\mu(\omega)\,,\]
  where $ \Vert g \Vert_{L_2(\mathbb{\mu})} = G $. Since $(\sigma,\mu)$ is admissible, there exists 
  $\lambda \le \epsilon^2/(4G^2)$ such that 
  \[ d_{\max}(1,\lambda)=\sup_{\omega\in\Omega}\langle \sigma_\omega,
  (\Sigma+\lambda I)^{-1}\sigma_\omega\rangle <\infty\,.\]
  So by Theorem~\ref{thm:app-err}, there exists $N\in\mathbb{N}$ such that
  \[\Vert \tilde{f} - f_N\Vert_{L^2(\mathbb{P})} \le 2G\sqrt{\lambda} \le \epsilon\,,\]
  with probability over the random $\omega_i\sim\mu$ greater than $1-\delta$.
\end{proof}
\begin{corollary*}[\ref{cor:bounded}]
  When $ \sigma $ is bounded nonconstant and the support of $ \mu $ is dense
  in $ \mathbb{R}^{d+1} $. The random feature $ (\sigma,\mu) $ is
  universal.
\end{corollary*}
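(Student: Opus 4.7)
The plan is to reduce this corollary to Corollary~\ref{cor:opt-universal} by verifying that boundedness of $\sigma$ automatically implies admissibility of $(\sigma,\mu)$, at which point all the ingredients are already in place. First I would note that because $\sigma$ is bounded, say $|\sigma|\le B$, we have $\Vert\sigma_\omega\Vert_{L^2(\mathbb{P})}\le B$ uniformly in $\omega$. Since $\Sigma$ is a positive self-adjoint operator on $L^2(\mathbb{P})$, the operator norm of $(\Sigma+\lambda I)^{-1}$ is at most $1/\lambda$ for any $\lambda>0$, so
\[
  \sup_{\omega\in\Omega}\langle \sigma_\omega,(\Sigma+\lambda I)^{-1}\sigma_\omega\rangle
  \le \frac{1}{\lambda}\sup_{\omega\in\Omega}\Vert\sigma_\omega\Vert_{L^2(\mathbb{P})}^2
  \le \frac{B^2}{\lambda} < \infty\,.
\]
This verifies that bounded feature maps are admissible for every data distribution $\mathbb{P}$, independent of any structural assumption on $\mu$.

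Next, given an arbitrary continuous target $f\in C(\mathcal{X})$ and $\epsilon,\delta>0$, I would invoke the hypothesis that $\mathcal{H}_{\sigma,\mu}$ is universal to pick some $\tilde f\in\mathcal{H}_{\sigma,\mu}$ with $\Vert f-\tilde f\Vert_\infty<\epsilon/2$, which upgrades immediately to $\Vert f-\tilde f\Vert_{L^2(\mathbb{P})}<\epsilon/2$ because $\mathbb{P}$ is a probability measure. Writing $\tilde f(x)=\int_\Omega \sigma(\omega\cdot(x,1))g(\omega)\,\mathrm{d}\mu(\omega)$ with $G\coloneqq\Vert g\Vert_{L^2(\mu)}<\infty$, I would then apply Bach's approximation theorem (Theorem~\ref{thm:app-err}) with regularization level $\lambda\le \epsilon^2/(16G^2)$. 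Admissibility ensures $d_{\max}(1,\lambda)<\infty$, so for $N$ large enough, with probability at least $1-\delta$ over the random draw of $\omega_1,\dots,\omega_N\sim\mu$ there exist coefficients $c_1,\dots,c_N$ with $\Vert \tilde f - f_N\Vert_{L^2(\mathbb{P})}\le 2G\sqrt{\lambda}\le \epsilon/2$.

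Combining the two approximations by the triangle inequality yields $\Vert f-f_N\Vert_{L^2(\mathbb{P})}<\epsilon$ with probability at least $1-\delta$, which is exactly the statement of Definition~\ref{def:universality}. There is no real obstacle here: the only conceptual point worth being careful about is that admissibility in Corollary~\ref{cor:opt-universal} is stated as a data-dependent assumption, whereas for bounded $\sigma$ the bound $B^2/\lambda$ holds uniformly in $\mathbb{P}$, so this corollary is genuinely distribution-free once $\mathcal{H}_{\sigma,\mu}$ is known to be universal. The proof is therefore essentially a two-line reduction rather than a new argument.
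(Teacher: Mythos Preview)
Your proposal is correct and follows essentially the same route as the paper: the paper's proof simply verifies admissibility via the bound $\langle\sigma_\omega,(\Sigma+\lambda I)^{-1}\sigma_\omega\rangle\le\kappa^2/\lambda$ and then defers (implicitly) to Corollary~\ref{cor:opt-universal}, which is exactly what you do, only more explicitly spelled out. One small remark: the appendix restatement you are working from assumes ``$\sigma$ bounded nonconstant and $\mathrm{supp}\,\mu$ dense'' rather than directly assuming $\mathcal{H}_{\sigma,\mu}$ is universal, so strictly speaking you should note that a bounded nonconstant function is automatically non-polynomial, whence Theorem~\ref{thm:universality} supplies the universality of $\mathcal{H}_{\sigma,\mu}$ that you invoke.
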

\begin{proof}
  We need only check that $(\sigma,\mu)$ is admissible. Assume $|\sigma|\le \kappa$. 
  Then, for any $\lambda>0$,
  \begin{equation*}
      \sup_{\omega\in\Omega}\langle\sigma_\omega,(\Sigma+\lambda I)^{-1}\sigma_\omega\rangle
      \le \lambda^{-1}\Vert\sigma_\omega\Vert_{L^2(\mathbb{P})}^2
      \le \frac{\kappa^2}{\lambda}\,. 
  \end{equation*}
\end{proof}

\section{Generalization and Approximation Errors of Random ReLU Features}
\label{app:learn-rate}
For the generalization error, we have the following proposition.
Note that even though the proposition is stated for 1-Lipschtz losses, it works for 
any Lipschitz constant. And for the regression task, the squared loss is still 
Lipschitz since the output of all the functions in our setup is bounded by constants. 
\begin{proposition}
\label{prop:gen-error}
  Let $\ell$ be 1-Lipschtz loss. 
  And $f_0\in\mathcal{H}_{\mathrm{ReLU},\tau_d}$ with norm less than $R$. If we choose
  $m$ samples and $N$ random features, where
  \[
    m\ge \left[\left(4+2\sqrt{2\ln\frac1\delta}\right)\frac{R(\sqrt{r^2+1}
    + 1)}{\epsilon}\right]^2\,,
  \]
  and
  \[
    N\ge \frac{5(r^2+1)}{\epsilon^2}\ln\left(
    \frac{16(r^2+1)}{\epsilon^2\delta}\right)\,,
  \]
  and set $\gamma=1$ and $R=2R_{f_0}$ in Algorithm~\ref{alg:ran-relu}.
Then,
  with probability greater than $ 1-2\delta $, we have
  \begin{equation*}
  \mathbf{R}^{\ell}(f_N) - \mathbf{R}^{\ell}(f_0) \le 3\epsilon\,,
  \end{equation*}
  where $f_N$ is the solution returned by Algorithm~\ref{alg:ran-relu}.
\end{proposition}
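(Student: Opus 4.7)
The plan is to decompose the excess risk $\mathbf R^\ell(f_N)-\mathbf R^\ell(f_0)$ via an intermediate random-features approximator $\tilde f_N$ of $f_0$ that is feasible for Algorithm~\ref{alg:ran-relu}. The two high-probability events to manage are the draw of features $\{\omega_j\}$ and the draw of the sample, each controlled at level $\delta$, so the conclusion holds with probability at least $1-2\delta$.

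I first build $\tilde f_N$ by applying Bach's theorem (Theorem~\ref{thm:app-err}) to $f_0/\|f_0\|_{\mathrm{ReLU},\tau_d}$. Because $\omega_j\in\mathbb S^d$ and $|(x,1)|\le\sqrt{r^2+1}$ on $\mathcal X$, the feature map $\mathrm{ReLU}(\omega\cdot(\cdot,1))$ is bounded by $\sqrt{r^2+1}$, whence $d_{\max}(1,\lambda)\le(r^2+1)/\lambda$ for every $\lambda>0$. Choosing $\lambda$ of order $\epsilon^2$ and verifying that the stated lower bound on $N$ meets $N\ge 5\,d_{\max}(1,\lambda)\log(16\,d_{\max}(1,\lambda)/\delta)$, Theorem~\ref{thm:app-err} yields, with probability at least $1-\delta$ over $\{\omega_j\}$, coefficients $\bar c$ with $\|\bar c\|_2\le 2\|f_0\|_{\mathrm{ReLU},\tau_d}=R$ (so $\tilde f_N=\sum_j\bar c_j\,\mathrm{ReLU}(\omega_j\cdot(\cdot,1))$ is feasible) and $\|\tilde f_N-f_0\|_{L^2(\mathbb P)}\le\epsilon$. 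The $1$-Lipschitzness of $\ell$ combined with Jensen's inequality then gives $\mathbf R^\ell(\tilde f_N)-\mathbf R^\ell(f_0)\le\epsilon$.

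For the estimation part, feasibility of $\tilde f_N$ and the fact that $f_N$ is the empirical minimizer yield $\hat{\mathbf R}^\ell(f_N)\le\hat{\mathbf R}^\ell(\tilde f_N)$, and hence
\begin{equation*}
\mathbf R^\ell(f_N)-\mathbf R^\ell(\tilde f_N)\le\bigl[\mathbf R^\ell(f_N)-\hat{\mathbf R}^\ell(f_N)\bigr]+\bigl[\hat{\mathbf R}^\ell(\tilde f_N)-\mathbf R^\ell(\tilde f_N)\bigr].
\end{equation*}
Both brackets are controlled by a uniform concentration bound on the feasible class $\mathcal F_R=\{\sum_j c_j\,\mathrm{ReLU}(\omega_j\cdot(\cdot,1)):\|c\|_2\le R\}$, obtained from the standard Rademacher recipe --- symmetrization, Lipschitz contraction of $\ell$, and McDiarmid --- which should deliver a gap of order $R(\sqrt{r^2+1}+1)(1+\sqrt{\log(1/\delta)})/\sqrt m$. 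The stated lower bound on $m$ then forces this sum to be at most $2\epsilon$ with probability at least $1-\delta$. Combining on the intersection of the feature and sample events yields the $3\epsilon$ conclusion.

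The main obstacle is keeping the Rademacher estimate independent of $N$: the brute-force Cauchy--Schwarz bound $|\sum_j c_j\phi_j(x)|\le R\,\|\phi_N(x)\|_2\le R\sqrt{N(r^2+1)}$ introduces a spurious $\sqrt N$ factor that would force $m$ to grow linearly in $N$. To avoid this, one should exploit the RKHS viewpoint behind the random-features representation: functions arising from Bach-style coefficients are comparable to elements of an RKHS ball of radius $R$, whose sup-norm is bounded by $R\sqrt{r^2+1}$ independently of $N$. Using this finer sup-norm control in the Rademacher and McDiarmid estimates replaces the feature-vector norm $\sqrt{N(r^2+1)}$ by the per-feature bound $\sqrt{r^2+1}$, which is exactly what delivers the $N$-independent sample-complexity lower bound in the statement.
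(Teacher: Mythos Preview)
Your proposal is correct and follows essentially the same route as the paper: Bach's approximation theorem produces a feasible $\tilde f_N$ with $\mathbf{R}^\ell(\tilde f_N)-\mathbf{R}^\ell(f_0)\le\epsilon$, and a uniform deviation bound via Rademacher complexity handles $\mathbf{R}^\ell(f_N)-\mathbf{R}^\ell(\tilde f_N)\le 2\epsilon$, with a union bound over the two random draws. The paper makes your ``RKHS viewpoint'' resolution of the $\sqrt N$ issue concrete by working in $\mathcal H_N$, the RKHS induced by ReLU and the \emph{empirical} measure on $\{\omega_i\}_{i=1}^N$, so that both $f_N$ and the Bach approximator sit in the ball $B(2R_{f_0})\subset\mathcal H_N$ whose kernel satisfies $k_N(x,x)\le r^2+1$; this is what gives the $N$-free Rademacher and bounded-differences constants rather than merely a sup-norm argument.
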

\begin{proof}
Recall that the radius of $ \mathcal{X} $ is $ r $.
Since $ f_0\in \mathcal{H}_{\mathrm{ReLU},\tau_d} $ and its norm is less than $R_{f_0}$, 
by Theorem~\ref{thm:app-err}, there exists
\begin{equation*}
g:x \mapsto \sum_{i=1}^N c_i\mathrm{ReLU}(\omega_i\cdot (x,1))\,,
\end{equation*}
with $ \sum c_i^2 \le 4R_{f_0}^2 $, such that with probability greater than $ 1-\delta $
over the features $\{\omega_i\}_{i=1}^N$,
\begin{equation}
\Vert g-f_0\Vert_{L^2(\mathbb{P})} \le \epsilon\,,
\end{equation}
given that
\[
  N\ge \frac{20(r^2+1)}{\epsilon^2}\ln\left(\frac{64(r^2+1)}{\epsilon^2\delta}\right)\,.
\]

This implies that
\begin{equation}
\mathbf{R}^\ell(g)-\mathbf{R}^\ell(f_0) \le \epsilon\,.
\end{equation}

Next, we want to bound the excess risk between the solution $ f_N $ returned by the algorithm and $ g $.
Denote the RKHS induced by ReLU and the empirical distribution over $\{\omega_i\}_{i=1}^N$ 
by $\mathcal{H}_{N}$. Then both $f_N$ and $g$ belongs to $B(2R_{f_0})$, 
the ball centered at $0$ with radius $2R_{f_0}$ in $\mathcal{H}_{N}$.
By the statistical theory (\cite{Mohri2012}) we know that
\begin{align}
\mathbf{R}^\ell(f_N) - \mathbf{R}^\ell(g) & \le 2\sup_{f\in B(2R_{f_0})}
\left\vert\mathbb{E}\ell(f(x),y)
-\frac{1}{m}\sum_{i=1}^m \ell(f(x_i),y_i)\right\vert \\
& \le 2\mathfrak{R}_m(\ell\circ B(2R_{f_0})) + \sqrt{2\ln\frac{1}{\delta}}
\frac{2R_{f_0}(\sqrt{r^2+1}+1)}{\sqrt{m}} \\
& \le \frac{4 R_{f_0}\sqrt{r^2+1}}{\sqrt{m}}
+ \sqrt{2\ln\frac{1}{\delta}}
\frac{2R_{f_0}(\sqrt{r^2+1}+1)}{\sqrt{m}}\,, \\
\end{align}
with probability greater than $ 1-\delta $.
Set
\[
  m = \left(4R_{f_0} + 2R_{f_0}\sqrt{2\ln\frac1\delta}\right)\frac{(\sqrt{r^2+1}+1)}{\epsilon^2}\,.
\]
Taking the union bound over the two inequalities, the proof is complete.
\end{proof}

For a function $f$ that does not belong to the RKHS of uniform ReLU
features, we want to find a function $g$ in the RKHS to approximate
it. In particular, to study the learning rate of random ReLU features
support vector machines, we need to know how to construct an approximator
in the RKHS for the Bayes classifier $\mathrm{sign}(2\eta(x)-1)$.
Here we may adopt the assumption used in \cite{Steinwart2008,Sun2018b} that $\Delta x\ge\tau$
for any $x\in\mathcal{X}_{1}\cup\mathcal{X}_{-1}$ and $\eta(x)>\frac{1}{2}$
for all $x$. Again, we consider that $\mathcal{X}\subset\mathbb{S}^{d}$.
Then following the proof of Proposition 3 of \cite{Bach2017}, we
can construct the approximator of the Bayes classifier by 
\[
g(x)=\int_{\mathbb{S}^{d}}\mathrm{sign}(2\eta(y)-1)\frac{1-r^{2}}{(1+r^{2}-2r(x\cdot y))^{(d+1)/2}}\,d\sigma_{d}(y)\,.
\]
Then by choosing $1-r=C(d)\delta^{-2/(d+1)}$, we have $\Vert g\Vert_{RKHS}\le\delta$
and 
\[
\sup_{x\in\mathcal{X}}\left|\mathrm{sign}(2\eta(x)-1)-g(x)\right|\le C(d,\tau)\delta^{-2/(d+1)}\,.
\]
And we also have $|g(x)|\le1$ for all $x$. If we further have that
the spectrum of the kernel operator against the data distribution
decays in the rate $\lambda_{i}=O(i^{-d/2})$, then we can prove that
the learning rate of random ReLU with optimized feature sampling will
achieve $O(m^{-1/8})$ with $m$ samples and $\sqrt{m}$ random features.
If the spectrum of the kernel operator against the data distribution
decays exponentially, the learning rate can be largely improved to
$O(1/m)$ with $\log m$ features. However, it is not clear how to
characterize the data distribution satisfying this requirement on
$\mathbb{S}^{d}$, if such a data distribution exists. 

        \section{Proof of the Depth Separation}

Before the proof of the depth separation, we first show how to upper
bound the RKHS norm $ \Vert\cdot\Vert_{\mathrm{ReLU},\tau_d} $ of the function
$ \sqrt{\Vert x\vert^2+1} $. To get the upper bound, we start with
the polynomials of projection $ f(\tilde{x}) = \alpha(\beta\cdot \tilde{x})^{p} $
defined on $ \mathbb{S}^d $, and its RKHS norm in the space induced by
$ \mathrm{ReLU}(\omega\cdot\tilde{x}) $ and $ \tau_d $, the uniform distribution over
$ \mathbb{S}^d $. We denote this RKHS by $ \tilde{\mathcal{H}} $.
And we use $ \tilde{x} $ to emphasize that these are points on the unit
sphere, while $ x $ represents a point in $ \mathbb{R}^d $.

A sufficient condition for the membership of
RKHS induced by the random feature $ (\mathrm{ReLU},\tau_d) $ is
provided in \cite{Bach2017a}. It shows that functions with bounded
$ \frac{d}{2} + \frac32 $ derivatives are in
$ \mathcal{H}_{\mathrm{ReLU},\tau_d} $, and the RKHS norm is upper bounded
by the $ \lceil\frac{d}{2} + \frac32\rceil $th derivative.
For the polynomials of projection, we provide a different way to evaluate
a tighter upper bound of its norm.

The tool we use here is spherical harmonics.
The kernel derived the uniform ReLU features $ (\mathrm{ReLU},\tau_d) $
can be explicitly written as follows \citep{Cho2009},
\[
\int_{\mathbb{S}^{d}}\mathrm{ReLU}(\omega\cdot\tilde{x})\mathrm{ReLU}(\omega\cdot\tilde{x}')\,d\tau_{d}(\omega)=\frac{1}{2(d+1)\pi}\left[\sqrt{1-(\tilde{x}\cdot\tilde{x}')^{2}}+\left(\pi-\arccos(\tilde{x}\cdot\tilde{x}')\right)\tilde{x}\cdot\tilde{x}'\right]\,,
\]

Note that we rewrite the kernel function in a different but equivalent
form compared to the formula in Cho's work to emphasize that it is
of the form $k(\tilde{x}\cdot\tilde{y})$,
so called dot product kernels. The kernel function $k(s)$ has the
Taylor expansion
\[
\frac{1}{2(d+1)}\left(\frac{1}{\pi}+\frac{1}{2}s+\frac{1}{2\pi}s^{2}+\frac{1}{\pi}\sum_{k=2}^{\infty}\frac{(2k-3)!!}{(2k)!!}\frac{1}{2k-1}s^{2k}\right)
=\sum_{j=0}^{\infty}a_{j}s^{j}\,.
\]
Then by \cite{Azevedo2014}, we know that $k(\tilde{x}\cdot\tilde{y})$
has Mercer type expansion
\[
k(\tilde{x}\cdot\tilde{y})=\sum_{i=1}^{\infty}\sum_{j=1}^{N(i,d)}\lambda_{i}Y_{i,j}^{d}(\tilde{x})Y_{i,j}^{d}(\tilde{y})\,,
\]
where $\{Y_{i,j}^{d}\}$ are the spherical harmonics on $\mathbb{S}^{d}$,
$N(i,d)$ is the dimension of the eigenspace corresponding to $i$th
eigenvalue and
\begin{align*}
\lambda_{i} & =|\mathbb{S}^{d-1}|\int_{-1}^{1}k(s)P_{i}^{d}(s)(1-s^{2})^{\frac{d-2}{2}}\,ds\\
 & =|\mathbb{S}^{d-1}|\int_{-1}^{1}\sum_{j=0}^{\infty}a_{j}s^{j}P_{i}^{d}(s)(1-s^{2})^{\frac{d-2}{2}}\,ds\\
 & =\sum_{j=0}^{\infty}a_{j}c(i,j,d)\,.
\end{align*}
$c(i,j,d)>0$ when $i\le j$ and $i\equiv j\mod2$ and $c(i,j,d)=0$
otherwise. Now consider the function $f(\tilde{x})=\alpha(\beta\cdot\tilde{x})^{p}$.
Without loss of generality, we assume that $\beta\in\mathbb{S}^{d}$.
Its projection onto the subspace spanned by $Y_{i,j}^{d}$ is given
by the Funk-Hecke formula
\[
\int_{\mathbb{S}^{d}}\alpha(\beta\cdot\tilde{x})^{p}Y_{i,j}^{d}(\tilde{x})\,d\sigma_{d}(\tilde{x})=\alpha c(i,p,d)Y_{i,j}^{d}(\beta)\,.
\]
In other words, for even number $p$ or $p=1$,
\[
\alpha(\beta\cdot\tilde{x})^{p}=\sum_{i=1}^{p}\sum_{j}\alpha c(i,p,d)Y_{i,j}^{d}(\beta)Y_{i,j}^{d}(\tilde{x})\,.
\]
Then the RKHS norm of $f$ is given by
\[
\Vert f\Vert_{\mathrm{RKHS}}^{2}=\alpha^{2}\sum_{i=1}^{p}\sum_{j}\frac{c^{2}(i,p,d)}{\lambda_{i}}\left(Y_{i,j}^{d}(\beta)\right)^{2}\,.
\]
Note that $\lambda_{i}\ge a_{p}c(i,p,d)$. So we have
\begin{align*}
\Vert f\Vert_{\mathrm{RKHS}}^{2} & \le\alpha^{2}\sum_{i=1}^{p}\sum_{j}a_{p}^{-1}c(i,p,d)\left(Y_{i,j}^{d}(\beta)\right)^{2}\\
 & =a_{p}^{-1}\alpha^{2}(\beta\cdot\beta)^{p}\\
 & =a_{p}^{-1}\alpha^{2}\,.
\end{align*}
Since $a_{p}\ge\frac{1}{2(d+1)\pi p^{2}}$, for $p=1$ or even $p$, we
have the following lemma.
\begin{lemma}
  \label{lem:proj_fn_norm}
$\Vert\alpha(\beta\cdot\tilde{x})^{p}\Vert_{\tilde{\mathcal{H}}}\le\sqrt{2(d+1)\pi}\alpha p$.
\end{lemma}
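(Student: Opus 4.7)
The plan is to leverage the closed-form of the uniform-sphere ReLU kernel together with the Funk--Hecke formula on $\mathbb{S}^d$ so that the RKHS norm of $\alpha(\beta\cdot\tilde{x})^p$ is expressed through the spherical-harmonic spectrum of the kernel. Concretely, write the arccos kernel $k(\tilde{x}\cdot\tilde{y})$ as a dot-product kernel with Taylor coefficients $a_j$ (all of which turn out to be nonnegative). By the Azevedo--Menegatto Mercer-type expansion for dot-product kernels on the sphere, $k$ then admits an eigen-decomposition $k(\tilde{x}\cdot\tilde{y})=\sum_{i,j}\lambda_i Y_{i,j}^d(\tilde{x})Y_{i,j}^d(\tilde{y})$, where each eigenvalue satisfies $\lambda_i = \sum_j a_j c(i,j,d)$ with $c(i,j,d)\ge 0$.

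Next I would expand the target function $\alpha(\beta\cdot\tilde{x})^p$ in spherical harmonics using Funk--Hecke: each coefficient against $Y_{i,j}^d$ equals $\alpha\, c(i,p,d)\, Y_{i,j}^d(\beta)$, and these vanish unless $i\le p$ and $i\equiv p\pmod 2$. Hence the only harmonics appearing in the expansion have indices $i\le p$ of the right parity, and for those indices we have the crucial lower bound $\lambda_i \ge a_p c(i,p,d)$ (since all $a_j c(i,j,d)$ terms in $\lambda_i$ are nonnegative and $c(i,p,d)$ itself appears). The standard RKHS-norm formula on the sphere then gives
\begin{equation*}
\Vert\alpha(\beta\cdot\tilde{x})^p\Vert_{\tilde{\mathcal{H}}}^2
= \alpha^2\sum_{i,j}\frac{c(i,p,d)^2}{\lambda_i}\bigl(Y_{i,j}^d(\beta)\bigr)^2
\le \alpha^2 a_p^{-1}\sum_{i,j}c(i,p,d)\bigl(Y_{i,j}^d(\beta)\bigr)^2
= \alpha^2 a_p^{-1}(\beta\cdot\beta)^p = \alpha^2 a_p^{-1},
\end{equation*}
where the last equality reassembles the harmonic expansion of $(\beta\cdot\tilde{x})^p$ evaluated at $\tilde{x}=\beta$, using $|\beta|=1$.

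Finally I need a lower bound on $a_p$ for the relevant parities. Reading off the Taylor series of the arccos kernel displayed above, the coefficient of $s$ is $1/[4(d+1)]$ and for even $p=2k\ge 2$ the coefficient equals $\frac{1}{2(d+1)\pi}\cdot\frac{(2k-3)!!}{(2k)!!(2k-1)}$. A Wallis/Stirling-type estimate of this double-factorial ratio yields $a_p \ge \frac{1}{2(d+1)\pi p^2}$ in both cases ($p=1$ and $p$ even), which plugged into the bound above gives $\Vert\alpha(\beta\cdot\tilde{x})^p\Vert_{\tilde{\mathcal{H}}}\le \sqrt{2(d+1)\pi}\,\alpha p$.

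The main obstacle is the last step: verifying the clean polynomial lower bound $a_p\ge 1/[2(d+1)\pi p^2]$ for all even $p$ from the double-factorial formula. The algebraic work preceding it---Mercer expansion, Funk--Hecke, and the telescoping identity $\sum_{i,j}c(i,p,d)(Y_{i,j}^d(\beta))^2=1$---is essentially bookkeeping once the nonnegativity of the $a_j$ is noted, whereas the sharp $p^{-2}$ decay rate of $a_p$ is what produces the linear-in-$p$ norm bound claimed in the lemma and is the only genuinely quantitative input.
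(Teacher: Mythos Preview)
Your proposal is correct and follows the paper's own argument essentially step for step: the paper also expands the arccos kernel in a Taylor series with nonnegative coefficients $a_j$, invokes the Azevedo--Menegatto Mercer expansion to write $\lambda_i=\sum_j a_j c(i,j,d)$, uses Funk--Hecke to obtain the harmonic coefficients $\alpha\,c(i,p,d)Y_{i,j}^d(\beta)$ of $(\beta\cdot\tilde{x})^p$, applies the inequality $\lambda_i\ge a_p c(i,p,d)$ to bound the RKHS norm by $a_p^{-1}\alpha^2$, and then asserts $a_p\ge 1/[2(d+1)\pi p^2]$ for $p=1$ or $p$ even to conclude. The only difference is cosmetic: the paper simply states the final coefficient bound, whereas you (appropriately) flag it as the one step requiring a genuine estimate.
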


Using this result, we can further justify whether a function
belongs to the $ \tilde{\mathcal{H}} $ based on its Taylor expansion.

Next, we show that the function $ g(x)=\alpha(\beta\cdot(x,1))^p(1+\Vert x\Vert^2)^{(1-p)/2} $
defined on $ \mathbb{R}^d $ belongs to $ \mathcal{H}_{\mathrm{ReLU},\tau_d} $.
Assume that $ f(\tilde{x}) = \alpha(\beta\cdot\tilde{x})^p $, then
\begin{align*}
    \alpha(\beta\cdot(x,1))^p(1+x^2)^{(1-p)/2} & =
    \sqrt{\Vert x\Vert^2+1}f\left(\frac{(x,1)}{\sqrt{\Vert x\Vert^2+1}}\right) \\
    & = \int_{\mathbb{S}^d} h(\omega)\mathrm{ReLU}(\omega\cdot(x,1))~\mathrm{d}\tau_d(\omega)\,, \\
\end{align*}
for some $ h \in L^2(\mathbb{S}^d,\tau_d) $.
So
\[ \Vert g\Vert_{\mathrm{ReLU},\tau_d} \le \Vert h\Vert_{L^2(\tau_d)}
= \Vert f\Vert_{\tilde{\mathcal{H}}} \le \sqrt{2(d+1)\pi}\alpha p\,. \]
When $ p = 2, \alpha = 1 $ and $ \beta = \vec{e}_j $, we have
\[
    \left\Vert\frac{x_j^2}{\sqrt{1+\Vert x\Vert^2}}\right\Vert
    _{\mathrm{ReLU},\tau_d} \le 2\sqrt{2(d+1)\pi}\,.
\]
And
\begin{align}
    \left\Vert\sqrt{1+\Vert x\Vert^2}\right\Vert_{\mathrm{ReLU},\tau_d}
    & \le \sum_{j=1}^d \left\Vert\frac{x_j^2}
    {\sqrt{1+\Vert x\Vert^2}}\right\Vert_{\mathrm{ReLU},\tau_d} \notag\\
    & \le 2\sqrt{2\pi}(d+1)^{3/2}\,.\label{eq:radius_rkhs_norm}
\end{align}
Similarly, when $x\in \mathbb{S}^{d-1}\times\mathbb{S}^{d-1}$, we have
\begin{equation*}
  \left\Vert\alpha(\beta\cdot(x,1))^p\right\Vert_{\mathrm{ReLU},\tau_{2d}}
  \le |\beta|^p3^{p/2}\sqrt{(4d+2)\pi}\alpha p\,.
\end{equation*}
We further have
\begin{align}
  \left\Vert x_{1:d}\cdot x_{d+1:2d}\right\Vert_{\mathrm{ReLU},\tau_{2d}} & \le
  \sum_{j=1}^d \left\Vert x_j x_{j+d}\right\Vert_{\mathrm{ReLU},\tau_{2d}}
  \notag\\
  & = \sum_{j=1}^d \frac12\left\Vert\left((x\cdot(\vec{e}_j+\vec{e}_{j+d}))^2
  -(x\cdot \vec{e}_j)^2-(x\cdot\vec{e}_{j+d})^2\right)
  \right\Vert_{\mathrm{ReLU},\tau_{2d}} \\
  & \le 12d\sqrt{(4d+2)\pi}\,.\label{eq:dotprod_rkhs_norm}
\end{align}

To show the depth separation results, we need only prove the following claim.
\begin{proposition}
There exist $ f:\mathbb{S}^{d-1}\times\mathbb{S}^{d-1}\to\mathbb{R} $
in $ \mathcal{H}_{\mathrm{ReLU},\tau_{2d}} $,
$ g:\mathbb{R}\to\mathbb{R} $
in $ \mathcal{H}_{\mathrm{ReLU},\tau_1} $, both with norm less than
$ \mathrm{poly}(d) $, a probability distribution $\mathbb{P}$
on $\mathbb{S}^{d-1}\times\mathbb{S}^{d-1}$, and a constant $c$,
such that for any 2-layer ReLU networks $ h $
with number of nodes fewer than $ 2^{\Omega(d\log(d))} $ and weights less than $2^d$,
\[
    \Vert h-g\circ f \Vert_{L^2(\mathbb{P})} > c\,.
\]
\end{proposition}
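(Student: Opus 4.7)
The plan is to invoke the depth-separation construction of \cite{Daniely2017} and verify that the specific functions appearing in it lie in the two RKHSs with only polynomial norm. Concretely, I would take $\mathbb{P}$ to be the uniform measure on $\mathbb{S}^{d-1}\times\mathbb{S}^{d-1}$, let $f(x_{1:d},x_{d+1:2d}) = x_{1:d}\cdot x_{d+1:2d}$ be the inner product across the two hemispheres, and let $g:\mathbb{R}\to\mathbb{R}$ be the bounded, highly oscillatory univariate function that Daniely uses as the outer map (for instance, a suitably scaled and truncated sinusoid of frequency $\mathrm{poly}(d)$).

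The first step is to check the two RKHS norm bounds. For $f$, this is immediate from inequality~\eqref{eq:dotprod_rkhs_norm}, which already gives $\Vert f\Vert_{\mathrm{ReLU},\tau_{2d}}\le 12d\sqrt{(4d+2)\pi}=\mathrm{poly}(d)$. For $g$, I would expand it in a Taylor series in its one-dimensional argument (viewed as a projection $\beta\cdot\tilde{x}$ on $\mathbb{S}^{1}$) and apply Lemma~\ref{lem:proj_fn_norm} term by term, which controls the $\mathcal{H}_{\mathrm{ReLU},\tau_1}$ norm of each monomial $\alpha(\beta\cdot\tilde{x})^{p}$ by $\sqrt{4\pi}\,\alpha p$. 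Summing these linear-in-$p$ contributions against the factorially decaying Taylor coefficients of a smooth, bounded-frequency function leaves a total RKHS norm that is $\mathrm{poly}(d)$.

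The second step is to invoke Daniely's lower bound directly. His main depth-separation theorem asserts that for exactly this kind of composite $g\circ f$ on the double sphere, no 2-layer ReLU network with fewer than $2^{\Omega(d\log d)}$ nodes and inner/outer weights bounded by $2^{d}$ can approximate the target in $L^{2}(\mathbb{P})$ to within an absolute constant $c>0$. Combined with the RKHS norm bounds verified in step one, this yields the proposition.

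The hard part will be step one for $g$. Daniely chooses $g$ to satisfy the hypotheses of his own lower bound (bounded, Lipschitz, and oscillating at frequency roughly $\mathrm{poly}(d)$), but not with the uniform-sphere ReLU RKHS in mind, so membership with polynomial norm is not automatic. I see two natural routes. Either pick the smoothest variant of Daniely's $g$ still compatible with his proof and bound its RKHS norm directly from the Taylor expansion via Lemma~\ref{lem:proj_fn_norm}; or approximate the prescribed $g$ in sup norm by a smoother proxy $\tilde{g}\in\mathcal{H}_{\mathrm{ReLU},\tau_{1}}$ of $\mathrm{poly}(d)$ norm and check that $\Vert g\circ f-\tilde{g}\circ f\Vert_{L^{2}(\mathbb{P})}$ is small enough that Daniely's constant-gap lower bound survives for $\tilde{g}\circ f$. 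The second route is the more robust, since any sufficiently smooth high-frequency univariate oscillation composed with the inner product still defeats polynomial-width 2-layer ReLU networks on the double sphere.
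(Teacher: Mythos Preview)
Your overall plan is exactly the paper's: take $\mathbb{P}$ uniform on $\mathbb{S}^{d-1}\times\mathbb{S}^{d-1}$, set $f(x)=x_{1:d}\cdot x_{d+1:2d}$ and $g(t)=\sin(\pi d^{3}t)$, bound $\Vert f\Vert_{\mathrm{ReLU},\tau_{2d}}$ via \eqref{eq:dotprod_rkhs_norm}, and then invoke Daniely's lower bound. The part of the argument concerning $f$ and the final inapproximability step need no change.

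The genuine gap is your Route~1 for $\Vert g\Vert_{\mathrm{ReLU},\tau_{1}}$. Daniely's outer map must oscillate at frequency $\Theta(d^{3})$, not $O(1)$; its Taylor coefficients are therefore of size $(\pi d^{3})^{p}/p!$. Multiplying by the linear-in-$p$ factor from Lemma~\ref{lem:proj_fn_norm} and summing yields roughly $\sum_{p} p\,(\pi d^{3})^{p}/p! \asymp \pi d^{3}\exp(\pi d^{3})$, which is exponential in $d$, not polynomial. There is also a secondary obstruction: Lemma~\ref{lem:proj_fn_norm} is stated only for $p=1$ and even $p$, whereas the Taylor expansion of $\sin$ contains only odd powers. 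Your Route~2 does not obviously escape the first issue either, since any proxy $\tilde{g}$ that still oscillates fast enough to activate Daniely's harmonic-analysis lower bound will have the same Taylor growth.

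The paper avoids this entirely by appealing to a direct one-dimensional characterization of $\mathcal{H}_{\mathrm{ReLU},\tau_{1}}$ (its ``Proposition~5'', in the spirit of Bach's analysis of the ReLU RKHS), which gives $\Vert\sin(\pi d^{3}\,\cdot)\Vert_{\mathrm{ReLU},\tau_{1}}\le C d^{3}$ outright. The point is that in one dimension the ReLU RKHS norm is controlled by a low-order Sobolev/Lipschitz quantity that scales linearly in the frequency, not by the Taylor tail. Replacing your term-by-term Taylor estimate with such a derivative-based bound is the missing ingredient; once you have it, the rest of your proposal goes through verbatim and coincides with the paper's proof.
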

This proposition implies that no functions
in $ \mathcal{H}_{\mathrm{ReLU},\tau_{2d}} $
with norm less than $ \mathrm{poly}(d) $ can approximate this
composite function. Otherwise it contradicts the fact that any
functions in the $ \mathcal{H}_{\mathrm{ReLU},\tau_{2d}} $
with norm less than $ R $ can be approximated with an error of
$ \epsilon $ by a 2-layer ReLU network with $ O(R^2/\epsilon^2) $ nodes
and weights less than a certain absolute constant by Corollary~\ref{cor:main}.
And for the case of odd dimensions $d$ in Proposition~\ref{prop:separation},
we can just set $\mathcal{X}$ to be $\mathbb{S}^{(d-1)/2}
\times\mathbb{S}^{(d-1)/2}$ and use the construction for even dimensions.
Now we prove the above proposition.
\begin{proof}
  We use the construction of Example~2 in \cite{Daniely2017}, where
  $f(x)=\sum_{i=1}^d x_i\cdot x_d+i$ and $g(t)=\sin(\pi d^3 t)$, and the
  data distribution is the uniform distribution over $\mathbb{S}^{d-1}\times
  \mathbb{S}^{d-1}$. $f$ belongs to $\mathcal{H}_{\mathrm{ReLU},\tau_{2d}}$ with
  RKHS norm less than $12d\sqrt{(4d+2)\pi}$ as shown by
  Eq~\ref{eq:dotprod_rkhs_norm}. $g$ belongs to $\mathcal{H}_{\mathrm{ReLU},\tau_1}$ with RKHS norm less than $Cd^3$ where
  $C$ is an absolute constant by Proposition~5.
\end{proof}

By using Eldan and Shamir's result, we can also construct a function from
$\mathbb{R}^d\to \mathbb{R}$ composed by two functions from
$\mathcal{H}_{\mathrm{ReLU},\tau_d}$ and $\mathcal{H}_{\mathrm{ReLU},\tau_1}$
respectively, which cannot be approximated by any 2-layer ReLU networks with
$\mathrm{poly}(d)$ nodes. The procedure is as follows.

  For any $ \omega_i \in \mathbb{S}^{d} $ and $ c_i \in \mathbb{R} $,
  we define
  \[
    f_i(t) = c_i \mathrm{ReLU}(t(1-(\omega_i)_{d+1}^2)^{1/2}
    + \omega_{d+1})\,.
  \]
  Then
  \[
    f_i\left(x\cdot\frac{(\omega_i)_{1:d}}{\Vert(\omega_i)_{1:d}\Vert
    }\right) = c_i\mathrm{ReLU}(\omega_i\cdot(x,1))\,,
  \]
  where we use the convention that $ 0/\Vert 0\Vert = 0 $.
  Then by Proposition~13 of \cite{Eldan2016}, we know for any
  $\omega_i$ and $c_i$, there exists a function $ \tilde{g} $
  and data distribution $ \mathbb{P} $ such that
  \[
    \left\Vert\sum_{i=1}^N f_i\left(x\cdot\frac{(\omega_i)_{1:d}}{\Vert(\omega_i)_{1:d}
    \Vert}\right) - \tilde{g}(x)\right\Vert_{L^2(\mathbb{P})} \ge \frac{\delta}
    {\alpha}\,,
  \]
  where $ \delta $ and $ \alpha $ are two constants.
  Now we construct $ \tilde{g} $ as a composite function $ g\circ f $.
  We choose $ f(x) = \sqrt{\Vert x\Vert^2 + 1} $. By Equation~\ref{eq:radius_rkhs_norm},
  $ \Vert f\Vert_\mathcal{H} \le 2\sqrt{2\pi}(d+1)^{3/2} $.
  The data distribution is also chosen to be the squared Fourier transform of
  the indicator function of unit volume ball,
  the same with \cite{Eldan2016}. Then we construct $ g $ based on
  the function $ \tilde{g} $ in Proposition~13 in \cite{Eldan2016}.
  By Lemma~12 in Eldan and Shamir's work, we know that there exists
  $ N $-Lipschitz function  $ g_1 $ where $ N=c(\alpha d)^{3/2} $
  supported on $ [\alpha\sqrt{d}, 2\alpha\sqrt{d}] $ with range
  in $ [-1,1] $ such that
  \[
    \Vert g_1(\Vert x\Vert) - \tilde{g}(\Vert x\Vert)
    \Vert_{L^2(\mathbb{P})}^2 \le \frac{3}{\alpha^2\sqrt{d}}\,.
  \]
  Now we define $ g_2(t) = g_1(\sqrt{t^2-1}) $. It is supported on
  $ [\sqrt{\alpha^2 d+1}, \sqrt{4\alpha^2 d+1}] $ and $ 3N $-Lipschitz.
  And we have $ g_2(\sqrt{\Vert x\Vert^2+1}) = g_1(\Vert x\Vert) $.
  The last step is to find an approximator from
  $ \mathcal{H}_{\mathrm{ReLU},\tau_d} $ for $ g_2 $. This can be done
  be applying Proposition~6 in \cite{Bach2017a}, which implies
  in our setup that there exists a function $ g $ with
  $ \Vert g\Vert_{\mathcal{H}} \le M $ and
  \[
    \sup_{|t| \le \sqrt{4\alpha^2 d+1}}
    | g(t) - g_2(t) |
    \le C N\sqrt{4\alpha^2 d+1}
    \left(\frac{M}{3N\sqrt{4\alpha^2 d+1}}\right)^{-1/2}\,,
  \]
  where $ C $ is a universal constant.
  By setting $ M = (3N\sqrt{4\alpha^2 d+1})^{3/2} \alpha^2 \sqrt{d} $,
  we have
  \[
    \Vert g\circ f - \tilde{g} \Vert_{L^2(\mathbb{P})} \le O\left(
    \frac{1}{\alpha d^{1/4}}\right)\,.
  \]
  Therefore, we proved that
  \[
    \left\Vert\sum_{i=1}^N f_i\left(x\cdot\frac{(\omega_i)_{1:d}}{\Vert(\omega_i)_{1:d}
    \Vert}\right) - g\circ f(x)\right\Vert_{L^2(\mathbb{P})} \ge \frac{\delta}
    {2\alpha}\coloneqq c\,,
  \]

Eldan and Shamir's hard-to-approximate result does not rely on
any magnitude constraint over weights of networks,
in which sense it is stronger than Daniely's result.
The function constructed by Eldan and Shamir is compactly supported, but the
data distribution is over the whole $\mathbb{R}^d$, and thus it does not
rule out the possibility that such a target composite function can be
well-approximated by a 2-layer ReLU network against
any data distribution whose support is the same with the target function.

\section{Approximation Results of Single-Hidden-Layer ReLU Networks}
\label{app:main}

Before the proof of the the multi-layer approximation result, We first prove 
a single-hidden-layer approximation result for ReLU networks with bounds on the inner weights. 
We consider the functions with the following representation
\[f(x)\coloneqq \int_{\mathbb{R}^{d+1}}\mathrm{ReLU}(\omega\cdot(x,1))~\mathrm{d}\rho(\omega)\,,\]
where $\rho\in M(\mathbb{R}^{d+1})$. In particular, we denote the set of all such functions with 
$\int|\omega|\mathrm{d}|\rho|(\omega)\le R$ by $ \Lambda_R $. We further define that
$ \Lambda(\mathcal{X}) = \bigcup_{R>0} \Lambda_R(\mathcal{X}) $.


Some properties of $ \Lambda_R(\mathcal{X}) $ and $ \Lambda(\mathcal{X}) $
are given below.


\begin{proposition}
  \label{prop:lambda_R}
  \leavevmode
  \begin{enumerate}
    \item $ \Lambda(\mathcal{X}) $ consists of continuous functions.
    \item Functions in $ \Lambda_R(\mathcal{X}) $ are $ R $-Lipschitz.
    \item Assume that $ \mu $ is a probability measure on $ \mathbb{R}^{d+1} $ 
    with $ \int_\Omega |\omega|^2~\mathrm{d}\mu(\omega) = M_2 $. Then
    $ \{ f: \Vert f\Vert_{\mathrm{ReLU},\mu} \le R\}
      \subset \Lambda_{R\sqrt{M_2}}(\mathcal{X}) $.
    \item $ \Lambda(\mathcal{X}) $ is universal.
  \end{enumerate}
\end{proposition}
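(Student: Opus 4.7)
The four parts of Proposition~\ref{prop:lambda_R} are of increasing but still modest depth, and I would organize the proof as follows.

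For parts (1) and (2), the plan is to exploit the $1$-Lipschitz property of $\mathrm{ReLU}$. Given $f \in \Lambda_R(\mathcal{X})$ with representing signed measure $\rho$, I would estimate
\[
 |f(x)-f(y)| \le \int \bigl|\mathrm{ReLU}(\omega\cdot(x,1)) - \mathrm{ReLU}(\omega\cdot(y,1))\bigr|\,\mathrm{d}|\rho|(\omega) \le |x-y|\int|\omega|\,\mathrm{d}|\rho|(\omega) \le R|x-y|,
\]
which gives the Lipschitz bound in (2) and hence the continuity in (1) (for general $f\in\Lambda(\mathcal{X})$, continuity follows by choosing any $R$ with $f\in\Lambda_R$).

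For part (3), the plan is to lift the $L^2(\mu)$ representation to a signed measure via multiplication. Pick $f$ with $\|f\|_{\mathrm{ReLU},\mu}\le R$. Since the representation map $g\mapsto \int\mathrm{ReLU}(\omega\cdot(x,1))g(\omega)\,\mathrm{d}\mu(\omega)$ is a bounded linear operator on the Hilbert space $L^2(\mu)$, the minimum-norm preimage exists by orthogonal projection onto the closed complement of its kernel; thus we can choose $g\in L^2(\mu)$ with $\|g\|_{L^2(\mu)}\le R$ that realizes $f$. Setting $\mathrm{d}\rho(\omega) \coloneqq g(\omega)\,\mathrm{d}\mu(\omega)$ yields $f\in\Lambda_{?}$ with
\[
 \int|\omega|\,\mathrm{d}|\rho|(\omega) = \int|\omega||g(\omega)|\,\mathrm{d}\mu(\omega) \le \left(\int|\omega|^2\,\mathrm{d}\mu\right)^{1/2}\|g\|_{L^2(\mu)} \le \sqrt{M_2}\,R,
\]
by Cauchy--Schwarz, giving the claimed inclusion.

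For part (4), I would chain (3) with the universality result already established. Take $\mu=\tau_d$, the uniform measure on $\mathbb{S}^d$; it is supported on a (degenerate) ellipsoid in $\mathbb{R}^{d+1}$ and has $\int|\omega|^2\,\mathrm{d}\tau_d=1$. Proposition~\ref{prop:universality-relu} says $\mathcal{H}_{\mathrm{ReLU},\tau_d}$ is dense in $C(\mathcal{X})$, while part (3) gives $\mathcal{H}_{\mathrm{ReLU},\tau_d}\subset \Lambda(\mathcal{X})$. Since $\Lambda(\mathcal{X})$ contains a dense subset of $C(\mathcal{X})$, it is itself dense, i.e.\ universal.

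None of the steps demands a deep argument; the only subtle point is justifying the attained minimum in (3), for which the Hilbert-space projection argument is the cleanest route. If one wanted to avoid invoking attainment, the alternative would be an $\epsilon$-argument producing $f\in\Lambda_{\sqrt{M_2}R+\epsilon}$ for every $\epsilon>0$ together with a short closure argument showing $\Lambda_{R'}$ is closed under appropriate limits, but the projection approach is more direct and will be the main technical wrinkle in the write-up.
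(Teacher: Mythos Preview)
Your proofs of parts (1)--(3) are essentially identical to the paper's: both use the $1$-Lipschitz property of $\mathrm{ReLU}$ for (1) and (2), and Cauchy--Schwarz on $\int|\omega||g(\omega)|\,\mathrm{d}\mu$ for (3). Your remark about attaining the infimum via Hilbert-space projection is a nice bit of care the paper glosses over; the paper simply asserts the existence of a representing $g$ with $\|g\|_{L^2(\mu)}\le R$.

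For part (4) you take a genuinely different route. The paper argues directly that every finite ReLU network $\sum_i c_i\,\mathrm{ReLU}(\omega_i\cdot(x,1))$ lies in $\Lambda(\mathcal{X})$ (via the atomic measure $\rho=\sum_i c_i\delta_{\omega_i}$) and then invokes the classical universality of ReLU networks (Leshno et al.). You instead chain part (3) with Proposition~\ref{prop:universality-relu}: since $\mathcal{H}_{\mathrm{ReLU},\tau_d}$ is universal and embeds into $\Lambda(\mathcal{X})$, the latter is universal. Both arguments are correct and short. The paper's route is marginally more self-contained in that it avoids any appeal to the RKHS machinery, whereas your route has the pleasant feature of making the proposition internally coherent by reusing part (3). (Minor quibble: $\mathbb{S}^d$ is a perfectly non-degenerate $d$-dimensional ellipsoid, so you can drop the word ``degenerate''.)
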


\begin{proof}
  \leavevmode
  \begin{enumerate}
    \item This is implied by the second statement.
    \item $ \mathrm{ReLU} $ is $ 1 $-Lipschitz.
    For $ f $ in $ \Lambda_C(\mathcal{X}) $ and $ x_1,x_2 $ in $ \mathcal{X} $,
    \begin{align*}
      \vert f(x_1) - f(x_2) \vert & = \left\vert \int \left(\mathrm{ReLU}(\omega\cdot (x_1,1))
      - \mathrm{ReLU}(\omega\cdot (x_2,1)\right)~\mathrm{d}\rho(\omega) \right\vert \\
      & \le \int \left\vert \mathrm{ReLU}(\omega\cdot (x_1,1)
      - \mathrm{ReLU}(\omega\cdot (x_2,1) \right\vert~\mathrm{d}\vert\rho\vert(\omega) \\
      & \le \int \vert\omega\cdot (x_1 - x_2,0)\vert~\mathrm{d}\vert\rho\vert(\omega) \\
      & \le R\vert x_1-x_2\vert\,.
    \end{align*}
    \item For any $ f $ in the ball of radius $ R $ of the RKHS $ \mathcal{H}_{\mathrm{ReLU},\mu} $,
    we have that
    \begin{equation*}
      f(x) = \int \sigma(\omega\cdot (x,1))g(\omega)
      ~\mathrm{d}\mu(\omega)\quad\text{and}
      \quad \int g^2(\omega)~\mathrm{d}\mu(\omega) \le R^2\,.
    \end{equation*}
    Then
    \begin{align*}
      \int \vert\omega\vert \vert g(\omega)\vert~\mathrm{d}\mu(\omega)
      & \le \left(\int \omega\cdot\omega~\mathrm{d}\mu(\omega)
      \int g^2(\omega)~\mathrm{d}\mu(\omega)\right)^{1/2} \\
      & \le \sqrt{M_2}R\,.
    \end{align*}
    \item 
    This can be shown by the fact that all ReLU networks belong to
    $ \Lambda $ and the set of ReLU networks is universal.
  \end{enumerate}
\end{proof}
Usually $ \mathcal{H}_{\mathrm{ReLU},\mu} $ is strictly included in $ \Lambda $ and
so is $ \Lambda $ in $ C(\mathcal{X}) $.
Indeed, when $ \mu $ is absolutely continuous with respect to Lebesgue measure,
for any fixed $ \omega $, $ \mathrm{ReLU}(\omega\cdot (x,1)) $
belongs to $ \Lambda(\mathcal{X}) $, but not $ \mathcal{H}_{\mathrm{ReLU},\mu} $.
And $ \Lambda(\mathcal{X}) $ only contains Lipschitz functions,
but there clearly exist continuous functions over $ \mathcal{X} $ that are not Lipschitz.
Actually $ \Lambda_R $ is also strictly smaller than the set of 
$ R $-Lipschitz functions by the depth separation result in \cite{Eldan2016}.

For the functions in $ \Lambda_R(\mathcal{X}) $, we can approximate them by
ReLU networks. Stronger than Barron's approximation theorem,
our theorem provides $ O(1) $ upper bounds for both inner and outer weights of the
single-hidden-layer neural networks.
\begin{theorem}
  \label{thm:main}
  Assume that $ \mathcal{X} $ is bounded by a ball of radius $ r $.
  $ \mathbb{P} $ is a probability measure on $ \mathcal{X} $.
  For any $ f\in\Lambda_R(\mathcal{X}) $, there exists $ g(x)=\sum_{i=1}^N
  c_i\mathrm{ReLU}(\tilde{\omega}_i\cdot (x,1)) $
  with $ \vert c_i\vert = \tilde{R}/N \le R/N $, and $ \vert\tilde{\omega}_i\vert = 1 $
  for all $ i $, such that $ \Vert f-g\Vert_{L^2(\mathbb{P})} \le R\sqrt{r^2+1}/\sqrt{N} $.
\end{theorem}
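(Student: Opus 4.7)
The plan is to use the positive-homogeneity of ReLU to convert the (signed) measure representation of $f$ into an expectation against a probability distribution on the unit sphere $\mathbb{S}^d$, and then apply a Maurey-type probabilistic argument (sample i.i.d.\ from that distribution and use a variance bound).

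First I would unpack the hypothesis. Writing the Jordan decomposition $d\rho(\omega)=\epsilon(\omega)\,d|\rho|(\omega)$ with $\epsilon(\omega)\in\{-1,+1\}$, and using $\mathrm{ReLU}(\omega\cdot(x,1))=|\omega|\,\mathrm{ReLU}(\hat\omega\cdot(x,1))$ for $\hat\omega=\omega/|\omega|$, I would set $\tilde R := \int |\omega|\,d|\rho|(\omega)\le R$ (assume $\tilde R>0$, else $f\equiv 0$ and we are done) and define a probability measure $\nu$ on $\mathbb{R}^{d+1}\setminus\{0\}$ by $d\nu(\omega)=|\omega|/\tilde R\,d|\rho|(\omega)$. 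Then
\[
 f(x)=\tilde R\int \epsilon(\omega)\,\mathrm{ReLU}(\hat\omega\cdot(x,1))\,d\nu(\omega)=\mathbb{E}_{\omega\sim\nu}\!\left[\psi(\omega,x)\right],
\]
where $\psi(\omega,x):=\tilde R\,\epsilon(\omega)\,\mathrm{ReLU}(\hat\omega\cdot(x,1))$. For $x\in\mathcal{X}$ the bound $|\psi(\omega,x)|\le \tilde R|\hat\omega\cdot(x,1)|\le \tilde R\sqrt{r^2+1}$ is immediate.

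Next I would draw $\omega_1,\dots,\omega_N$ i.i.d.\ from $\nu$ and set
\[
 g(x)=\frac1N\sum_{i=1}^N\psi(\omega_i,x)=\sum_{i=1}^N c_i\,\mathrm{ReLU}(\tilde\omega_i\cdot(x,1)),\qquad c_i=\frac{\tilde R\,\epsilon(\omega_i)}{N},\ \tilde\omega_i=\hat\omega_i\in\mathbb{S}^d,
\]
so by construction $|c_i|=\tilde R/N\le R/N$ and $|\tilde\omega_i|=1$, matching the desired form. The Monte Carlo variance calculation, using Fubini and independence, gives
\[
 \mathbb{E}\,\Vert f-g\Vert_{L^2(\mathbb{P})}^2=\int_{\mathcal{X}}\mathrm{Var}_\nu\!\left(\tfrac1N\sum_i\psi(\omega_i,x)\right)d\mathbb{P}(x)\le\frac{1}{N}\int_{\mathcal{X}}\mathbb{E}_\nu\psi(\omega,x)^2\,d\mathbb{P}(x)\le \frac{\tilde R^2(r^2+1)}{N}\le\frac{R^2(r^2+1)}{N}.
\]
Since the expectation is bounded, there exists a realization achieving $\Vert f-g\Vert_{L^2(\mathbb{P})}\le R\sqrt{r^2+1}/\sqrt{N}$, which is exactly the claim.

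There is no real obstacle here; this is the standard Maurey/Barron i.i.d.\ sampling trick. The only subtleties to be careful about are (i) correctly normalizing the signed measure $\rho$ into a probability measure via the weight $|\omega|$, which is exactly what collapses the inner-weight norms onto the sphere and keeps the outer-weight magnitudes uniformly equal to $\tilde R/N$, and (ii) tracking $\tilde R$ versus $R$ so that the final statement follows from the hypothesis $\int|\omega|\,d|\rho|\le R$ rather than the stronger quantity $\tilde R$ (the proof in fact yields the slightly sharper constant $\tilde R\sqrt{r^2+1}/\sqrt N$).
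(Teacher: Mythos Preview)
Your proposal is correct and follows essentially the same route as the paper: normalize the signed measure $\rho$ via the weight $|\omega|/\tilde R$ to a probability measure, use the homogeneity of $\mathrm{ReLU}$ to push the inner weights onto $\mathbb{S}^d$, and then apply the Maurey i.i.d.\ sampling/variance argument. The paper invokes Maurey's sparsification lemma as a black box where you spell out the $L^2$ variance computation directly, but the content is identical.
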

This theorem shows that for ReLU networks to approximate
the functions in $ \Lambda_R(\mathcal{X}) $ within an error of $ \epsilon $,
only $ O(C/\epsilon^2) $ nodes are required.
Moreover, every outer weights in front of nodes are bounded by
$ O(\epsilon^2) $ and only differ by signs, and the inner weights are of unit length.
Compared with the theorem in \cite{Barron1993}, where outer weights are bounded by a constant under
$ \ell^1 $ norm, but the bound on inner weights depends inversely on the approximation
error, our extra bounds on inner weights largely shrink the search area for approximators.
This improvement comes from the fact that functions in $ \Lambda_R(\mathcal{X}) $ are defined
by the transformation induced by ReLU functions, which is homogeneous of degree 1. The proof is an
application of Maurey's sparsification lemma \citep{Pisier1980}.

\begin{lemma}[Maurey's sparsification]
\label{lem:maurey}
Assume that $ X_1,\ldots,X_n $ are $ n $ i.i.d. random variables with values in the unit ball
of a Hilbert space. Then with probability greater than $ 1-\delta $, we have
\begin{equation}
  \left\Vert \frac{1}{n}\sum_{i=1}^n X_i - \mathbb{E}X_1 \right\Vert
  \le \frac{1}{\sqrt{n}}\left(1+\sqrt{2\log\frac{1}{\delta}}\right)\,.
\end{equation}
Furthermore, there exists $ x_i,\ldots,x_n $ in the unit ball such that
\begin{equation}
  \left\Vert \frac{1}{n}\sum_{i=1}^n x_i - \mathbb{E}X_1 \right\Vert
  \le \frac{1}{\sqrt{n}}\,.
\end{equation}
\end{lemma}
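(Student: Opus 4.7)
}
The plan is to use the standard probabilistic method argument, where the high probability bound follows from a variance computation combined with McDiarmid's bounded differences inequality, and the deterministic existence statement follows from a direct averaging argument.

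First, I would compute the second moment of $\|\frac{1}{n}\sum_{i=1}^n X_i - \mathbb{E}X_1\|$. Since the $X_i$ are i.i.d., expanding the squared norm as an inner product gives the standard cancellation of cross terms, leaving
\[
  \mathbb{E}\left\Vert\frac{1}{n}\sum_{i=1}^n X_i - \mathbb{E}X_1\right\Vert^2
  = \frac{1}{n}\,\mathbb{E}\Vert X_1-\mathbb{E}X_1\Vert^2
  = \frac{1}{n}\left(\mathbb{E}\Vert X_1\Vert^2 - \Vert\mathbb{E}X_1\Vert^2\right)
  \le \frac{1}{n},
\]
using that $\Vert X_1\Vert\le 1$ almost surely. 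The deterministic ``furthermore'' clause follows immediately: since the expected squared norm is at most $1/n$, there must exist at least one realization $(x_1,\ldots,x_n)$ of $(X_1,\ldots,X_n)$ for which $\|\frac{1}{n}\sum x_i-\mathbb{E}X_1\|\le 1/\sqrt n$, and each $x_i$ lies in the unit ball by construction.

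For the high-probability bound, I would set $F(X_1,\ldots,X_n) \coloneqq \|\frac{1}{n}\sum_{i=1}^n X_i - \mathbb{E}X_1\|$ and verify the bounded differences property: replacing a single $X_i$ with an alternative $X_i'$ in the unit ball changes the empirical mean by at most $\frac{2}{n}$ in norm, and hence changes $F$ by at most $c_i=\frac{2}{n}$ by the triangle inequality. Applying McDiarmid's inequality with $\sum_i c_i^2 = 4/n$ yields
\[
  \mathbb{P}\bigl(F \ge \mathbb{E}F + t\bigr) \le \exp\bigl(-nt^2/2\bigr).
\]
Combining with Jensen's inequality $\mathbb{E}F \le (\mathbb{E}F^2)^{1/2} \le 1/\sqrt n$ and choosing $t=\sqrt{2\log(1/\delta)/n}$ to make the right-hand side equal $\delta$ gives
\[
  F \le \frac{1}{\sqrt n} + \sqrt{\frac{2\log(1/\delta)}{n}}
  = \frac{1}{\sqrt n}\left(1+\sqrt{2\log\frac{1}{\delta}}\right)
\]
with probability at least $1-\delta$, as required.

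No step is genuinely hard here; the only subtlety is verifying the bounded-differences constant correctly (the factor of $2$ in $c_i=2/n$ comes from both $X_i$ and $X_i'$ having norm at most $1$), which affects the final constant inside the square root. If I had miscounted and used $c_i=1/n$ I would get a better constant than claimed, while $c_i=2/n$ exactly reproduces the stated bound $1+\sqrt{2\log(1/\delta)}$.
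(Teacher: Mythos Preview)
Your proof is correct. The second-moment computation and probabilistic-method argument for the existence clause are standard and sound, and the McDiarmid step is set up correctly: the bounded-differences constant $c_i=2/n$ is right, the resulting tail bound $\exp(-nt^2/2)$ matches, and combining with $\mathbb{E}F\le (\mathbb{E}F^2)^{1/2}\le 1/\sqrt n$ via Jensen recovers exactly the stated constant.

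There is nothing to compare against in the paper itself: the lemma is stated there as a known result attributed to \cite{Pisier1980} and is not proved; it is simply invoked to obtain Theorem~\ref{thm:main}. So your write-up actually supplies an argument the paper omits. If anything, you might remark that only the second (existence) inequality is used downstream in the paper's application, so the McDiarmid portion, while correct, is not strictly needed for the results that follow.
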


With this lemma, we can prove Theorem~\ref{thm:main}.
\begin{proof}
  If $ f $ is constantly 0, we only need to choose $ c_i $
  to be $ 0 $ for all $ i\in[N] $. Now assume that $ f $ is not constantly 0.
  Then we can always restrict the feature space to
  $ S = \mathbb{R}^{d+1}\backslash\{0\} $ and have
  $ 0 < \int_S \vert\omega\vert~\mathrm{d}\vert\rho\vert(\omega) = \tilde{R} \le R $.
  $ f $ can be written into the following form
  \begin{equation*}
    f(x) = \int_S \tilde{R}\tau(\omega)\mathrm{ReLU}\left(\frac{\omega\cdot
    (x,1)}{\vert\omega\vert}\right)
    \frac{\vert\omega\vert}{\tilde{R}}~\mathrm{d}\vert\rho\vert(\omega)\,,
  \end{equation*}
  where $ \tau(\omega) $ equals 1 when $ \omega $ belongs to the positive set of $ \rho $
  and -1 when it belongs to the negative set of $ \rho $. Since
  $ \frac{\vert\omega\vert}{\tilde{R}}~\mathrm{d}\vert\rho\vert(\omega) $ is a probability
  measure and
  \begin{equation*}
    \left\Vert \tilde{R}\tau(\omega)\mathrm{ReLU}\left(\frac{\omega\cdot
    (x,1)}{\vert\omega\vert}\right)\right\Vert_{L^2(\mathbb{P})}
    \le \tilde{R}a_\mathrm{ReLU}\sqrt{r^2 + 1}\,,
  \end{equation*}
  we can apply Lemma~\ref{lem:maurey} and get the conclusion that there exists
  $ \omega_i $s such that
  \begin{equation*}
    \left\Vert f-\frac{1}{N}\sum_{i=1}^N
    \tilde{R}\tau(\omega_i)\mathrm{ReLU}\left(\frac{\omega_i\cdot
    (x,1)}
    {\vert\omega_i\vert}\right)\right\Vert_{L^2(\mathbb{P})}
    \le \frac{\tilde{R}a_\mathrm{ReLU}\sqrt{r^2+1}}{\sqrt{N}}\,.
  \end{equation*}
  By setting $ \tilde{\omega}_i = \omega_i/\vert\omega_i\vert $ and
  $ c_i = \tilde{R}\tau(\omega_i)/N $, the statement is proved.
\end{proof}

A direct corollary of this theorem is as follows.
\begin{corollary}
\label{cor:main}
For any $f\in\mathcal{H}_{\mathrm{ReLU},\tau_d}$
with $\Vert f\Vert_{\mathrm{ReLU},\tau_d}\le R$,
there exists $ g(x)=\sum_{i=1}^N c_i\mathrm{ReLU}(\tilde{\omega}_i\cdot (x,1)) $
with $ \vert c_i\vert = \tilde{R}/N \le R/N $, 
and $ \vert\tilde{\omega}_i\vert = 1 $
for all $ i $, such that $ \Vert f-g\Vert_{L^2(\mathbb{P})} \le R\sqrt{r^2+1}/\sqrt{N} $.
\end{corollary}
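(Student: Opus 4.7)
The plan is to derive this corollary as an immediate consequence of Theorem~\ref{thm:main} combined with part~(3) of Proposition~\ref{prop:lambda_R}. The key observation is that the uniform distribution $\tau_d$ on $\mathbb{S}^d$ is concentrated on unit vectors, so its second moment $M_2 = \int |\omega|^2~\mathrm{d}\tau_d(\omega)$ equals $1$.

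First I would invoke Proposition~\ref{prop:lambda_R}(3) with $\mu = \tau_d$. Since $M_2 = 1$, the inclusion
\[
\{ f : \Vert f\Vert_{\mathrm{ReLU},\tau_d} \le R \} \subset \Lambda_{R\sqrt{M_2}}(\mathcal{X}) = \Lambda_R(\mathcal{X})
\]
holds, so any $f \in \mathcal{H}_{\mathrm{ReLU},\tau_d}$ with $\Vert f\Vert_{\mathrm{ReLU},\tau_d}\le R$ lies in $\Lambda_R(\mathcal{X})$.

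Next I would simply apply Theorem~\ref{thm:main} to this $f$. The theorem yields coefficients $c_i$ with $|c_i| = \tilde{R}/N \le R/N$ and unit vectors $\tilde\omega_i$ such that the resulting single-hidden-layer ReLU network $g$ satisfies $\Vert f - g\Vert_{L^2(\mathbb{P})} \le R\sqrt{r^2+1}/\sqrt{N}$, which is exactly the desired bound.

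There is no real obstacle here; the corollary is a straightforward specialization. The only point that requires a sentence of justification is the evaluation $M_2 = 1$ for $\tau_d$, after which Theorem~\ref{thm:main} delivers the conclusion with no further work. No Maurey-type argument needs to be repeated, since that was already carried out in the proof of Theorem~\ref{thm:main}.
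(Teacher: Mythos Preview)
Your proposal is correct and matches the paper's approach exactly: the paper presents this as a ``direct corollary'' of Theorem~\ref{thm:main} with no separate proof, and the implicit reasoning is precisely the one you spell out---use Proposition~\ref{prop:lambda_R}(3) with $\mu=\tau_d$ (so $M_2=1$) to place $f$ in $\Lambda_R(\mathcal{X})$, then apply Theorem~\ref{thm:main}.
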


%



\section{Proof of Approximation Results of Multi-Layer ReLU Networks}
\label{app:multi}

We extend the definition of $\Lambda_R$ in Appendix~\ref{app:main} 
to vector-valued functions.
\begin{definition}
  For a map $f$ from $ \mathbb{R}^m $ to $ \mathbb{R}^n $, we say that it belongs
  to the class $ \Lambda_R(\mathcal{X}) $ if each component $ (f)_i \in \Lambda_{R_i}(\mathcal{X}) $
  for $ 1\le i\le n $ and $ \sum_i R_i^2 \le R^2 $.
\end{definition}
Note that Proposition~\ref{prop:lambda_R}~(2) still holds for the vector valued
function class $ \Lambda_R $,
and any vector valued function $f\in\mathcal{H}_{\mathrm{ReLU},\tau_d}$ 
with $\Vert f\Vert_{\mathrm{ReLU},\tau_d}\le R$ also belongs to 
$\Lambda_R$.
Therefore, to prove the approximation result in Proposition~\ref{prop:multi}, we need only prove
it for $\Lambda_R$.
\begin{theorem}
\label{thm:multi}
  Assume that for all $ 1\le i\le L+1 $, $ K_i $ is a compact set with
  radius $ r $ in $ \mathbb{R}^{m_i} $ and $ B^{m_i} $ is the unit ball
  in $ \mathbb{R}^{m_i} $ . $ K_1 = \mathcal{X} $.
  $ f_i:K_i+sB^{m_i}\to K_{i+1} $ belongs to
  $ \Lambda_{R_i}(K_{i}+sB^{m_{i}}) $ for $ 1\le i\le L $. Then for any
  $ 1\le\ell\le L $ and $ \epsilon>0 $, there exists a set $ S_\ell\subset K_1 $ with
  \begin{equation*}
    \mathbb{P}(S_\ell) \ge 1 - \frac{\epsilon^2}{s^2}\sum_{i=1}^{\ell-1}
    \frac{i^2}{\prod_{j=i+1}^LR_j^2}\,,
  \end{equation*}
  and an $ \ell $-layer neural networks $ g_{1:\ell} $ where
  \begin{align*}
    g_i & :\mathbb{R}^{m_{i}}\to\mathbb{R}^{m_{i+1}} \\
    (g_i(x))_j & = \sum_{k=1}^{N_i}c_{i,j,k}
    \sigma(\omega_{i,j,k}\cdot (x,1)) \\
    \vert c_{i,j,k}\vert & = \frac{\tilde{R}_{i,j}}{N_i}\le \frac{R_{i,j}}{N_i} \\
    \vert \omega_{i,j,k}\vert & \le 1 \\
    N_i & = \frac{\prod_{j=i}^LR_j^2((r+s)^2+1)}{\epsilon^2}\,,
  \end{align*}
  such that
  \begin{equation*}
    \left(\int_{S_\ell}\vert f_{1:\ell}-g_{1:\ell}\vert^2~\mathrm{d}\mathbb{P}\right)^{1/2}
    \le \frac{\ell\epsilon}{\prod_{i=\ell+1}^LR_i}\,,
  \end{equation*}
  with the convention that $\prod_{i=L+1}^L R_i=1$.
  Moreover, the $ i $th layer of the neural network $ g_{1:\ell} $
  contains $ m_{i+1}N_i $ nodes. The weight matrix from layer $ i $
  to layer $ i+1 $, denoted by $ W_{i\to i+1} $, has Frobenius norm
  bounded by $ \sqrt{m_{i+2}} $ for $1\le i\le L-1$, 
  $\Vert W_{0\to1} \Vert_F \le \prod_{i=1}^L R_i
  \sqrt{m_2((r+s)^2+1)}$, and
  $\Vert W_{L\to L+1} \Vert_F \le \sqrt{(r+s)^2+1}$.
  Each bias term is bounded by $ 1 $.
\end{theorem}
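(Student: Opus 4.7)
The plan is to prove Theorem~\ref{thm:multi} by induction on the depth $\ell$. The base case $\ell=1$ follows directly from Theorem~\ref{thm:main} applied component-wise: since $f_1\in\Lambda_{R_1}(K_1+sB^{m_1})$ means each coordinate $(f_1)_j\in\Lambda_{R_{1,j}}$ with $\sum_j R_{1,j}^2\le R_1^2$, and the domain has radius $r+s$, Theorem~\ref{thm:main} yields weights $c_{1,j,k}$ with $|c_{1,j,k}|=\tilde R_{1,j}/N_1\le R_{1,j}/N_1$ and unit inner weights $\omega_{1,j,k}$ such that $\|(f_1)_j-(g_1)_j\|_{L^2(\mathbb{P})}\le R_{1,j}\sqrt{(r+s)^2+1}/\sqrt{N_1}$. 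Summing squares over $j$, the $L^2$ error of the vector-valued function $g_1$ is at most $R_1\sqrt{(r+s)^2+1}/\sqrt{N_1}=\epsilon/\prod_{i=2}^L R_i$, and we take $S_1=K_1$.

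For the inductive step, assume $g_{1:\ell}$ approximates $f_{1:\ell}$ on $S_\ell$ with error $\ell\epsilon/\prod_{i=\ell+1}^L R_i$. The key obstacle is that $f_{\ell+1}$ is only defined on $K_{\ell+1}+sB^{m_{\ell+1}}$, so we cannot blindly compose $f_{\ell+1}\circ g_{1:\ell}$; the plan is to shrink the good set using Markov's inequality. Define
\[
    S_{\ell+1}\coloneqq S_\ell\cap\{x:|f_{1:\ell}(x)-g_{1:\ell}(x)|\le s\}.
\]
Markov's inequality on $S_\ell$ gives $\mathbb{P}(S_\ell\setminus S_{\ell+1})\le \ell^2\epsilon^2/(s^2\prod_{i=\ell+1}^L R_i^2)$, which combined with the inductive measure bound yields the claimed lower bound on $\mathbb{P}(S_{\ell+1})$. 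On $S_{\ell+1}$, since $f_{1:\ell}(x)\in K_{\ell+1}$, the output $g_{1:\ell}(x)$ lies in $K_{\ell+1}+sB^{m_{\ell+1}}$, which has radius $r+s$. Now apply Theorem~\ref{thm:main} to each component of $f_{\ell+1}$ with respect to the pushforward (renormalized) measure to obtain $g_{\ell+1}$ with $N_{\ell+1}$ nodes per output, giving $\|f_{\ell+1}-g_{\ell+1}\|_{L^2(\text{pushforward})}\le R_{\ell+1}\sqrt{(r+s)^2+1}/\sqrt{N_{\ell+1}}=\epsilon/\prod_{j=\ell+2}^L R_j$.

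To propagate the error through composition I will use the Lipschitz property from Proposition~\ref{prop:lambda_R}(2), which tells us that any vector-valued function in $\Lambda_{R_{\ell+1}}$ is $R_{\ell+1}$-Lipschitz. Splitting via the triangle inequality,
\[
    |f_{\ell+1}(f_{1:\ell}(x))-g_{\ell+1}(g_{1:\ell}(x))|\le R_{\ell+1}|f_{1:\ell}(x)-g_{1:\ell}(x)|+|f_{\ell+1}(g_{1:\ell}(x))-g_{\ell+1}(g_{1:\ell}(x))|,
\]
then taking $L^2(S_{\ell+1},\mathbb{P})$ norms and applying Minkowski gives
\[
    \|f_{1:\ell+1}-g_{1:\ell+1}\|_{L^2(S_{\ell+1})}\le R_{\ell+1}\cdot\frac{\ell\epsilon}{\prod_{i=\ell+1}^L R_i}+\frac{\epsilon}{\prod_{j=\ell+2}^L R_j}=\frac{(\ell+1)\epsilon}{\prod_{i=\ell+2}^L R_i},
\]
completing the induction.

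Finally, I verify the weight-matrix bounds by expanding the composition. The entry of $W_{i\to i+1}$ from hidden node $(j,k)$ in layer $i$ to hidden node $(j',k')$ in layer $i+1$ equals $(\omega_{i+1,j',k'})_j\, c_{i,j,k}$, with bias equal to the $(m_{i+1}+1)$-th coordinate of $\omega_{i+1,j',k'}$, which is at most $1$ in absolute value. Since $\sum_k c_{i,j,k}^2\le R_{i,j}^2/N_i$, $\sum_j R_{i,j}^2\le R_i^2$, and $|\omega_{i+1,j',k'}|\le 1$, a direct calculation gives $\|W_{i\to i+1}\|_F^2\le m_{i+2}N_{i+1}R_i^2/N_i=m_{i+2}$ using $N_i=R_i^2 N_{i+1}$. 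The first weight matrix $W_{0\to 1}$ consists of the $\omega_{1,j,k}$ inner weights, each of norm at most $1$, and the last matrix $W_{L\to L+1}$ consists of the output coefficients $c_{L,j,k}$, yielding the stated bounds. The main obstacle throughout is the domain-control step via Markov's inequality, since without it the pushforward measure might place mass outside the domain of $f_{\ell+1}$ and the inductive hypothesis would break.
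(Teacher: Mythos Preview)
Your induction argument---base case via component-wise application of Theorem~\ref{thm:main}, shrinking the good set by Markov's inequality, constructing $g_{\ell+1}$ against the pushforward, and propagating error through the $R_{\ell+1}$-Lipschitz property of $f_{\ell+1}$---is exactly the paper's approach, and the intermediate weight-matrix bound $\Vert W_{i\to i+1}\Vert_F^2\le m_{i+2}N_{i+1}R_i^2/N_i=m_{i+2}$ is correct.

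There is, however, one genuine gap in your final paragraph. You write that $W_{0\to1}$ ``consists of the $\omega_{1,j,k}$ inner weights, each of norm at most $1$, \ldots\ yielding the stated bounds.'' But $W_{0\to1}$ has $m_2N_1$ rows, each of Euclidean norm at most $1$, so
\[
\Vert W_{0\to1}\Vert_F\le\sqrt{m_2N_1}=\frac{1}{\epsilon}\prod_{i=1}^L R_i\sqrt{m_2((r+s)^2+1)},
\]
which blows up as $\epsilon\to0$ and is \emph{not} the stated $\epsilon$-free bound. Symmetrically, your construction gives $\Vert W_{L\to L+1}\Vert_F\le R_L/\sqrt{N_L}=\epsilon/\sqrt{(r+s)^2+1}$, which is of order $\epsilon$ rather than the stated $\sqrt{(r+s)^2+1}$. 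The paper closes this by a rescaling argument using the degree-$1$ homogeneity of ReLU: multiply $W_{0\to1}$ by $\epsilon$, multiply every intermediate bias by $\epsilon$, and divide $W_{L\to L+1}$ by $\epsilon$. This leaves the function $g_{1:L}$ unchanged, keeps the intermediate weight matrices intact, keeps all biases below $1$ (for $\epsilon<1$), and brings the first and last weight matrices to the stated bounds. Without this rescaling step the theorem's weight bounds do not follow.
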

\begin{proof}
  For $ \ell=1 $, we construct the approximation $ g_1 $ for $ f_1 $ by applying
  Theorem~\ref{thm:main} to each component of $ f_1 $. First, set $ S_1=K_1 $.
  \begin{align*}
    \int_{S_1}\vert g_{1}(x)-f_{1}(x)\vert^2
    ~\mathrm{d}\mathbb{P}(x)
    & = \sum_{i=1}^{m_{2}} \int_{S_1}(f_{1}(x)-g_{1}(x))_i^2
    ~\mathrm{d}\mathbb{P}(x) \\
    & \le \sum_{i=1}^{m_{2}} \frac{R_{1,i}^2 ((r+s)^2+1)}{N_{1}} \\
    & \le \frac{R_{1}^2((r+s)^2+1)}{N_{1}}\,.
  \end{align*}
  Set
  \begin{equation*}
    N_1 = \frac{\prod_{i=1}^LR_i^2((r+s)^2+1)}{\epsilon^2}
  \end{equation*}
  and the conclusion holds for $ \ell=1 $.
  Assume that there exist $ g_{1:\ell} $ and $ S_{\ell} $ as described in the
  theorem. Define
  \begin{equation*}
    S_{\ell+1} = S_{\ell}\cap\left\{x\in K_1:\vert g_{1:\ell}(x)-f_{1:\ell}(x)
    \vert\le s\right\}\,.
  \end{equation*}
  Then by Markov's inequality and induction assumption,
  \begin{equation*}
    \mathbb{P}(S_{\ell+1})\ge 1-\frac{\epsilon^2}{s^2}\sum_{i=1}^{\ell-1}
    \frac{i^2}{\prod_{j=i+1}^L R_j^2 }
    - \frac{\epsilon^2\ell^2}{s^2\prod_{i=\ell+1}^LR_i^2}\,.
  \end{equation*}
  Then we want to construct $ g_{\ell+1} $ on $ g_{1:\ell}(S_{\ell+1}) $ to approximate
  $ f_{\ell+1} $, again by applying Theorem~\ref{thm:main} to each component of $ f_{\ell+1} $.
  Note that the measure we consider here is the push-forward of $ \mathbb{P} $ by
  $ g_{1:\ell} $, which is a positive measure with total measure less than $ 1 $.
  \begin{equation*}
    \int_{g_{1:\ell}(S_{\ell+1})}\vert g_{\ell+1}(x)-f_{\ell+1}(x)\vert^2
    ~\mathrm{d}g_{1:\ell}(\mathbb{P})(x) \le \frac{R_{\ell+1}^2((r+s)^2+1)}{N_{\ell+1}}\,.
  \end{equation*}
  And by triangle inequality,
  \begin{align*}
    & \hspace{-2em}\left(\int_{S_{\ell+1}}\vert g_{1:\ell+1}(x) - f_{1:\ell+1}(x)\vert^2
    ~\mathrm{d}\mathbb{P}(x)\right)^{1/2} \\
    & \le \left(\int_{S_{\ell+1}}\vert g_{\ell+1}\circ g_{1:\ell}(x) - f_{\ell+1}
    \circ g_{1:\ell}(x)\vert^2 ~\mathrm{d}\mathbb{P}(x)\right)^{1/2} \\
    & + \left(\int_{S_{\ell+1}}\vert f_{\ell+1}\circ g_{1:\ell}(x) - f_{\ell+1}
    \circ f_{1:\ell}(x)\vert^2 ~\mathrm{d}\mathbb{P}(x)\right)^{1/2} \\
    & \le \frac{R_{\ell+1}\sqrt{(r+s)^2+1}}{\sqrt{N_{\ell+1}}}
    + R_{\ell+1}\frac{\ell\epsilon}{\prod_{i=\ell+1}^{L}R_i}\,.
  \end{align*}
  Set
  \begin{equation*}
    N_{\ell+1} = \frac{((r+s)^2+1)\prod_{i=\ell+1}^LR_i^2}{\epsilon^2}\,,
  \end{equation*}
  and we get the upper bound
  \begin{equation*}
    \left(\int_{S_{\ell+1}}\vert g_{1:\ell+1}(x) - f_{1:\ell+1}(x)\vert^2
    ~\mathrm{d}\mathbb{P}(x)\right)^{1/2}
    \le \frac{(\ell+1)\epsilon}{\prod_{i=\ell+  2}^LR_i}\,.
  \end{equation*}

  Now let's examine the weight matrix and the bias term of the neural network
  $ g_{1:\ell} $. Note that the weight matrix from $ i $th layer to the $ (i+1) $th
  layer of $ g_{1:\ell} $ consists of $ c_{i,j,k} $ from $ g_{i} $
  and $ \omega_{i+1,j,k} $ from $ g_{i+1} $. For $ 1\le i\le L-1 $, the exact form
  of the weight matrices is given by the matrix product $ W_{i\to i+1} = AB $, where
  \begin{equation*}
    A_{i\to i+1} =
    \begin{pmatrix}
      (\omega_{i+1,1,1})_1 & \cdots & (\omega_{i+1,1,1})_{m_{i+1}} \\
      \vdots & \ddots & \vdots \\
      (\omega_{i+1,1,N_{i+1}})_1 & \cdots & (\omega_{i+1,1,N_{i+1}})_{m_{i+1}} \\

      (\omega_{i+1,2,1})_1 & \cdots & (\omega_{i+1,2,1})_{m_{i+1}} \\
      \vdots & \ddots & \vdots \\
      (\omega_{i+1,2,N_{i+1}})_1 & \cdots & (\omega_{i+1,2,N_{i+1}})_{m_{i+1}} \\

      \vdots & \ddots & \vdots \\

      (\omega_{i+1,m_{i+2},1})_1 & \cdots & (\omega_{i+1,m_{i+2},1})_{m_{i+1}} \\
      \vdots & \ddots & \vdots \\
      (\omega_{i+1,m_{i+2},N_{i+1}})_1 & \cdots & (\omega_{i+1,m_{i+2},N_{i+1}})_{m_{i+1}} \\
    \end{pmatrix}
  \end{equation*}
  \begin{equation*}
    B_{i\to i+1}
    \begin{pmatrix}
      \frac{\tilde{R}_{i,1}\tau_{i,1,1}}{N_i} \cdots
      \frac{\tilde{R}_{i,1}\tau_{i,1,N_i}}{N_i} & 0 \cdots 0 & 0 \cdots 0 \\
      0 \cdots 0 & \frac{\tilde{R}_{i,2}\tau_{i,2,1}}{N_i} \cdots
      \frac{\tilde{R}_{i,2}\tau_{i,2,N_i}}{N_i} & 0 \cdots 0 \\
      \vdots & \ddots & \vdots \\
      0 \cdots 0 & 0 \cdots 0 & \frac{\tilde{R}_{i,m_{i+1}}\tau_{i,m_{i+1},1}}{N_i} \cdots
      \frac{\tilde{R}_{i,m_{i+1}}\tau_{i,m_{i+1},N_i}}{N_i}
    \end{pmatrix}\,.
  \end{equation*}
  $ \tau_{i,j,k} $s are all $ \pm 1 $. Since $ \vert\omega_{i,j,k}\vert \le 1 $,
  each components are also bounded by $ 1 $. For $k$th row of $ W_{i\to i+1} $,
  \begin{align*}
    \vert (W_{i\to i+1})_k \vert^2 & =
    \sum_{j=1}^{m_{i+1}} N_i\frac{\tilde{R}_{i,j}^2}{N_i^2} \\
    & \le \frac{R_i^2}{N_i}\,.
  \end{align*}
  Hence $ \vert W_{i\to i+1} \vert_F^2 \le R_i^2 N_{i+1}m_{i+2}/N_i $.
  Plugging into the expressions of $ N_i $ and $ N_{i+1} $ and taking square root,
  we get the upper bound $ m_{i+2} $ as in the statement.
  For the bias term, we simply use the fact that it is a coordinate
  of $ \omega $ and thus bounded by $ 1 $.

  For the bottom layer the weight matrix is just given by $ A_{0\to1} $, its
  Frobenius norm can only be bounded by $ (N_1 m_2)^{1/2} $. On the other hand,
  the weight matrix of the top layer is given by $ B_{L\to L+1} $, whose
  Frobenius norm is bounded by $ R_L/\sqrt{N_L} $. Since $ N_i $ is of the
  scale $ 1/\epsilon^2 $, $ \vert A_{0\to1} \vert_F $ is of the scale $ \epsilon $
  while $ \vert B_{L\to L+1} $ is of the scale $ 1/\epsilon $. To resolve this
  issue, we can rescale $ A_{0\to1} $ down by a factor of $ \epsilon $. Because the
  ReLU nodes are homogeneous of degree $ 1 $, for the function $ g_{1:L} $
  remains unchanged, we need only scale all the bias terms in the
  intermediate layers down by the same factor $ \epsilon $, and scale up $ B_{L\to L+1} $
  by the factor of $ 1/\epsilon $. For $ \epsilon < 1 $, this rescaling keeps
  the weights matrices in the intermediate layers unchanged, the bias terms are
  still all less than $ 1 $. And
  \begin{equation*}
    \begin{split}
    \vert \epsilon A_{0\to1} \vert_F \le \prod_{i=1}^L R_i
    \sqrt{m_2((r+s)^2+1)}\,,\\
    \vert B_{L\to L+1}/\epsilon \vert_F \le \sqrt{(r+s)^2+1}\,.
    \end{split}
  \end{equation*}
\end{proof}

\begin{proposition*}\ref{prop:multi}.
Under the same assumptions of Theorem~\ref{thm:multi}, we have
\[\left(\int_{\mathcal{X}}|f_{1:L}-g_{1:L}|^2~\mathrm{d}\mathbb{P}\right)^{1/2}
\le C\epsilon\,,\]
where $C$ is a constant depending on $L,s,r,\{R_i\}_{i=1}^L$.
\end{proposition*}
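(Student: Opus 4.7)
The plan is to derive Proposition~\ref{prop:multi} as a direct corollary of Theorem~\ref{thm:multi} by extending the integral bound from the ``good'' set $S_L$ to all of $\mathcal{X}$. Since Theorem~\ref{thm:multi} with $\ell = L$ already yields
\[
\left(\int_{S_L}|f_{1:L}-g_{1:L}|^2\,\mathrm{d}\mathbb{P}\right)^{1/2} \le L\epsilon,
\qquad
\mathbb{P}(\mathcal{X}\setminus S_L) \le \frac{\epsilon^2}{s^2}\sum_{i=1}^{L-1}\frac{i^2}{\prod_{j=i+1}^L R_j^2},
\]
it suffices to control the integral over the complementary set $\mathcal{X}\setminus S_L$ by a constant times $\epsilon^2$.

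To do this, I would first establish a uniform bound $|f_{1:L}(x)-g_{1:L}(x)|\le M$ for all $x\in\mathcal{X}$, where $M$ depends only on $r,\{R_i\}$ and $L$. On the $f$ side, this is immediate: $f_{1:L}(x)\in K_{L+1}$ and $K_{L+1}$ has radius $r$, so $|f_{1:L}(x)|\le r$. On the $g$ side, I would use the explicit structure from the proof of Theorem~\ref{thm:multi}: each coordinate of $g_i$ is a sum $\sum_k c_{i,j,k}\,\mathrm{ReLU}(\omega_{i,j,k}\cdot(x,1))$ with $|\omega_{i,j,k}|\le 1$ and $\sum_k|c_{i,j,k}|\le R_{i,j}$, hence $|g_i(x)|\le R_i(|x|+1)$. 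Iterating this bound gives $|g_{1:L}(x)|\le M_g$ for a constant $M_g$ depending only on $r$ and $R_1,\ldots,R_L$, and therefore $M:=r+M_g$ works.

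With $M$ in hand, the triangle-inequality splitting yields
\[
\int_{\mathcal{X}}|f_{1:L}-g_{1:L}|^2\,\mathrm{d}\mathbb{P}
\le \int_{S_L}|f_{1:L}-g_{1:L}|^2\,\mathrm{d}\mathbb{P} + M^2\,\mathbb{P}(\mathcal{X}\setminus S_L)
\le L^2\epsilon^2 + \frac{M^2\epsilon^2}{s^2}\sum_{i=1}^{L-1}\frac{i^2}{\prod_{j=i+1}^L R_j^2}.
\]
Taking square roots and absorbing all dependences on $L,s,r,\{R_i\}$ into a single constant $C$ gives the desired bound $C\epsilon$.

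The step I expect to be the least mechanical is the uniform bound on $|g_{1:L}|$, since one has to unwind the layered construction and propagate the Lipschitz/magnitude estimates through $L$ compositions; everything else is bookkeeping on top of Theorem~\ref{thm:multi}. No new ingredient beyond Theorem~\ref{thm:multi} and the explicit form of the weights $c_{i,j,k},\omega_{i,j,k}$ is needed.
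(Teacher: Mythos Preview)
Your proposal is correct and follows essentially the same route as the paper: split the integral over $S_L$ and $\mathcal{X}\setminus S_L$, invoke Theorem~\ref{thm:multi} on $S_L$, and control the complement via a uniform bound on $|f_{1:L}-g_{1:L}|$ together with the measure estimate $\mathbb{P}(\mathcal{X}\setminus S_L)\le C'\epsilon^2$. The only cosmetic difference is in the bound for $|g_{1:L}|$: the paper quotes the single expression $\sqrt{r^2+1}\prod_{i=1}^L R_i$ directly from the weight bounds, whereas you iterate the layerwise estimate $|g_i(x)|\le R_i(|x|+1)$; both yield a constant depending only on $r$ and the $R_i$, which is all that is needed.
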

\begin{proof}
  By Theorem~\ref{thm:multi}, we know that
  \begin{equation}
  \label{eq:sl}
    \left(\int_{S_L}\vert f_{1:L}-g_{1:L}\vert^2~\mathrm{d}\mathbb{P}\right)^{1/2}
    \le L\epsilon\,.
  \end{equation}
  and 
  \begin{equation*}
    \mathbb{P}(S_L) \ge 1 - \frac{\epsilon^2}{s^2}\sum_{i=1}^{L-1}
    \frac{i^2}{\prod_{j=i+1}^LR_j^2}\,.
  \end{equation*}
  By the bounds on the weights of $g_{1:L}$, know that 
  \[|g_{1:L}|\le \sqrt{r^2+1}\prod_{i=1}^L R_i\,.\]
  So
  \begin{align}
    \nonumber
    \left(\int_{\mathcal{X}\setminus S_L}
    \vert f_{1:L}-g_{1:L}\vert^2~\mathrm{d}\mathbb{P}\right)^{1/2}
    & \le \left(\int_{\mathcal{X}\backslash S_L}
    (\vert f_{1:L} \vert^2 +\vert g_{1:L}\vert)^2~\mathrm{d}\mathbb{P}\right)^{1/2}\\
    \nonumber
    & \le \left(r+(r^2+1)^{1/2}\prod_{i=1}^L R_i\right) \frac{\epsilon}{s}
    \left(\sum_{i=1}^{L-1}\frac{i^2}{\prod_{j=i+1}^LR_j^2}\right)^{1/2} \\
    & \le C\epsilon\,.\label{eq:x-sl}
  \end{align}
  By Eq.~\ref{eq:sl} and \ref{eq:x-sl}, we have
\[\left(\int_{\mathcal{X}}|f_{1:L}-g_{1:L}|^2~\mathrm{d}\mathbb{P}\right)^{1/2}
\le C\epsilon\,,\]
where $C$ is a constant depending on $L,s,r,\{R_i\}_{i=1}^L$.
\end{proof}
        \clearpage
        \section{Extra Plots of Experiments}
\label{app:graphs}

In this section, we provide extra figures obtained from our experiments.

Figure~\ref{fig:data_illustration} shows the distribution
of synthetic data we use in the performance comparison between
random Fourier features and random ReLU.
\begin{figure}[h]
        \includegraphics[width=0.48\textwidth]{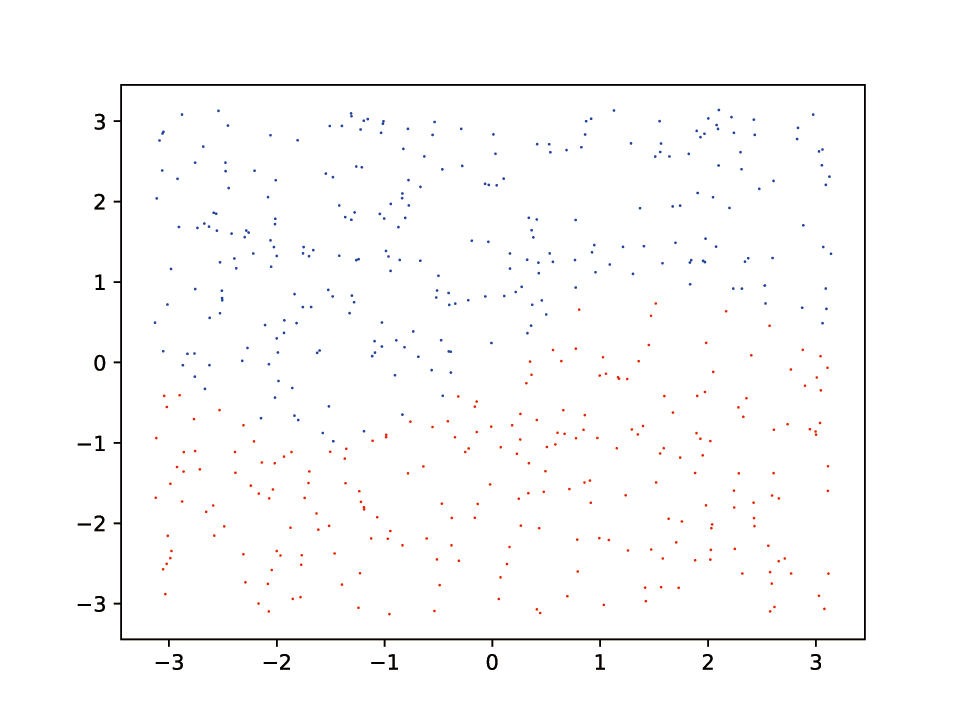}%
        \hfill%
        \includegraphics[width=0.48\textwidth]{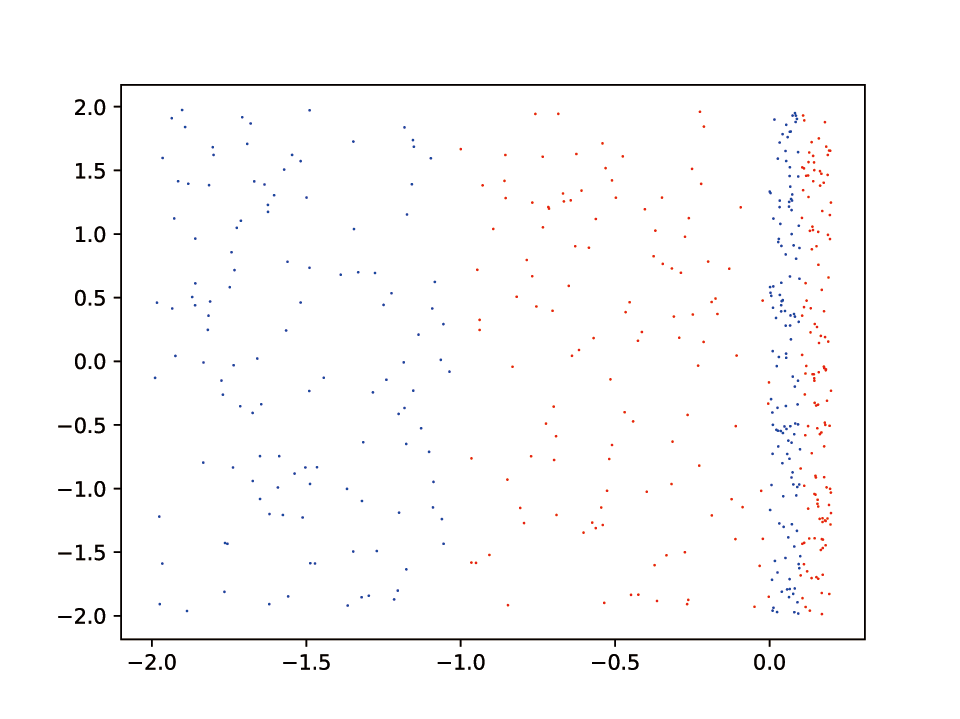}

        \includegraphics[width=0.48\textwidth]{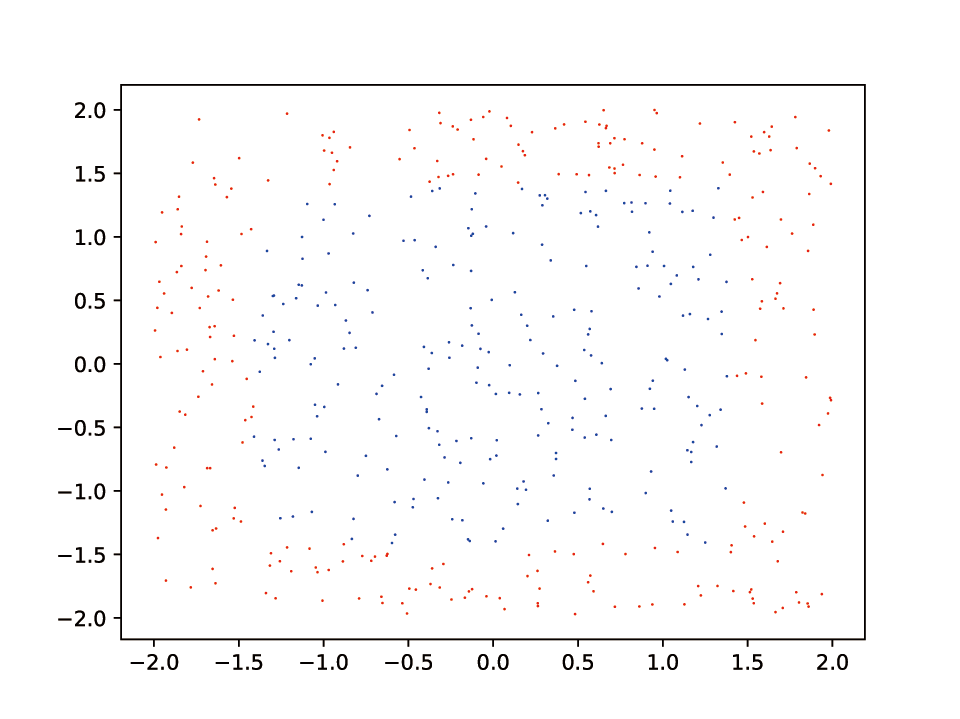}%
        \hfill%
        \includegraphics[width=0.48\textwidth]{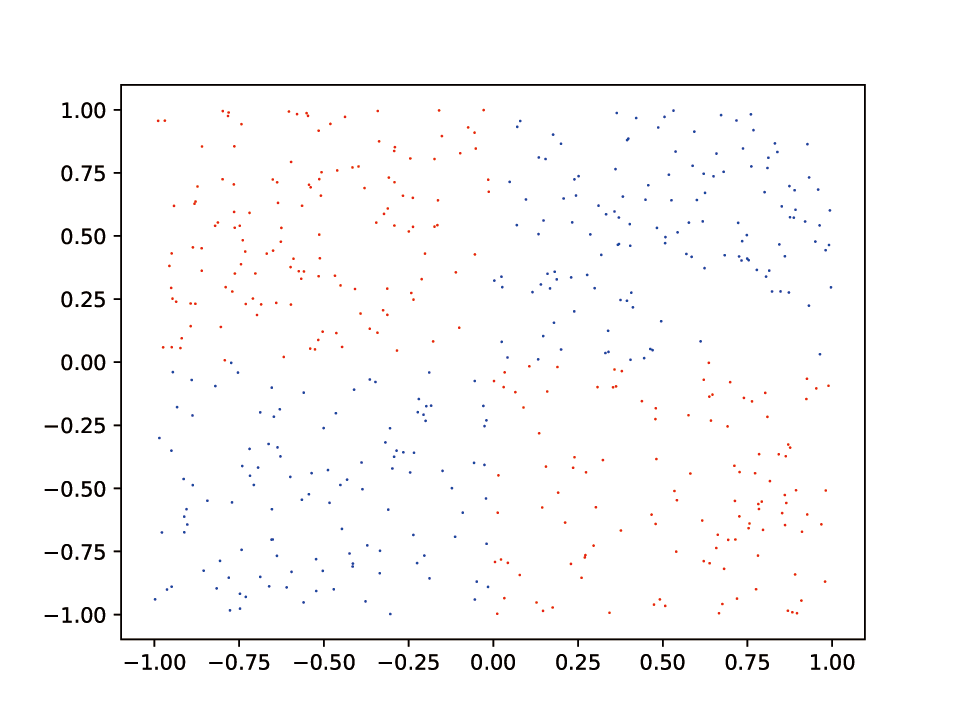}
    \caption{Illustrations on the distribution of synthetic datasets. Top left: sine. Top right: strips.
        Bottom left: square. Bottom right: checkboard.}
  \label{fig:data_illustration}
\end{figure}

Figure~\ref{fig:daniely-data} shows the distribution of the
data and the target function in our experiment on depth separation. It is
based on Daniely's construction.
\begin{figure}[htb]
        \includegraphics[width=0.48\textwidth]{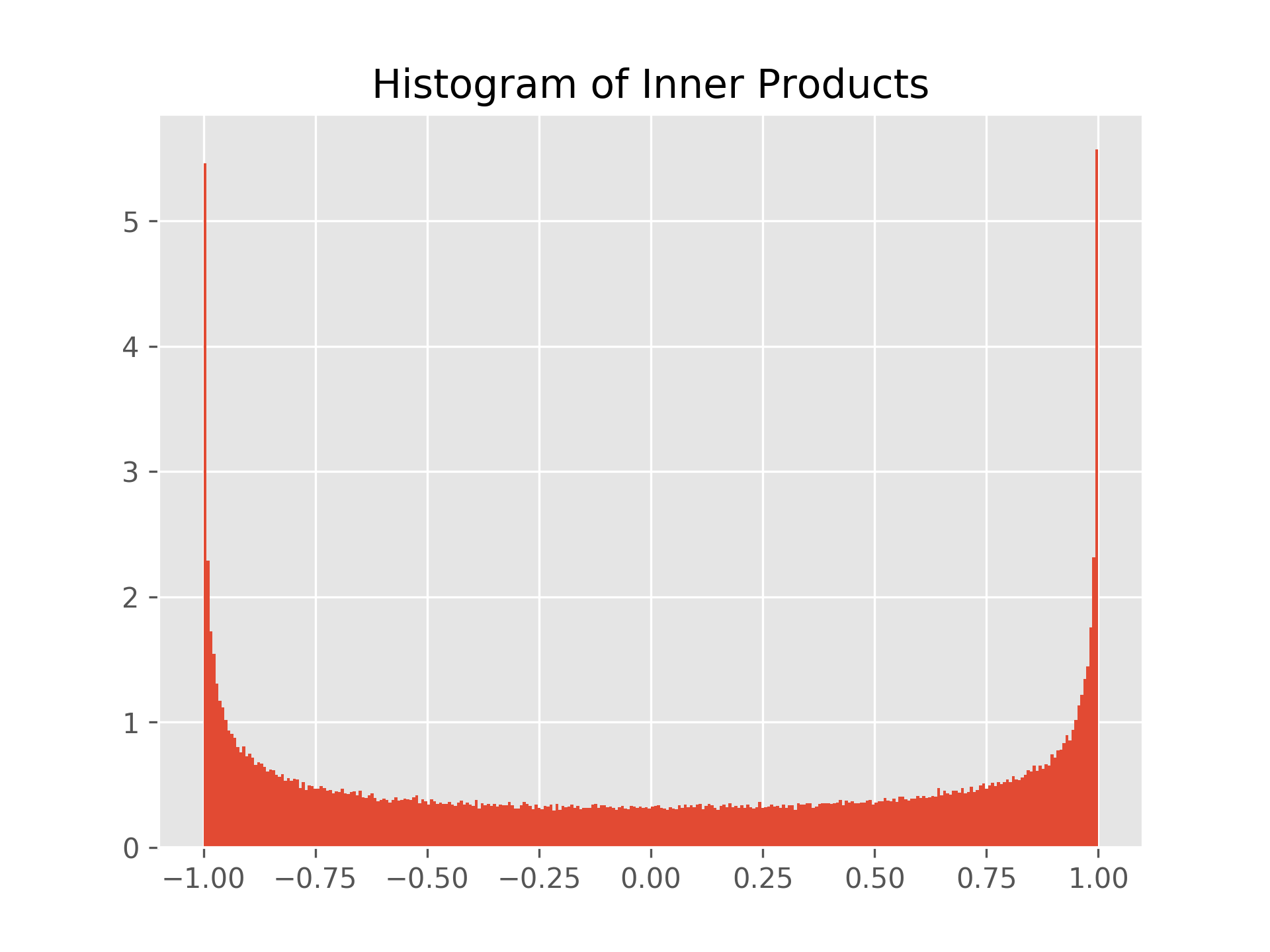}
        \hfill%
        \includegraphics[width=0.48\textwidth]{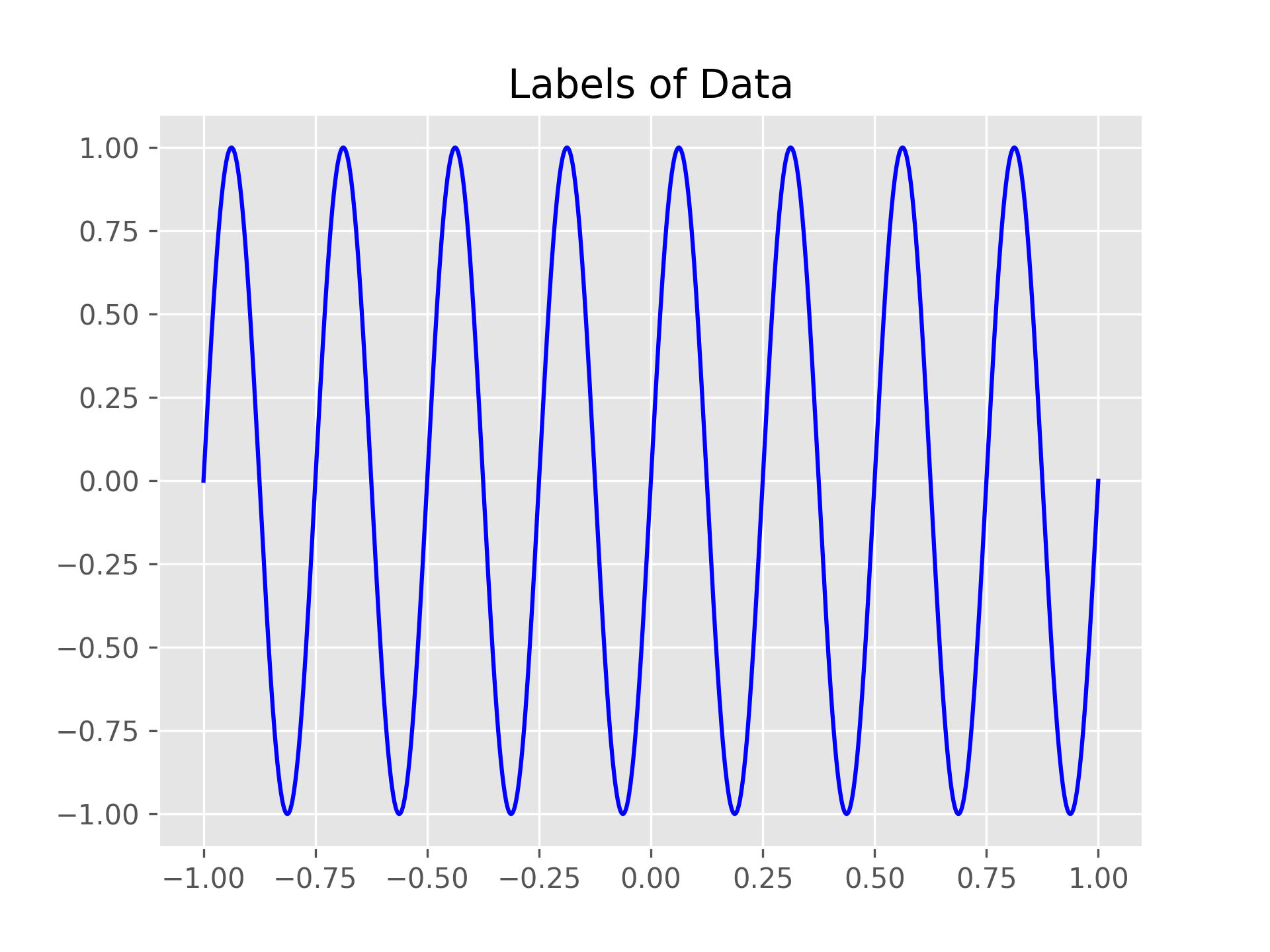}%
    \caption{Illustration of distributions of synthetic data of Daniely's example. Left: density of $x_1x_3+x_2x_4$. Right: Smoothed (blue) and unsmoothed (red) target labels against $x_1x_3+x_2x_4$.}
  \label{fig:daniely-data}
\end{figure}

We also test the depth separation phenomenon using Eldan and Shamir's
construction.
Figure~\ref{fig:eldan-data} shows the distribution of the
data and the target functions constructed by Eldan and Shamir.
The data distribution is rotation invariant over $ \mathbb{R}^4 $ with rapid oscillating radial density.
The classification target function $ g $ is $\pm 1$ at the high density region and $ 0 $ otherwise.
The regression target function $ \tilde{g} $ is obtained by mollifying $ g $ with the smooth bump function.
\begin{figure}[htb]
        \includegraphics[width=0.48\textwidth]{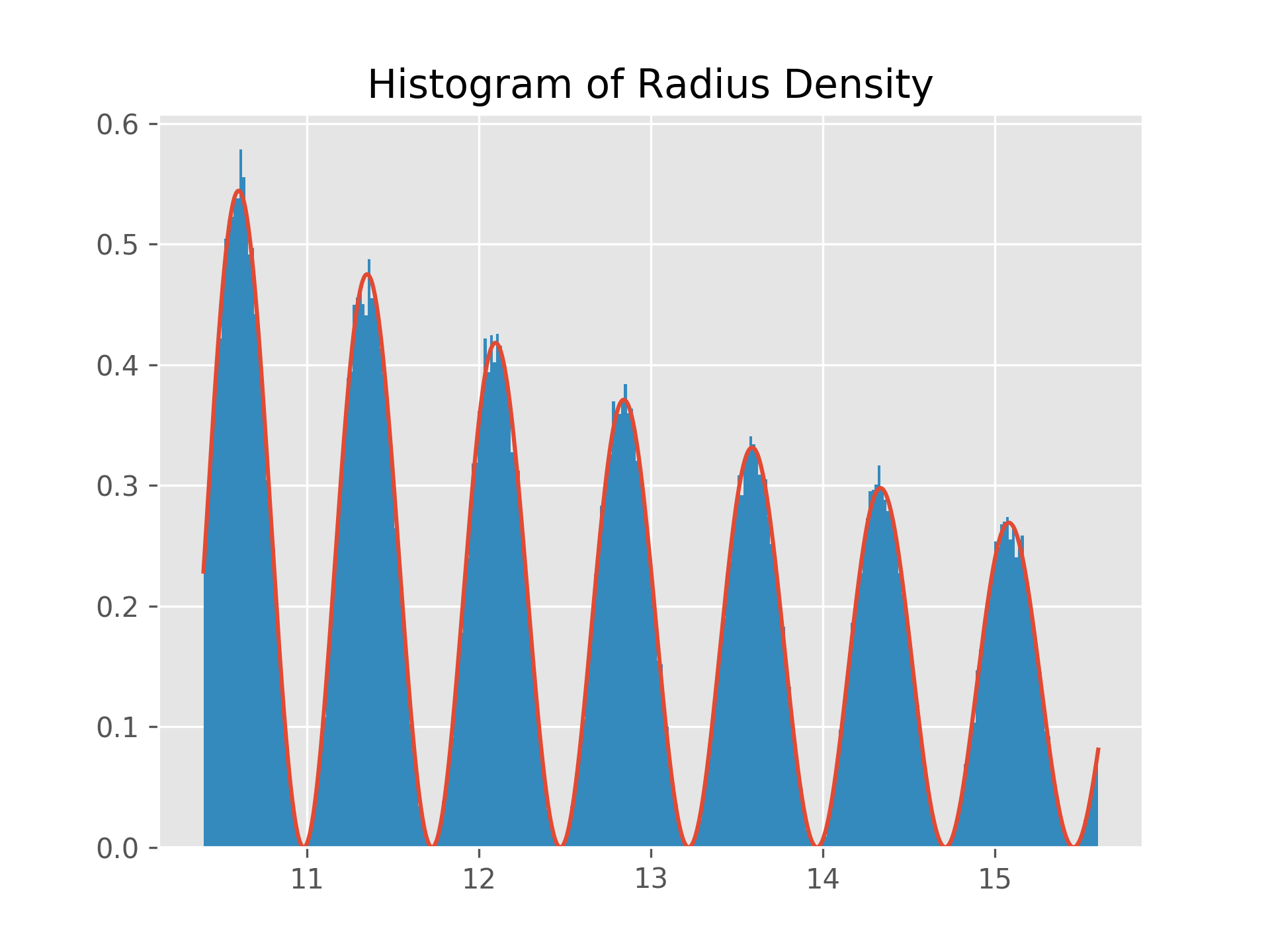}
        \hfill%
        \includegraphics[width=0.48\textwidth]{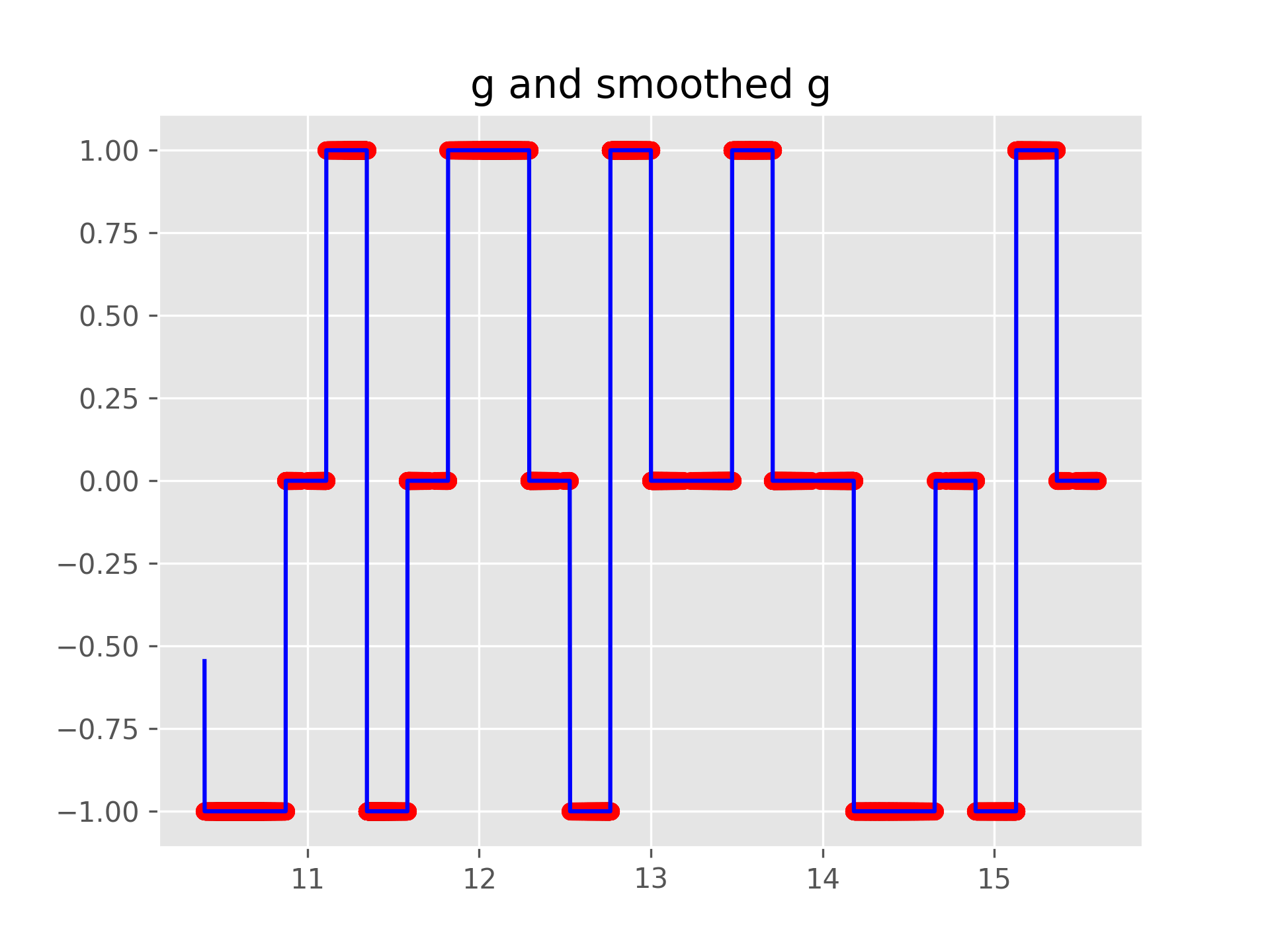}%
    \caption{Illustration of distributions of synthetic data for depth separation experiment. Left: radial density of data. Right: Smoothed (blue) and unsmoothed (red) target labels against the radius.}
  \label{fig:eldan-data}
\end{figure}

From Figure~\ref{fig:eldan-result} and Figure~\ref{fig:eldan-predict}, we can see that
the performance of the three models in both clasification and regression tasks
over Eldan and Shamir's example is consistent with that of Daniely's example.
\begin{figure}[htb]
        \includegraphics[width=0.48\textwidth]{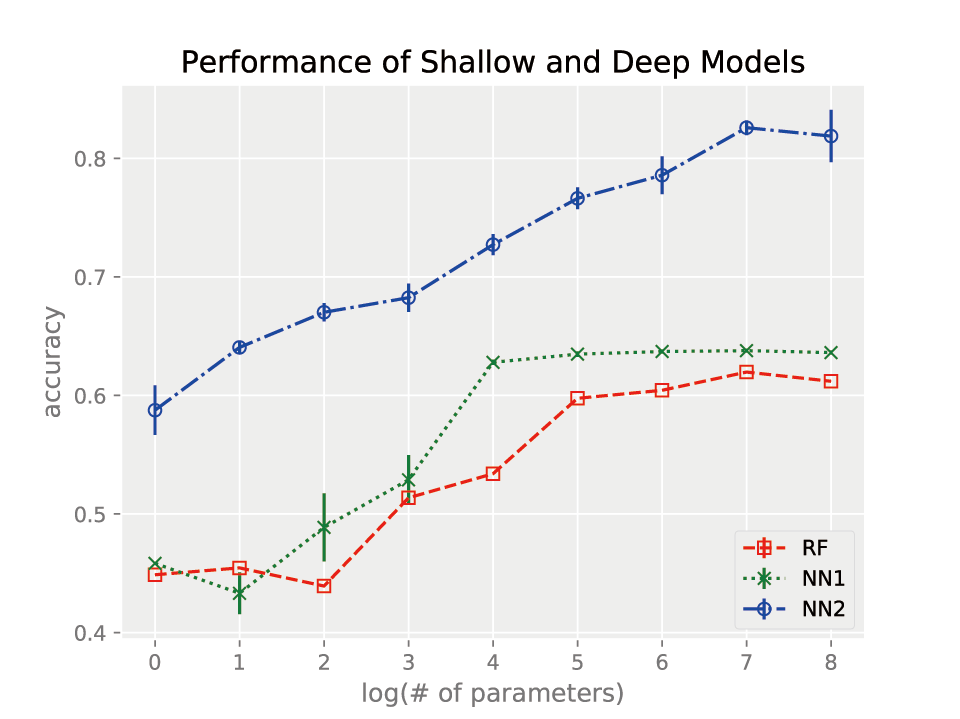}
        \hfill%
        \includegraphics[width=0.48\textwidth]{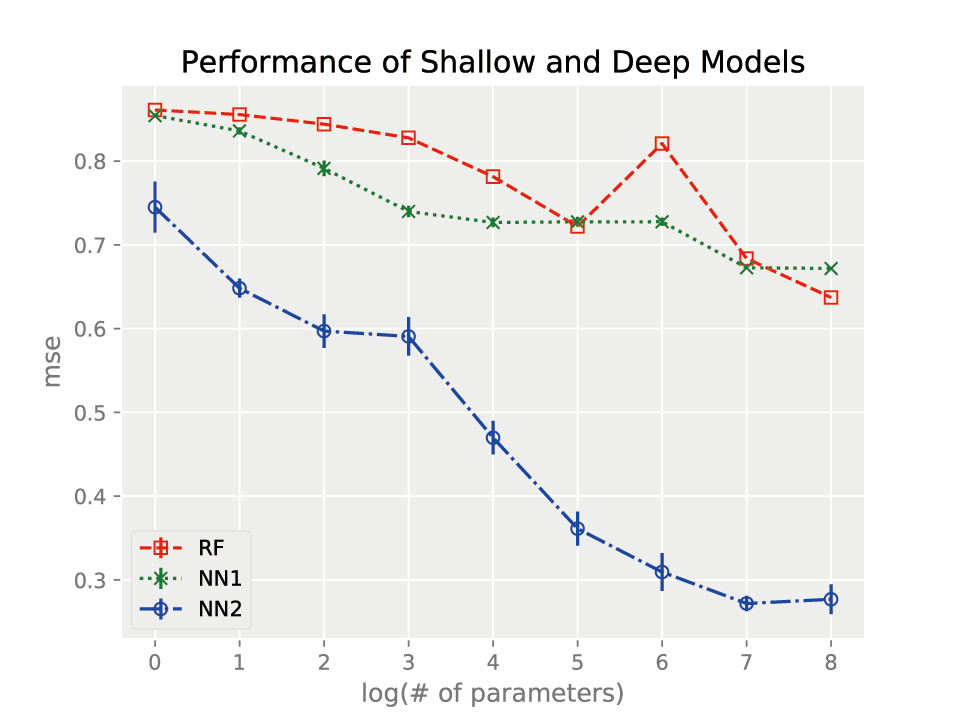}%
    \caption{Left: performance of the deep and shallow models in the classification task. Right: performance of the deep and shallow models in the regression task.}
  \label{fig:eldan-result}
\end{figure}

\begin{figure}[htb]
        \includegraphics[width=0.33\textwidth]{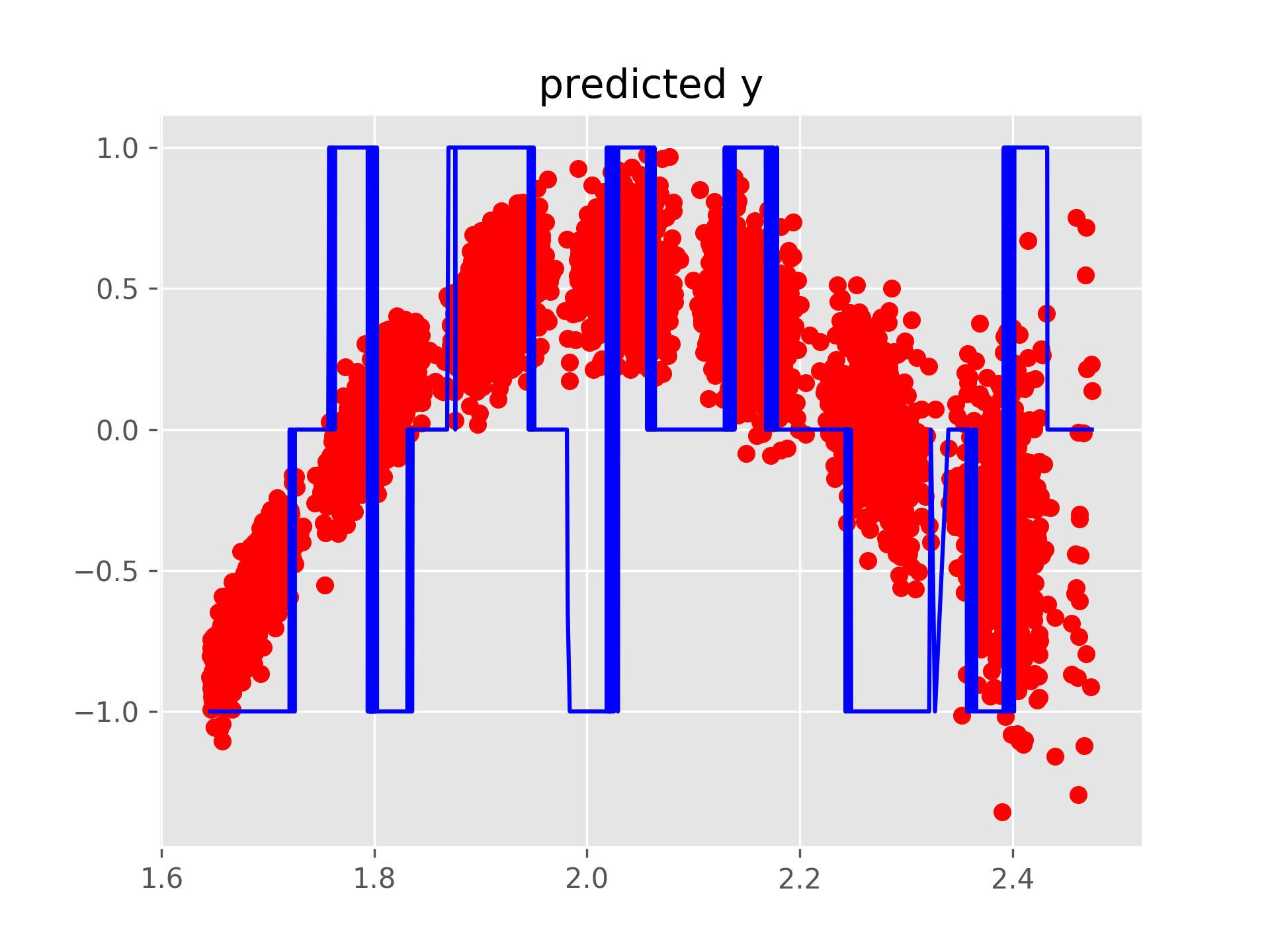}
        \hfill%
        \includegraphics[width=0.33\textwidth]{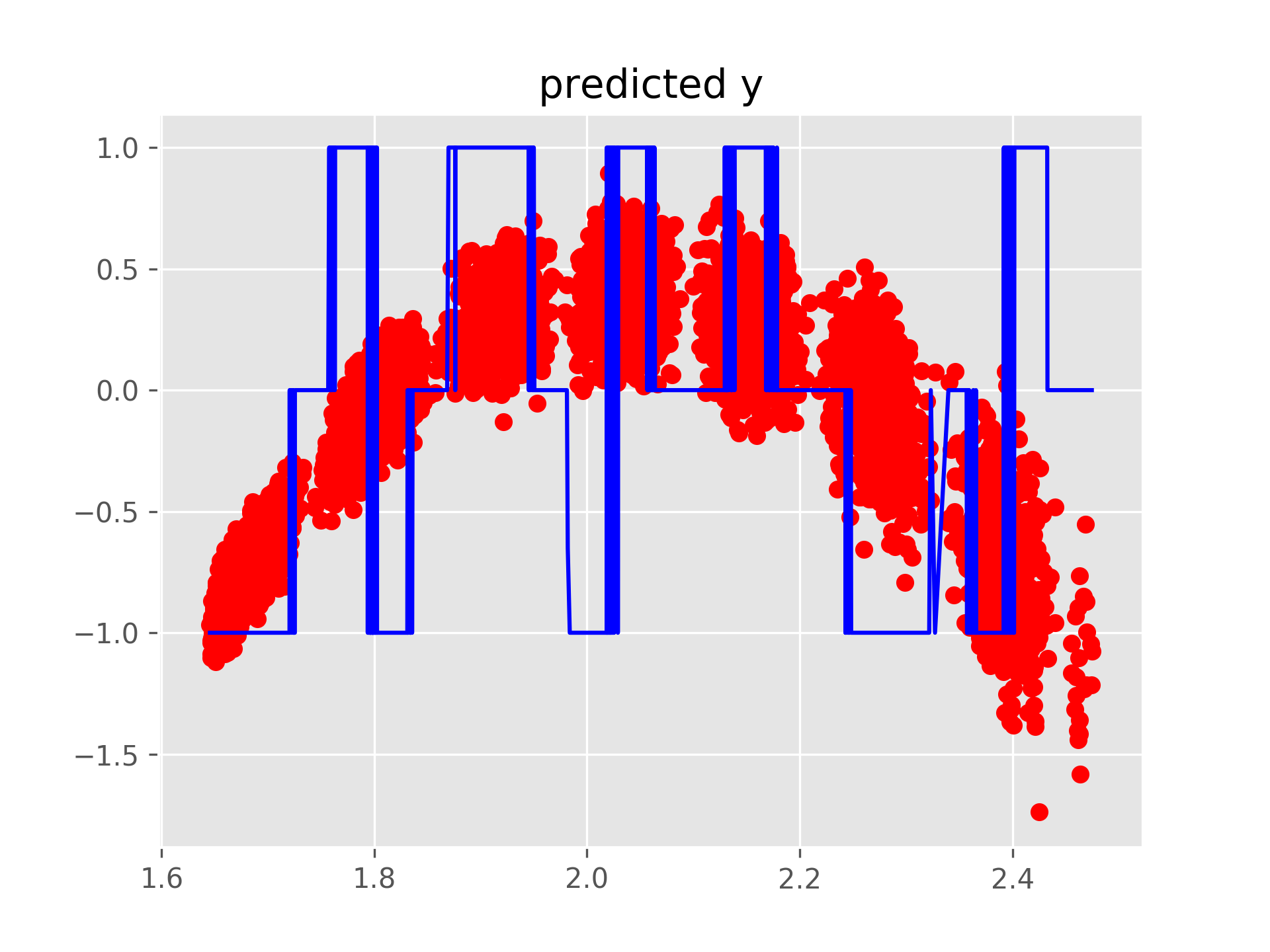}%
        \hfill%
        \includegraphics[width=0.33\textwidth]{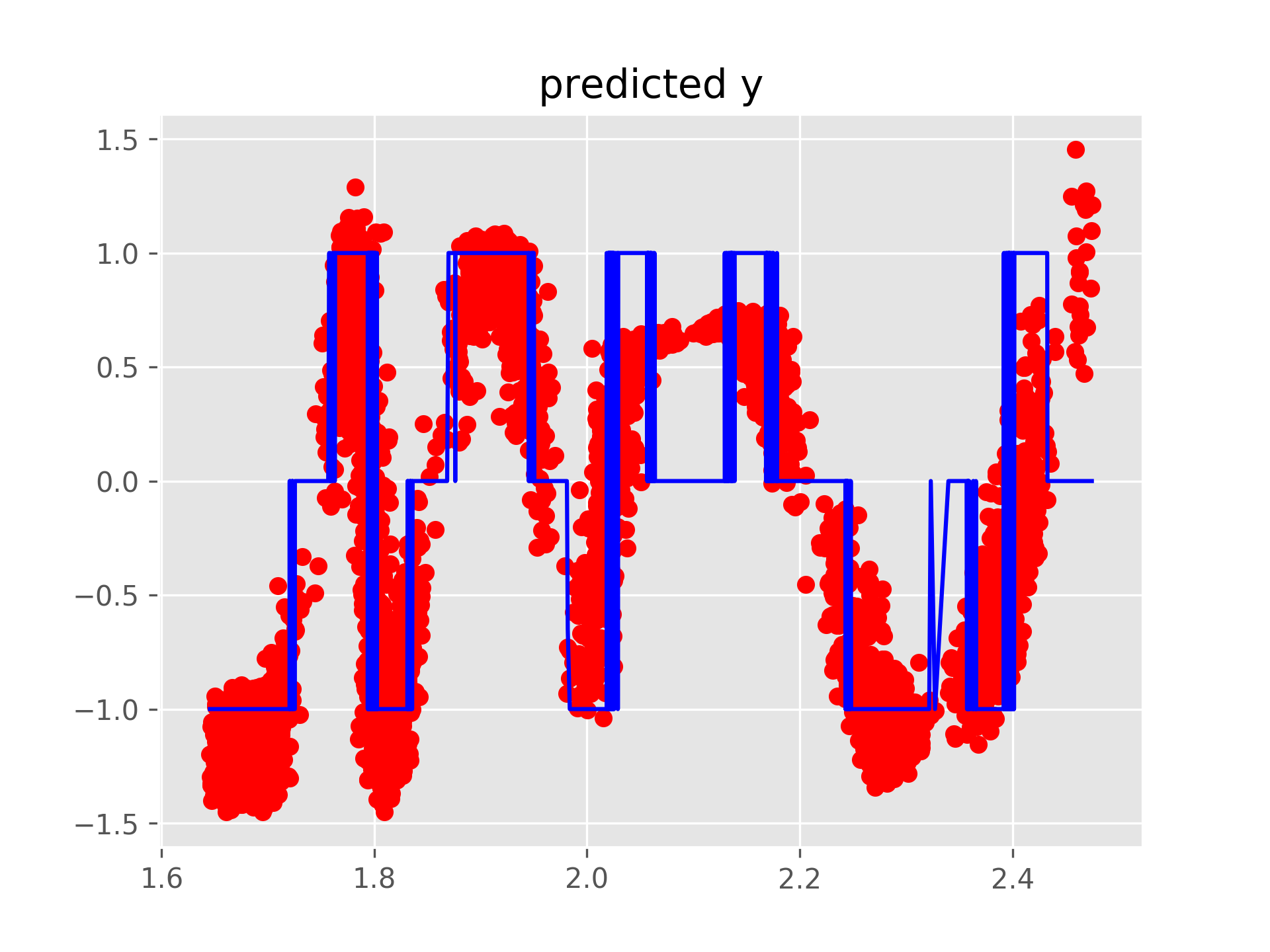}%
    \caption{Left: the predicted labels (red) by the best random ReLU features model compared to the true labels (blue) plotted against the radius $|x|$. Middle: the predicted labels (red) by the best 2-layer neural nets compared to the true labels (blue). Right: the predicted labels (red) by the best 3-layer neural nets compared to the true labels (blue).}
  \label{fig:eldan-predict}
\end{figure}

\end{document}